\definecolor{light-gray}{gray}{0.85}
\newtheorem{theorem}{Theorem}[section]
\newtheorem{corollary}[theorem]{Corollary}
\newtheorem{lemma}[theorem]{Lemma}
\newtheorem{proposition}[theorem]{Proposition}
\newtheorem{remark}[theorem]{Remark}
\newtheorem{assumption}[theorem]{Assumption}
\def\endproof{\hfill $\Box$ \vskip 0.4cm}
\newcommand{\Lamb}{{\boldsymbol \lambda}}
\newcommand{\muu}{{\boldsymbol \mu}}
\newcommand{\argmin}{\mathop{\rm argmin}}
\newcommand{\mc}{\mathcal}
\newcommand{\mbb}{\mathbb}
\newcommand{\mb}{\mathbf}
\newcommand{\ba}{\begin{array}}
\newcommand{\ea}{\end{array}}
\begin{document}

% If your paper is accepted and the title of your paper is very long,
% the style will print as headings an error message. Use the following
% command to supply a shorter title of your paper so that it can be
% used as headings.
%

%\runningtitle{A Dual Approach to CMDPs with Entropy Regularization}

% If your paper is accepted and the number of authors is large, the
% style will print as headings an error message. Use the following
% command to supply a shorter version of the authors names so that
% they can be used as headings (for example, use only the surnames)
%
%\runningauthor{Surname 1, Surname 2, Surname 3, ...., Surname n}

\twocolumn[

\aistatstitle{A Dual Approach to Constrained Markov Decision Processes with Entropy Regularization}

\aistatsauthor{ Donghao Ying \And Yuhao Ding \And  Javad Lavaei }

\aistatsaddress{ University of California, Berkeley\\\texttt{donghaoy@berkely.edu}\\
\And  University of California, Berkeley\\\texttt{yuhao\_ding@berkely.edu} \And University of California, Berkeley\\\texttt{lavaei@berkeley.edu} } ]

\begin{abstract}
We study entropy-regularized constrained Markov decision processes (CMDPs) under the soft-max parameterization, in which an agent aims to maximize the entropy-regularized value function while satisfying constraints on the expected total utility. By leveraging the entropy regularization, our theoretical analysis shows that its Lagrangian dual function is smooth and the Lagrangian duality gap can be decomposed into the primal optimality gap and the constraint violation. Furthermore, we propose an accelerated dual-descent method for entropy-regularized CMDPs. We prove that our method achieves the global convergence rate $\widetilde{\mathcal{O}}(1/T)$ for both the optimality gap and the constraint violation for entropy-regularized CMDPs. A discussion about a linear convergence rate for CMDPs with a single constraint is also provided. 
\end{abstract}

 \section{INTRODUCTION}

% The entropy-regularization is commonly used in standard RL to encourage exploration, but such exploration may induce undesired behaviors which can damage the system or break the learning process in the safety-critical applications. 

In many sequential decision-making problems for safety-critical systems, e.g. autonomous driving \citep{fisac2018general} and cyber-physical systems \citep{zhang2019non}, the optimality of an objective function by itself is not sufficient and a variety of constraints must be satisfied. 
This has naturally led to a generalization of the model of Markov Decision Processes (MDPs) to Constrained MDPs (CMDPs) \citep{altman1999constrained}, in which an agent aims to maximize the value function while satisfying given constraints on the expected total utility.  

Direct policy search methods, including the policy gradient and the natural policy gradient (NPG) methods, have had substantial empirical successes in solving CMDPs \citep{achiam2017constrained, chow2017risk,bhatnagar2012online,borkar2005actor,uchibe2007constrained, achiam2017constrained}. 
Recently, a major progress in understanding the theoretical non-asymptotic global convergence
behavior of policy-based methods for CMDPs has also been achieved \citep{ding2020natural, ding2021provably, xu2021crpo, efroni2020exploration,chen2021primal}. 

For policy-based methods, entropy regularization is a popular technique for encouraging the exploration of an unknown environment and preventing a premature convergence \citep{williams1991function,mnih2016asynchronous,haarnoja2018soft,zang2020teac}. 
From a theoretical optimization perspective, it is shown in \cite{mei2020global} and \cite{cen2021fast} that the entropy regularization can make the policy optimization landscape benign and achieve faster convergence rates even in the exact value evaluation setting. 
Nevertheless, most existing theoretical guarantees for the entropy-regularized policy optimization are restricted to unconstrained MDPs. The scope of the power of entropy regularization for CMDPs remains unknown even for the tabular setting with the exact value evaluation.

Inspired by the recent theoretical advances towards understanding entropy-regularized policy gradient methods \citep{mei2020global, cen2021fast} together with the global convergence of Lagrangian-based methods for CMDPs \citep{ding2020natural, ding2021provably, paternain2019safe, xu2021crpo}, {we investigate the optimization properties induced by the entropy regularization} for CMDPs under the soft-max policy parameterization. We focus on the study of tabular CMDPs with the exact gradient evaluation. 
% While this setting is limiting compared with the general parameterization and the sample-based policy gradient,  
This is the setting
commonly investigated in the literature since its understanding assists in 
% where so far the cleanest results have been achieved and understanding this setting is an important first step towards 
demystifying the effectiveness of entropy-regularization in CMDPs with more complex settings.

\subsection{Contributions}  
{This work is the first one 
that certifies the effectiveness of entropy regularization in CMDPs from an optimization perspective.}
Our contributions are as follows:
% We summarize our contributions below:
\begin{itemize}
    \item We first show that {although the underlying problem is nonconcave, the Lagrangian dual function of CMDPs with the entropy regularization is smooth under the Slater condition and the exploratory initial distribution assumption.} Under the same conditions, an $\mathcal{O}\left({\varepsilon}\right)$ error bound for the dual optimality gap leads to an $\mathcal{O}\left(\sqrt{\varepsilon}\right)$ error bound for the primal optimality gap and the constraint violation.
    \item {To leverage the smoothness of the Lagrangian dual function}, we propose a new accelerated dual-descent method for entropy-regularized CMDPs, which updates the dual variable via projected accelerated gradient descent and uses the natural policy gradient method in the inner loop.
    \item  We prove that the proposed method achieves a global convergence with the rate $\widetilde{\mathcal{O}}(1/T)$ for both the optimality gap and the constraint violation for entropy-regularized CMDPs. 
    % With an appropriate choice of the regularization parameter, this method achieves a global convergence with the rate $\widetilde{\mathcal{O}}\left(1/\sqrt{T}\right)$ for both the optimality gap and the constraint violation for standard CMDPs. Therefore, our result in the special case matches the best rate previously known, i.e. ${\mathcal{O}}(1/\sqrt{T})$ developed in \cite{ding2020natural} and \cite{xu2021crpo}, for policy gradient based methods in tabular CMDPs with the exact value evaluation.
    \item {In the special case where CMDPs only have a single constraint, we show that a bisection-based dual method can achieve a linear convergence rate.}
\end{itemize}

\subsection{Related Work}
\paragraph{CMDPs}
Our work is related to policy-based CMDP algorithms \citep{altman1999constrained, borkar2005actor, bhatnagar2012online, chow2017risk,ding2020natural, ding2021provably, xu2021crpo, chen2021primal, efroni2020exploration}. 
The papers \cite{ding2020natural} and \cite{xu2021crpo} are closely related to our work. In  \cite{ding2020natural}, the authors propose a natural policy gradient primal-dual method for CMDPs and prove that
it achieves global convergence with the rate $\mathcal{O}(1/\sqrt{T})$ for both the optimality gap and the constraint violation under the soft-max policy parameterization. The work \cite{xu2021crpo} achieves a similar global convergence rate as \cite{ding2020natural} using a primal-based approach. However, the entropy regularization, which is an effective technique for unconstrained MDPs, is not used in these algorithms.

\paragraph{Entropy-regularized RL}
Maximum entropy reinforcement learning optimizes policies to jointly maximize the expected return and the expected entropy of the policy. This framework has been
used in many contexts. It has been shown that the maximum entropy formulation provides a substantial
improvement in exploration and robustness \citep{ziebart2010modeling}. 
It is robust in the face of model and estimation errors \citep{haarnoja2017reinforcement} in both on-policy and off-policy settings \citep{haarnoja2018soft}. More recently, the theoretical results in \cite{mei2020global} and \cite{cen2021fast} have shown that the entropy regularization can help policy-based methods improve the convergence rate and the sample complexity compared with standard MDPs without the entropy regularization.
% the global convergence and linear convergence rate of the policy gradient methods and the natural policy gradient methods for the maximum entropy RL with the soft-max parameterization are shown in \cite{mei2020global} and \cite{cen2021fast} respectively. Compared with the policy gradient methods, it is also shown in \cite{cen2021fast} that the natural policy gradient methods could achieve a quadratic convergence rate in the small-$\epsilon$ region and its convergence rate have the more clear dependence on the problem-dependent parameters, such as $|\mathcal{S}|, |\mathcal{A}|,\gamma, \lambda$. 
However, despite the tremendous successes of the entropy regularization in unconstrained MDPs, the impact of the entropy regularization for CMDPs remains unknown.

\subsection{Notations}
{
Let $\Delta(\mathcal{S})$ denote the probability simplex over the set $\mathcal{S}$, and let $\left|\mathcal{S}\right|$ denote its cardinality.
For a set $T\subset \mbb{R}^p$, let $\operatorname{cl}(T)$ denote the closure of $T$.
When the variable $s$ follows the distribution $\rho$, we write it as $s\sim \rho$.
Let $\mbb{E}[\cdot]$ and $\mbb{E}[\cdot\mid \cdot]$, respectively, denote the expectation and conditional expectation of a random variable.
Let $\mbb{R}$ denote the set of real numbers.
For a number $a\in \mathbb{R}$, let $\operatorname{sign}(a)$ denote the sign of $a$, i.e. $\operatorname{sign}(a) = +1$ if $a \geq0$ and $\operatorname{sign}(a) = -1$ if $a <0$.
Let $[n]$ denote the set $\{1,2,\dots, n\}$.
% When applying a scalar function to a vector, e.g. the logarithm of a vector, the operation is understood as entry-wise. 
% Similarly, inequalities are understood as entry-wise when applied to vectors.
For a vector $x$, we use $x^\top$ to denote the transpose of $x$,
and use $x_i$ or $(x)_i$ to denote the $i$-th entry of $x$.
When applying a scalar function to $x$, e.g. $\log x$, the operation is understood as entry-wise. 
For vectors $x$ and $y$, we use $x \geq y$ to denote an entry-wise inequality.
We use the convention that $\|x\|_1 = \sum_i |x_i|$, $\|x\|_2 = \sqrt{\sum_i x_i^2}$, and $\|x\|_\infty = \max_i |x_i|$.
For a matrix $A$, we use $A_{ij}$ to denote its $(i,j)$-th entry, and let $\|A\|_F = \sqrt{\sum_{i,j}A_{ij}^2}$.
Let $I_n$ denote the $n\times n$ identity matrix.
For square matrices $A$ and $B$, we use $A\succeq B$ to denote that $A-B$ is positive semi-definite.
For a function $f(x)$, let $\nabla_x f(x)$ (resp. $\nabla^2_{xx}f(x)$) denote its gradient (resp. Hessian) with respect to $x$, and we may omit $x$ in the subscript when it is clear from the context.
Let $\operatorname{arg}\min f(x)$ (resp. $\operatorname{arg}\max f(x)$) denote any arbitrary global minimum (resp. global maximum) of $f(x)$.
We use boldface symbols for constraint-related vectors, e.g. $\Lamb$.}

 \section{PROBLEM FORMULATION}\label{sec:formulation}
\paragraph{Markov Decision Processes}
An infinite-horizon Markov Decision Process  MDP$(\mc{S},\mc{A},P,r,\gamma)$ with a finite state-action space is specified by: a finite state space $\mc{S}$; a finite action space $\mc{A}$; a transition dynamics $P: \mc{S}\times \mc{A} \rightarrow \Delta (\mc{S})$, where $P(s^\prime \vert s,a)$ is the probability of transition from state $s$ to state $s^\prime$ when action $a$ is taken; a reward function $r: \mc{S}\times \mc{A} \rightarrow [0,1]$, where $r(s,a)$ is the instantaneous reward when taking action $a$ in state $s$; a discount factor $\gamma\in[0,1)$. 
A policy $\pi: \mc{S}\rightarrow \Delta (\mc{A})$ represents that the decision rule the agent uses, i.e. the agent takes action $a$ with probability $\pi(a\vert s)$ in state $s$.
We can also interpret a policy $\pi$ as a vector in $\Delta(\mathcal{A})^{|\mathcal{S}|}\subset \mbb{R}^{|\mc{S}||\mc{A}|}$. 

Given a policy $\pi$, the value function $V^\pi : \mc{S} \rightarrow \mathbb{R}$ is defined to characterize the discounted sum of the rewards earned under $\pi$, i.e.
\begin{equation}
V^{\pi}(s):=\mbb{E}\left[\sum_{t=0}^{\infty} \gamma^{t} r\left(s_{t}, a_{t}\right) \bigg| \pi, s_{0}=s\right],\ \forall s\in\mc{S}
\end{equation}
where the expectation is taken over all possible trajectories, in which $a_t \sim \pi(\cdot \vert s_t)$ and $s_{t+1} \sim P(\cdot \vert s_t, a_t)$.
When the initial state is sampled from some distribution $\rho$, we slightly abuse the notation and define the value function as
\begin{equation}
V^{\pi}(\rho):=\mbb{E}_{s\sim \rho}\left[V^{\pi}(s)\right]
\end{equation}
One classical property of the value function is that it is sufficiently smooth with respect to the policy if we view $V^\pi(\rho)$ as a function from the policy space $\Delta(\mathcal{A})^{|\mathcal{S}|}$ to $\mbb{R}$.
Especially, $V^\pi(\rho)$ has the following Lipschitz property.
\begin{lemma}\label{lemma:lipschitz value function}
For arbitrary policies $\pi_1$ and $\pi_2$, it holds
\begin{equation}\label{eq:lipschitz value function}
\left|V^{\pi_1}(\rho) -V^{\pi_2}(\rho) \right| \leq \ell_c \left\|\pi_1 - \pi_2\right\|_2,
\end{equation}
where $\ell_c = \frac{\sqrt{|\mathcal{A}|}}{(1-\gamma)^2}$.
\end{lemma}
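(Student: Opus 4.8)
The plan is to bound the Euclidean norm of the gradient of $V^\pi(\rho)$ with respect to the policy, uniformly over the policy set, and then pass to the Lipschitz bound by the mean value theorem. First I would record the standard representation $V^\pi(\rho)=\frac{1}{1-\gamma}\sum_{s,a}d^\pi_\rho(s)\,\pi(a|s)\,r(s,a)$, where $d^\pi_\rho(s)=(1-\gamma)\sum_{t=0}^\infty\gamma^t\Pr(s_t=s\mid\pi,s_0\sim\rho)$ is the discounted state-visitation distribution, so that $d^\pi_\rho(s)\ge 0$ and $\sum_{s}d^\pi_\rho(s)=1$. Viewing $V^\pi(\rho)$ as a function on $\Delta(\mathcal{A})^{|\mathcal{S}|}\subset\mbb{R}^{|\mathcal{S}||\mathcal{A}|}$ (which is a convex set), the policy-gradient theorem for the direct parameterization gives, for each $(s,a)$,
\begin{equation*}
\frac{\partial V^\pi(\rho)}{\partial \pi(a|s)}=\frac{1}{1-\gamma}\,d^\pi_\rho(s)\,Q^\pi(s,a).
\end{equation*}

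Next I would bound $\left\|\nabla_\pi V^\pi(\rho)\right\|_2$. Since $r$ takes values in $[0,1]$ we have $0\le Q^\pi(s,a)\le \frac{1}{1-\gamma}$ for all $(s,a)$, hence
\begin{equation*}
\left\|\nabla_\pi V^\pi(\rho)\right\|_2^2=\frac{1}{(1-\gamma)^2}\sum_{s,a}d^\pi_\rho(s)^2\,Q^\pi(s,a)^2\le \frac{1}{(1-\gamma)^4}\sum_{s}|\mathcal{A}|\,d^\pi_\rho(s)^2.
\end{equation*}
Because $\sum_s d^\pi_\rho(s)^2\le\big(\sum_s d^\pi_\rho(s)\big)^2=1$, this yields $\left\|\nabla_\pi V^\pi(\rho)\right\|_2\le \frac{\sqrt{|\mathcal{A}|}}{(1-\gamma)^2}=\ell_c$, uniformly in $\pi$.

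Finally, since $\Delta(\mathcal{A})^{|\mathcal{S}|}$ is convex, the segment $\pi_t=(1-t)\pi_2+t\pi_1$, $t\in[0,1]$, stays in the policy set, and $V^{\pi_1}(\rho)-V^{\pi_2}(\rho)=\int_0^1\inner{\nabla_\pi V^{\pi_t}(\rho)}{\pi_1-\pi_2}\,dt$; Cauchy–Schwarz together with the uniform gradient bound gives $\left|V^{\pi_1}(\rho)-V^{\pi_2}(\rho)\right|\le \ell_c\left\|\pi_1-\pi_2\right\|_2$. The main obstacle is really just the careful bookkeeping: justifying the gradient formula in this parameterization and tracking where the $\sqrt{|\mathcal{A}|}$ factor enters (it comes from summing the squared $Q$-values over actions while the visitation weights are normalized over states). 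As an alternative that avoids differentiation, one could instead invoke the performance-difference lemma, $V^{\pi_1}(\rho)-V^{\pi_2}(\rho)=\frac{1}{1-\gamma}\sum_s d^{\pi_1}_\rho(s)\sum_a\big(\pi_1(a|s)-\pi_2(a|s)\big)Q^{\pi_2}(s,a)$, and bound it by $\frac{1}{1-\gamma}\sum_s d^{\pi_1}_\rho(s)\,\|Q^{\pi_2}(s,\cdot)\|_2\,\|\pi_1(\cdot|s)-\pi_2(\cdot|s)\|_2$, then apply $\|Q^{\pi_2}(s,\cdot)\|_2\le \frac{\sqrt{|\mathcal{A}|}}{1-\gamma}$ and a final Cauchy–Schwarz over $s$ using $\sum_s d^{\pi_1}_\rho(s)^2\le 1$; this reproduces the same constant $\ell_c$.
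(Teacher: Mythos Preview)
Your proposal is correct and follows essentially the same route as the paper: both use the direct-parameterization policy-gradient formula $\partial V^\pi(\rho)/\partial\pi(a|s)=\frac{1}{1-\gamma}d^\pi_\rho(s)Q^\pi(s,a)$, bound $\|\nabla_\pi V^\pi(\rho)\|_2$ by pulling out $\max_{s,a}Q^\pi(s,a)\le\frac{1}{1-\gamma}$ and using $\sum_s d^\pi_\rho(s)^2\le 1$, and then pass to the Lipschitz inequality via the uniform gradient bound. Your alternative via the performance-difference lemma is a nice addition but not used in the paper's own proof.
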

Lemma \ref{lemma:lipschitz value function} follows from the bounded gradient of $V^\pi(\rho)$ with respect to $\pi$.
We refer the reader to the supplement in Appendix \ref{app:1} for more details.

The action-value function (or Q-function) $Q^\pi:\mc{S}\times \mc{A}\rightarrow \mbb{R}$ under policy $\pi$ is defined as
\begin{equation}
Q^{\pi}(s, a)=\mathbb{E}\left[\sum_{t=0}^{\infty} \gamma^{t} r\left(s_{t}, a_{t}\right) \bigg| \pi, s_{0}=s, a_{0}=a\right]
\end{equation}
which can be interpreted as the expected total reward with an initial state $s_0 = s$ and an initial action $a_0 = a$.
Since $r(s,a)\in [0,1]$ by assumption, we have that both $Q^{\pi}(s, a)$ and $V^{\pi}(\rho)$ are bounded between $\left[0,  {1}/{(1-\gamma)}\right]$ for any $(s,a)\in \mc{S}\times \mc{A}$ and initial distribution $\rho$.

For theoretical analysis, it is useful to define the so-called discounted state visitation distribution $d_{s_0}^\pi$ of a policy $\pi$:
\begin{equation}\label{eq:visitation measure}
d_{s_{0}}^{\pi}(s):=(1-\gamma) \sum_{t=0}^{\infty} \gamma^{t} {P}\left(s_{t}=s \mid \pi, s_{0}\right),\ \forall s\in\mc{S}
\end{equation}
and we write $d_\rho^\pi(s) := \mbb{E}_{s_0\sim \rho} \left[d^\pi_{s_0}(s)\right]$ as the visitation distribution when the initial state follows $\rho$.

%Given an MDP, the objective of the agent is find the policy $\pi$ that maximizes the value function, i.e. solving the optimization problem
%\begin{equation}
%\max_\pi V^\pi_r(\rho)
%\end{equation}

\paragraph{Soft-max Parameterization}
Parameterization is commonly  deployed to model unknown policies to help with the optimization process. 
% so that to make them more compatible with optimization.
One natural choice is the soft-max parameterization:
\begin{equation}\label{eq:softmax}
\pi_\theta (a\vert s):= \frac{\exp (\theta(s, a))}{\sum_{a^{\prime} \in \mathcal{A}} \exp \left(\theta\left(s, a^{\prime}\right)\right)},\ \forall (s,a)\in \mc{S}\times\mc{A}
\end{equation}
where $\theta \in \mbb{R}^{|\mc{S}||\mc{A}|}$ is an unconstrained vector.
We denote the class of all soft-max parameterized policies by $\Pi$.
This policy class is complete in the sense that its closure cl$(\Pi)$ contains all stationary policies.
In what follows, we will discard the subscript $\theta$ and just write $\pi \in \Pi$, whenever it is clear from the context.

\paragraph{Entropy Regularization}
%(soft value function, bias)
To encourage exploration and accelerate convergence to the optimal policy, entropy regularization is widely used in solving MDPs.
In the regularized setting, the agent seeks to optimize the entropy-regularized value function
\begin{equation}
V_{\tau}^{\pi}(\rho):=V^{\pi}(\rho)+\tau \cdot \mathcal{H}(\rho, \pi),
\end{equation}
where $\tau \geq 0$ specifies the weight of regularization and $\mc{H}(\rho, \pi)$ is the discounted entropy defined by
\begin{equation}
\mathcal{H}(\rho, \pi):={\mathbb{E}}\left[\sum_{t=0}^{\infty}-\gamma^{t} \log \pi\left(a_{t} \vert s_{t}\right) \bigg|\pi, s_0 \sim \rho \right].
\end{equation}
% It follows directly from the definition that the regularized value function $V_{\tau}^{\pi}(\rho)$ can be viewed as a new value function with the reward function $ r(s,a) - \tau \log \pi(a\vert s)$.
We can also define the Q-function under regularization, which is referred to as the soft Q-function
\begin{equation}\label{def:soft Q function}
Q_{\tau}^{\pi}(s, a)=r(s, a)+\gamma \mathbb{E}_{s^{\prime} \sim P(\cdot \mid s, a)}\left[V_{\tau}^{\pi}\left(s^{\prime}\right)\right].
\end{equation}
% Denote $\pi^\star$ (resp. $\pi_\tau^\star$) as the optimal policy with respect to $V^\pi(\rho)$ (resp. $V^\pi_{\tau}(\rho)$).
% %and $V_r^\star$ (resp. $V_{r,\tau}^\star$) as the optimal value.
% The following inequalities hold and draw a connection between $\pi^\star$ and $\pi_\tau^\star$:
% \begin{equation}\label{eq:original and regularized bound}
% V^{\pi_{\tau}^{\star}}(\rho) \leq V^{\pi_{\star}}(\rho) \leq V^{\pi_{\tau}^{\star}}(\rho)+{\frac{\tau}{1-\gamma} \log |\mathcal{A}|}.
% \end{equation}
% The above inequalities convey the information that the policy $\pi_\tau^\star$ is nearly optimal in terms of the value function $V^\pi(\rho)$, when $\tau$ is reasonably small.
% More details are postponed to be discussed in Appendix \ref{subsec:entropy regularization_app}.

\paragraph{Constrained MDP With Entropy Regularization}
In a Constrained Markov Decision Process CMDP$(\mc{S},\mc{A},P,r,\mb{g},\mb{b},\gamma)$, besides the reward function $r$, we have a utility function $\mb{g} = (g_1,\dots,g_n): \mc{S}\times \mc{A} \rightarrow [0,1]^n $ and a threshold $\mb{b}\in \left[0,{1}/{(1-\gamma)}\right]^n$.
Under entropy regularization, the agent seeks to maximize the regularized value function $V_{\tau}^\pi(\rho)$ while satisfying the constraint $U_\mb{g}^\pi (\rho) \geq \mb{b}$, where the discounted utility $U_\mb{g}^\pi (\rho) := \left(U^\pi_{g_1}(\rho), \dots, U^\pi_{g_n}(\rho)\right) \in \mbb{R}^n$ is defined by
\begin{equation}
% \left(U_\mb{g}^\pi (\rho)\right)_i = 
U^\pi_{g_i}(\rho):= \mbb{E}\left[\sum_{t=0}^{\infty} \gamma^t g_i(s_t,a_t)\bigg|\pi, s_0 \sim \rho \right].
\end{equation}
Equivalently, the agent solves the optimization problem
\begin{equation}\label{prob:maxentrl}
\max_{\pi\in \Pi}  \ V_{\tau}^\pi(\rho) \quad \text{s.t.} \ U_\mb{g}^\pi (\rho) \geq \mb{b},
\end{equation}
where $V_{\tau}^\pi(\rho):= V^\pi(\rho) + \tau \mc{H}(\rho,\pi)$. 
Consider the associated Lagrangian function $L(\pi,\Lamb)$ and the dual function $D(\Lamb)$ defined as:
\begin{subequations}
\begin{align}
L(\pi,\Lamb) &:= V_{\tau}^\pi(\rho) + \Lamb^\top\left(U_\mb{g}^\pi (\rho)-\mb{b}\right), \label{def:lagrangian}\\
D(\Lamb)&:= \max_{\pi\in \Pi} L(\pi,\Lamb), \label{def:dual function}
\end{align}
\end{subequations}
% The dual function $D(\Lamb)$ and corresponding dual problem is defined as
% \begin{equation}\label{def:dual function}
% D(\Lamb):= \max_{\pi\in \Pi} L(\pi,\Lamb)
% \end{equation}
% \begin{equation}\label{prob:dual problem}
% \min_\Lamb  \ {D}(\Lamb)\quad  s.t. \ \Lamb \geq 0
% \end{equation}
where $\Lamb \in \mathbb{R}^n$ is the dual variable.
For brevity, we omit the dependency of $L$ and $D$ on $\rho$ and  $\tau$ in the notations.
It can be seen from (\ref{def:lagrangian}) that $L(\pi,\Lamb)$ can be viewed as an entropy-regularized value function with the reward $r_{\Lamb} (s,a) :=r(s,a) + \Lamb^\top\mb{g}(s,a)$ subtracted by the scalar $\Lamb^\top\mb{b}$.
Let $V_{\Lamb}^\pi (\rho)$ (resp. $Q_{\Lamb} ^\pi (s,a)$) denote the value function (resp. Q-function) with the reward function $r_\Lamb$, i.e. $V_{\Lamb}^\pi (\rho):= V_{\tau}^\pi(\rho) + \Lamb^\top U_\mb{g}^\pi (\rho)$.

%The associated dual function is defined as $D(\Lamb):= \max_\pi L(\pi,\Lamb)$.

It is worth mentioning that (\ref{prob:maxentrl}) is non-convex due to its non-concave objective function and non-convex constraints, thus making the problem challenging to solve.
Henceforth, we slightly abuse the notation and still denote $\pi_\tau^\star$ as an arbitrary optimal policy to the constrained problem (\ref{prob:maxentrl}). 
Let $\Lamb^\star$ denote an optimal multiplier, i.e. 
\begin{equation}
    \Lamb^\star:= \operatorname{arg}\min_{\Lamb\geq 0} D(\Lamb),
\end{equation} 
% \textcolor{red}{= or in?}
and $\pi_\Lamb$ be the Lagrangian maximizer associated with the multiplier $\Lamb$, i.e. 
\begin{equation} \label{eq:pi_lambda evaluation}
\pi_\Lamb:= \operatorname{arg}\max_{\pi\in \Pi} L(\pi,\Lamb).
\end{equation} 
We use the shorthand notations $V_\tau^\star := V_{\tau}^{\pi^\star_\tau}(\rho)$ and $D_\tau^\star := \min_{\Lamb\geq 0} D(\Lamb)$.
As before, we hide the dependency of $\Lamb^\star$, $\pi_\Lamb$ on $\rho$ and $\tau$, as well as the dependency of $\pi_\tau^\star$, $V_\tau^\star$, $D_\tau^\star$ on $\rho$.

% as the optimal value to (\ref{prob:maxentrl}), and $D^\star$ to denote the optimal value to the dual problem: $\min_{\Lamb\geq 0} D(\Lamb)$

 \section{PROPERTIES OF CMDP WITH ENTROPY REGULARIZATION}\label{sec:property}
Despite its non-convex nature, entropy-regularized CMDPs enjoy desirable properties, which we will discuss below.
We refer the reader to the supplement in Appendix \ref{app:1} for all the proofs in this section.

Assume that the Slater condition holds, i.e. there exists a strictly feasible policy.
\begin{assumption}[Slater Condition]\label{assump:slater}
There exist a policy $\overline \pi \in \Pi$ and $\boldsymbol \xi >0$ such that $U^{\overline \pi}_\mb{g} (\rho) -\mb{b} \geq \boldsymbol \xi$.
\end{assumption}
The Slater condition is standard in constrained optimization.
It holds when the feasible region contains an interior point.
In practical, such a point is often easy to find given prior knowledge of the problem.
% One direct consequence of the Slater condition is the strong duality \citep{altman1999constrained,paternain2019safe}.
\begin{assumption}[Strong Duality]\label{assump:strong duality}
Suppose there exists a primal-dual pair $\left(\pi_\tau^\star, \Lamb^\star\right)$ such that $V_\tau^\star = D_\tau^\star = L\left(\pi_\tau^\star,\Lamb^\star\right)$.
\end{assumption}
In Appendix \ref{sec:app_discuss}, we provide a discussion for the scenario where the strong duality does not hold.
In the remainder of the paper, we always assume that $\left(\pi_\tau^\star,\Lamb^\star\right)$ is a primal-dual pair.
From the strong duality, we can derive an upper bound on $\Lamb^\star$.
\begin{lemma}\label{lemma:dual boundedness}
Under Assumptions \ref{assump:slater} and \ref{assump:strong duality}, it holds that 
\begin{equation}
0 \leq \lambda^{\star}_i \leq\frac{V_\tau^\star-V_{\tau}^{\overline{\pi}}(\rho)}{\xi_i}, \quad \forall i \in[n].
\end{equation}
\end{lemma}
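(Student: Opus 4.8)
The plan is to squeeze out the bound from the strong-duality identity evaluated at the Slater point. The nonnegativity $\lambda_i^\star \geq 0$ needs no argument: by definition $\Lamb^\star = \operatorname{arg}\min_{\Lamb \geq 0} D(\Lamb)$ lies in the nonnegative orthant, so every coordinate is nonnegative.

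For the upper bound, I would invoke Assumption~\ref{assump:strong duality}, which gives $V_\tau^\star = D_\tau^\star = D(\Lamb^\star) = \max_{\pi \in \Pi} L(\pi, \Lamb^\star)$. The crucial point is that strong duality makes $V_\tau^\star$ \emph{equal} to the unconstrained Lagrangian maximum (not merely bounded by it), so I may lower bound that maximum by its value at the Slater policy $\overline\pi \in \Pi$:
\[
V_\tau^\star \;\geq\; L(\overline\pi, \Lamb^\star) \;=\; V_\tau^{\overline\pi}(\rho) + (\Lamb^\star)^\top\!\big(U_\mb{g}^{\overline\pi}(\rho) - \mb{b}\big).
\]
By Assumption~\ref{assump:slater}, $U_\mb{g}^{\overline\pi}(\rho) - \mb{b} \geq \boldsymbol\xi > 0$ entrywise, and since $\Lamb^\star \geq 0$, the inner product satisfies $(\Lamb^\star)^\top\big(U_\mb{g}^{\overline\pi}(\rho) - \mb{b}\big) \geq (\Lamb^\star)^\top \boldsymbol\xi = \sum_{i=1}^n \lambda_i^\star \xi_i$. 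Rearranging yields $\sum_{i=1}^n \lambda_i^\star \xi_i \leq V_\tau^\star - V_\tau^{\overline\pi}(\rho)$.

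Finally, since each summand $\lambda_i^\star \xi_i$ is nonnegative, any single one is dominated by the whole sum: $\lambda_i^\star \xi_i \leq \sum_{j=1}^n \lambda_j^\star \xi_j \leq V_\tau^\star - V_\tau^{\overline\pi}(\rho)$, and dividing through by $\xi_i > 0$ gives the claimed bound for each $i \in [n]$. There is no real obstacle in this argument; the only step that deserves attention is the correct use of strong duality to replace $V_\tau^\star$ by the full Lagrangian maximum $D(\Lamb^\star)$, which is precisely what legitimizes plugging in the strictly feasible $\overline\pi$.
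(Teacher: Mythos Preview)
Your proof is correct and follows essentially the same approach as the paper: evaluate the dual function at the Slater policy $\overline\pi$, use $U_\mb{g}^{\overline\pi}(\rho)-\mb{b}\geq\boldsymbol\xi$ together with $\Lamb^\star\geq 0$ to obtain $\sum_i \lambda_i^\star\xi_i \leq V_\tau^\star - V_\tau^{\overline\pi}(\rho)$, and then extract the coordinate-wise bound from nonnegativity of the summands. The paper phrases the argument for a general level set $\{\Lamb\geq 0: D(\Lamb)\leq C\}$ and then sets $C=V_\tau^\star$, but the substance is identical.
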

Define 
\begin{equation}\label{eq:Lambda definition}
\Lambda := \left\{\Lamb \mid 0\leq \lambda_i \leq \frac{V_\tau^\star-V_{\tau}^{\overline{\pi}}(\rho)}{\xi_i},\text{ for all } i \in [n]\right\}.
\end{equation}
Since the dual function $D(\Lamb)$ is always convex, Assumption \ref{assump:strong duality} and Lemma \ref{lemma:dual boundedness} together imply that, instead of directly solving the non-convex primal problem (\ref{prob:maxentrl}), one can seek to solve the convex dual problem
\begin{equation}\label{prob:dual problem}
\min_\Lamb  \ {D}(\Lamb)\quad  s.t. \ \Lamb \in \Lambda.
\end{equation}
However, there are two open problems that need to be addressed. The first one is that although algorithms in convex optimization can be used to solve the dual problem, it is not clear how fast they will converge without discovering key properties of the dual function. The second problem is that 
% there are still questions remain: although convexity may ensure algorithms converge to the global min, it is difficult to derive convergence bound using convexity only;
optimizing the dual function gives a dual optimality bound, while our goal is find a primal solution and analyze the primal optimality gap together with the constraint violation. 
% but our primary interests are the primal optimality gap and constraint violation.
In the following sections, we will show that the entropy-regularized CMDP has special structures that can be leveraged to address the above issues.

% \subsection{Duality}
\subsection{Dual Smoothness} \label{sec:dual smoothness}
%3.2.1 challenge
%3.2.2 proof sketch:
In optimization, smoothness plays an important role in establishing the convergence rate of an algorithm. 
Recall that a function $f:X \rightarrow \mbb{R}$ is $\ell$-smooth if
\begin{equation}
\left\|\nabla f(x_1)-\nabla f\left(x_2\right)\right\|_{2} \leq \ell\left\|x_1-x_2\right\|_{2},
\end{equation}
for all $x_1, x_2 \in X$.
In constrained optimization, however, smoothness is not always guaranteed, even when the primal problem is convex \citep{necoara2019complexity}.
In addition, while the subgradient of the dual function exists in general, the dual function is not always differentiable due to the non-uniqueness of Lagrangian multipliers. 

By leveraging the entropy regularization, we will show that the dual function $D(\Lamb)$ in CMDPs is both differentiable and $\ell$-smooth for some constant $\ell >0$, under the following assumption on the discounted state visitation distribution.
\begin{assumption}\label{assump:initial distribution}
The discounted state visitation distribution $d^\pi_\rho$ is uniformly bounded away from 0 for all $\pi\in \Pi$, i.e. there exists $d >0$, such that $d^\pi_\rho(s) \geq d$, $\forall s\in \mc{S}$, $\pi\in \Pi$.
%initial distribution $\rho$ satisfies $\min_{s\in\mc{S}} \rho(s) >0$.
\end{assumption}
Assumption \ref{assump:initial distribution} ensures that the MDP \textit{sufficiently explores} the state space.
Since $d_\rho^{\pi}(s) \geq (1-\gamma) \rho(s)$, it is satisfied when the initial distribution $\rho$ lies in the interior of the probability simplex $\Delta (\mc{S})$.
Similar assumptions are used in the prior literature  \citep{agarwal2021theory,mei2020global,mei2021leveraging}.

The following proposition is crucial for the development of our main result. 
\begin{proposition}\label{prop:quadratic lower bound}
For all policy $\pi$ and $\Lamb\geq 0$, it holds that
\begin{equation}
L(\pi_\Lamb, \Lamb)-L(\pi,\Lamb) \geq  \frac{\tau d }{2(1-\gamma)\ln 2} \left\|\pi - \pi_\Lamb\right\|_2^2.
\end{equation}
\end{proposition}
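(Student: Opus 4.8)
The plan is to exploit the fact that, for fixed $\Lamb \geq 0$, the function $\pi \mapsto L(\pi,\Lamb)$ is precisely the entropy-regularized value function $V_\Lamb^\pi(\rho)$ (shifted by the constant $-\Lamb^\top\mb b$), associated with the modified reward $r_\Lamb(s,a) = r(s,a) + \Lamb^\top \mb g(s,a)$ and regularization weight $\tau$. Thus the inequality we must prove is a quadratic-growth (strong-concavity-type) bound for the entropy-regularized objective around its maximizer $\pi_\Lamb$, \emph{uniform in} $\Lamb$. The natural route is: first express $L(\pi_\Lamb,\Lamb) - L(\pi,\Lamb)$ in terms of a Bregman/KL quantity using a performance-difference identity for the regularized MDP, then lower-bound that KL-type quantity by a squared Euclidean distance via Pinsker's inequality, and finally control the state-visitation weights using Assumption~\ref{assump:initial distribution}.

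Concretely, the key steps, in order, would be: (i) Recall (or derive) the regularized performance-difference lemma, which for the MDP with reward $r_\Lamb$ and entropy weight $\tau$ gives something of the form
\begin{equation*}
V_\Lamb^{\pi_\Lamb}(\rho) - V_\Lamb^\pi(\rho) = \frac{1}{1-\gamma}\, \mathbb{E}_{s\sim d_\rho^{\pi}}\!\left[ \big\langle Q_{\Lamb,\tau}^{\pi_\Lamb}(s,\cdot),\, \pi_\Lamb(\cdot|s) - \pi(\cdot|s)\big\rangle - \tau\, \mathcal{H}_s(\pi_\Lamb) + \tau\, \mathcal{H}_s(\pi) \right],
\end{equation*}
and then use the fact that $\pi_\Lamb$ is the softmax policy of the soft Q-function (the well-known closed form $\pi_\Lamb(a|s) \propto \exp(Q_{\Lamb,\tau}^{\pi_\Lamb}(s,a)/\tau)$) to rewrite the per-state inner product plus entropy terms exactly as a KL divergence: the bracket equals $\tau\,\mathrm{KL}\big(\pi(\cdot|s)\,\|\,\pi_\Lamb(\cdot|s)\big)$. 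This is the standard identity behind the linear convergence of soft policy iteration in \cite{cen2021fast}, specialized to the reward $r_\Lamb$. (ii) This yields $L(\pi_\Lamb,\Lamb) - L(\pi,\Lamb) = \frac{\tau}{1-\gamma}\,\mathbb{E}_{s\sim d_\rho^\pi}\big[\mathrm{KL}(\pi(\cdot|s)\|\pi_\Lamb(\cdot|s))\big]$. (iii) Apply Pinsker's inequality in the $\log_2$ normalization, $\mathrm{KL}_2(p\|q) \geq \frac{1}{2\ln 2}\|p-q\|_1^2 \geq \frac{1}{2\ln 2}\|p-q\|_2^2$, to each state term. (iv) Bound $d_\rho^\pi(s) \geq d$ using Assumption~\ref{assump:initial distribution}, so that $\mathbb{E}_{s\sim d_\rho^\pi}\big[\|\pi(\cdot|s)-\pi_\Lamb(\cdot|s)\|_2^2\big] \geq d\sum_{s}\|\pi(\cdot|s)-\pi_\Lamb(\cdot|s)\|_2^2 = d\,\|\pi - \pi_\Lamb\|_2^2$, where the last equality is just the definition of the Euclidean norm on $\mbb{R}^{|\mc S||\mc A|}$ viewing a policy as a stacked vector. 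Chaining (ii)–(iv) gives exactly the claimed constant $\frac{\tau d}{2(1-\gamma)\ln 2}$.

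The main obstacle, and the step deserving the most care, is step (i)–(ii): getting the performance-difference identity to collapse \emph{exactly} into a (positively weighted) KL divergence rather than merely bounding it. This requires being precise about which visitation distribution appears — one wants $d_\rho^\pi$ (the visitation of the comparator policy $\pi$), not $d_\rho^{\pi_\Lamb}$ — and about the direction of the KL, $\mathrm{KL}(\pi\|\pi_\Lamb)$, which is the one compatible with Pinsker and with the softmax closed form of $\pi_\Lamb$. A clean way to handle this is to invoke the regularized performance-difference lemma in its "three-point" form and substitute the optimality condition $\tau\log\pi_\Lamb(\cdot|s) = Q_{\Lamb,\tau}^{\pi_\Lamb}(s,\cdot) - \tau\log Z_\Lamb(s)$ for the normalizing constant $Z_\Lamb(s)$; the $\log Z_\Lamb(s)$ terms cancel against $\langle \mathbf 1, \pi_\Lamb - \pi\rangle = 0$, leaving precisely $\tau\,\mathrm{KL}(\pi(\cdot|s)\|\pi_\Lamb(\cdot|s))$ per state. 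Everything after that — Pinsker, the $\ell_1 \geq \ell_2$ bound, and the visitation lower bound — is routine. I would isolate step (i)–(ii) as a short auxiliary lemma (or cite the corresponding lemma from \cite{cen2021fast} applied to reward $r_\Lamb$, noting the shift by $-\Lamb^\top\mb b$ is a $\pi$-independent constant that drops out), and then assemble the proposition in three lines.
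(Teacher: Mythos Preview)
Your proposal is correct and follows essentially the same route as the paper: the paper invokes the soft sub-optimality lemma (its Lemma~\ref{lemma:soft sub-optimality}) to obtain $L(\pi_\Lamb,\Lamb)-L(\pi,\Lamb)=\frac{\tau}{1-\gamma}\sum_{s}d_\rho^\pi(s)\,D_{\mathrm{KL}}\big[\pi(\cdot\mid s)\,\|\,\pi_\Lamb(\cdot\mid s)\big]$, then applies the Pinsker-type bound $D_{\mathrm{KL}}[P\|Q]\geq \frac{1}{2\ln 2}\|P-Q\|_1^2$ together with $\|\cdot\|_1\geq\|\cdot\|_2$ and Assumption~\ref{assump:initial distribution}. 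The only cosmetic difference is that you sketch a derivation of the KL identity from the regularized performance-difference lemma and the softmax optimality condition, whereas the paper simply cites the identity as a known lemma.
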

Under Assumption \ref{assump:initial distribution}, Proposition \ref{prop:quadratic lower bound} implies that the Lagrangian function $L(\pi,\Lamb)$ has a negative curvature at $\pi_\Lamb$ in all directions.
The proof relies on the soft sub-optimality lemma (cf. Lemma \ref{lemma:soft sub-optimality}) and a lower bound on the KL divergence \citep{cover1999elements}.

With the quadratic lower bound given by Proposition \ref{prop:quadratic lower bound}, we derive the following result.
\begin{proposition}\label{prop:dual smoothness}
% \textcolor{red}{under assumption ???}
Under Assumption \ref{assump:initial distribution},
 the dual function $D(\Lamb)$ satisfies the following properties:
\begin{enumerate}
    \item  $D(\Lamb)$ is differentiable and
\begin{align}
\nonumber\nabla D(\Lamb) =& U_\mb{g}^{\pi_\Lamb}(\Lamb) - \mb{b}\\
=& \left(U_{g_1}^{\pi_\Lamb}(\Lamb) - b_1, \dots, U_{g_n}^{\pi_\Lamb}(\Lamb) - b_n\right).
\end{align}
\item 
% Under Assumption \ref{assump:initial distribution}, 
$D(\Lamb)$ is $\ell$-smooth on $\Lambda$, where 
\begin{equation}\label{eq:smoothness factor}
\ell = \frac{2\times \ln 2 \times \left(n|\mathcal{A}|+(1-\gamma)^2\sqrt{n|\mathcal{A}|}\right)}{\tau (1-\gamma)^3d}.
\end{equation}
\end{enumerate}
\end{proposition}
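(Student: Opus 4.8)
The plan is to prove the two claims in sequence, using Proposition~\ref{prop:quadratic lower bound} as the key structural input. For the first claim (differentiability), I would invoke Danskin-type reasoning: $D(\Lamb) = \max_{\pi\in\Pi} L(\pi,\Lamb)$ where $L(\pi,\Lamb)$ is affine in $\Lamb$ for each fixed $\pi$, with $\nabla_\Lamb L(\pi,\Lamb) = U_\mb{g}^\pi(\rho) - \mb{b}$. The subtlety is that $\Pi$ is not compact, so I cannot directly quote the standard envelope theorem; instead, Proposition~\ref{prop:quadratic lower bound} guarantees that the maximizer $\pi_\Lamb$ is \emph{unique} (the strong concavity-type inequality rules out two distinct maximizers), and moreover the $\max$ is attained in $\cl(\Pi)$. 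Uniqueness of the maximizer is exactly the condition under which Danskin's theorem yields differentiability with $\nabla D(\Lamb) = U_\mb{g}^{\pi_\Lamb}(\rho) - \mb{b}$; I would spell out the one-sided directional derivative argument to be safe, bounding the gap between $L(\pi_{\Lamb+t\mb{v}},\Lamb)$ and $L(\pi_\Lamb,\Lamb)$ via the quadratic lower bound to show it is $o(t)$.

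For the second claim ($\ell$-smoothness), the strategy is: given $\Lamb_1,\Lamb_2 \in \Lambda$, I want to bound $\|\nabla D(\Lamb_1) - \nabla D(\Lamb_2)\|_2 = \|U_\mb{g}^{\pi_{\Lamb_1}}(\rho) - U_\mb{g}^{\pi_{\Lamb_2}}(\rho)\|_2$. Since each $U_{g_i}^\pi(\rho)$ is a value function with rewards in $[0,1]$, Lemma~\ref{lemma:lipschitz value function} (applied with $g_i$ in place of $r$) gives $|U_{g_i}^{\pi_{\Lamb_1}}(\rho) - U_{g_i}^{\pi_{\Lamb_2}}(\rho)| \leq \ell_c \|\pi_{\Lamb_1} - \pi_{\Lamb_2}\|_2$ with $\ell_c = \sqrt{|\mathcal{A}|}/(1-\gamma)^2$, and summing over the $n$ coordinates yields $\|\nabla D(\Lamb_1) - \nabla D(\Lamb_2)\|_2 \leq \sqrt{n}\,\ell_c\,\|\pi_{\Lamb_1}-\pi_{\Lamb_2}\|_2$. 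So everything reduces to showing the \emph{Lagrangian maximizer map} $\Lamb \mapsto \pi_\Lamb$ is Lipschitz: $\|\pi_{\Lamb_1} - \pi_{\Lamb_2}\|_2 \leq C\|\Lamb_1-\Lamb_2\|_2$.

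To get Lipschitzness of $\Lamb\mapsto\pi_\Lamb$, I would run the standard strong-concavity argument. Apply Proposition~\ref{prop:quadratic lower bound} twice: with base point $\pi_{\Lamb_1}$ evaluated at $\pi_{\Lamb_2}$, and with base point $\pi_{\Lamb_2}$ evaluated at $\pi_{\Lamb_1}$, using the multiplier $\Lamb_1$ in the first and $\Lamb_2$ in the second. Adding the two inequalities, the ``cross'' terms combine so that the left-hand side becomes $(\Lamb_1-\Lamb_2)^\top\big(U_\mb{g}^{\pi_{\Lamb_1}}(\rho) - U_\mb{g}^{\pi_{\Lamb_2}}(\rho)\big)$, i.e. $\langle \Lamb_1-\Lamb_2,\, \nabla D(\Lamb_1)-\nabla D(\Lamb_2)\rangle$, and the right-hand side is $\frac{\tau d}{(1-\gamma)\ln 2}\|\pi_{\Lamb_1}-\pi_{\Lamb_2}\|_2^2$. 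Then by Cauchy--Schwarz the left side is at most $\|\Lamb_1-\Lamb_2\|_2 \cdot \sqrt{n}\,\ell_c\,\|\pi_{\Lamb_1}-\pi_{\Lamb_2}\|_2$, and canceling one factor of $\|\pi_{\Lamb_1}-\pi_{\Lamb_2}\|_2$ gives $\|\pi_{\Lamb_1}-\pi_{\Lamb_2}\|_2 \leq \frac{(1-\gamma)\ln 2\,\sqrt{n}\,\ell_c}{\tau d}\|\Lamb_1-\Lamb_2\|_2$. Chaining this with the bound from the previous paragraph produces $\ell = \frac{(1-\gamma)\ln 2\, n\,\ell_c^2}{\tau d} = \frac{\ln 2\, n |\mathcal{A}|}{\tau(1-\gamma)^3 d}$; I then reconcile this with the stated $\ell$ in \eqref{eq:smoothness factor}, whose extra $(1-\gamma)^2\sqrt{n|\mathcal{A}|}$ term and factor of $2$ presumably come from accounting for the $-\Lamb^\top\mb{b}$ offset and from using the cruder single-sided estimates rather than the symmetrized one (so the displayed constant is a valid, if slightly loose, upper bound).

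The main obstacle I anticipate is the non-compactness of $\Pi$: both the existence/uniqueness of $\pi_\Lamb$ and the Danskin differentiability step need the maximizer to genuinely be attained and to depend continuously on $\Lamb$. The cleanest fix is to work in $\cl(\Pi)$ (which contains all stationary policies and over which $L(\cdot,\Lamb)$ is continuous and, by Proposition~\ref{prop:quadratic lower bound}, has a unique maximizer) and to note that for $\tau>0$ the entropy term forces the maximizer into the relative interior, so $\pi_\Lamb\in\Pi$; the soft-max/soft-Bellman fixed-point characterization of $\pi_\Lamb$ can also be used to argue attainment and uniqueness directly. Once attainment and uniqueness are secured, the remaining steps are routine applications of Proposition~\ref{prop:quadratic lower bound}, Lemma~\ref{lemma:lipschitz value function}, and Cauchy--Schwarz.
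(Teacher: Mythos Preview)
Your proposal is correct and takes a genuinely different, more elementary route than the paper for the smoothness part. Both proofs reduce smoothness to Lipschitzness of the map $\Lamb\mapsto\pi_\Lamb$, but they establish that Lipschitz bound differently. The paper argues via second-order perturbation analysis: it reads Proposition~\ref{prop:quadratic lower bound} as a Hessian lower bound $-\nabla^2_{\pi\pi}L(\pi_\Lamb,\Lamb)\succeq w_1 I$ with $w_1=\tfrac{\tau d}{2(1-\gamma)\ln 2}$, computes the cross-derivative $\nabla^2_{\pi\Lamb}L$ via the policy-gradient formula and bounds $\|\nabla^2_{\pi\Lamb}L\|_F\le \sqrt{n|\mathcal A|}/(1-\gamma)^2$, and then invokes a Bonnans--Shapiro Lipschitz-stability result for parametric optimizers with constant $L=w_2/w_1$ where $w_2=\|\nabla^2_{\pi\Lamb}L\|_F+1$. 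Your symmetrization argument (apply Proposition~\ref{prop:quadratic lower bound} at $\Lamb_1$ and $\Lamb_2$, add, use Cauchy--Schwarz and Lemma~\ref{lemma:lipschitz value function}) bypasses all of that machinery and in fact yields the \emph{tighter} constant $\ell=\tfrac{\ln 2\, n|\mathcal A|}{\tau(1-\gamma)^3 d}$.

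One small correction: your speculation about the source of the extra $(1-\gamma)^2\sqrt{n|\mathcal A|}$ term and the factor~$2$ in the paper's $\ell$ is off. The $-\Lamb^\top\mb b$ offset is linear in $\Lamb$ and plays no role in smoothness. The extra terms are artifacts of the paper's approach: the ``$+1$'' slack built into $w_2$ in the Bonnans--Shapiro bound produces the $(1-\gamma)^2\sqrt{n|\mathcal A|}$ additive term, and the factor~$2$ comes from $1/w_1$. So the displayed $\ell$ is indeed a looser (but still valid) constant, and your cleaner argument is a strict improvement on that front; it is not that you are missing something.
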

Proposition \ref{prop:dual smoothness} asserts that the dual function $D(\Lamb)$ is not only differentiable but also smooth on $\Lambda$.
This is desirable since, along with the convexity, it establishes an improved convergence rate compared with the slow convergence rate of sub-gradient methods.
We provide a short proof sketch for Proposition \ref{prop:dual smoothness} below:
\begin{enumerate}
    \item As subgradients of the dual function always exist for continuous problems, the differentiability follows from the uniqueness of the Lagrangian maximizer $\pi_\Lamb$ for every $\Lamb \in \Lambda$ \citep{floudas1995nonlinear}\footnote{Although more than enough, Proposition \ref{prop:quadratic lower bound} under Assumption \ref{assump:initial distribution} provides an intuitive way to think about the uniqueness of $\pi_\Lamb$.}.
    % , i.e. for every $\Lamb \in \Lambda$, there exists a unique policy $\pi_\Lamb$ such that $D(\Lamb) =  L(\pi_\Lamb,\Lamb)$ \citep{floudas1995nonlinear}. 
% Although Assumption \ref{assump:initial distribution} is not needed for this part, it gives
% This can be inferred from Proposition \ref{prop:quadratic lower bound} \footnote{Indeed, Proposition \ref{prop:quadratic lower bound} is more than enough to show the uniqueness of $\pi_\Lamb$. Details are presented in Appendix \ref{app:1}.}. 
% \textcolor{red}{why uniquness of pi implies differentiability, add one more sentence to explain?}

\item The smoothness of $D(\Lamb)$ is the joint result of the Lipschitz continuity of $U_\mb{g}^\pi(\rho)$ with respect to $\pi$ (cf. Lemma \ref{lemma:lipschitz value function}) and the Lipschitz continuity of $\pi_\Lamb$ with respect to $\Lamb$, i.e. $\left\|\pi_{\Lamb_1}-\pi_{\Lamb_2}\right\|_2\leq \ell_\Lambda \|\Lamb_1-\Lamb_2\|_2$ for some $\ell_\Lambda > 0$.
To prove the latter, the main idea is to use the quadratic lower bound given by Proposition \ref{prop:quadratic lower bound} to conclude that $\pi_\Lamb$ is a second-order strict local maximum.
After that, we apply a standard result from perturbation analysis which states that $\pi_\Lamb$ is Lipschitz stable at $\Lamb$ \citep{bonnans2013perturbation}.
\end{enumerate}

\subsection{Optimality Gap And Constraint Violation}
% Smoothness of the dual function provides convergence bound in terms of the dual optimality gap $D(\Lamb) - D\left(\Lamb^\star\right)$.
Given a candidate solution $\pi$ to the CMDP problem in (\ref{prob:maxentrl}), our primary measures of the quality of the solution  $\pi$ are the primal optimality gap 
$\left |V_\tau^{\pi}(\rho)- V_\tau^\star\right|$, and the constraint violation $\max_{i\in[n]} \left[b_i - U_{g_i}^{\pi}(\rho) \right]_+$, where $[x]_+ := \max\{x,0\}$.
However, dual-descent based methods could only guarantee a convergence bound in terms of the dual optimality gap $D(\Lamb) - D^\star_\tau$.
In general, there is no guarantee that an $\varepsilon$-optimal dual solution $\Lamb$, namely $D(\Lamb) - D^\star_\tau\leq \varepsilon$,
would imply an $\mathcal{O}\left(\varepsilon^k\right)$ bound either on the primal optimality gap or on the constraint violation for the associated primal solution $\pi_\Lamb$ defined in \eqref{eq:pi_lambda evaluation}, for some $k\in (0,1]$.

However, in light of the entropy regularization, it is possible to show that an $\varepsilon$ error bound for dual functions would yield an $O\left(\sqrt{\varepsilon}\right)$ error bound for
% for the optimal policy gap, 
% $\left\|\pi_\Lamb - \pi_\tau^\star\right\|_2$ 
the primal optimality gap and the constraint violation.
We summarize the results in the following proposition. 
\begin{proposition}\label{prop:dual to primal}
Suppose that Assumptions \ref{assump:slater}, \ref{assump:strong duality}, and \ref{assump:initial distribution} hold. If $\Lamb\geq 0$ is an $\varepsilon$-optimal multiplier, i.e. $D(\Lamb) -D^\star_\tau \leq \varepsilon$, then there exist $C_1,C_2>0$ such that the associated Lagrangian maximizer $\pi_\Lamb$ satisfies
\begin{subequations}
\begin{align}
\left\|\pi_\Lamb - \pi_\tau^\star\right\|_2 &\leq C_1 \sqrt{\varepsilon}, \label{eq:pi optimality_3}\\
\left |V_\tau^{\pi_\Lamb}(\rho)- V_\tau^\star\right| &\leq 2\varepsilon+\ell_cC_1C_2\sqrt{\varepsilon}, \label{eq:primal optimality_3}\\
\max_{i\in[n]} \left[b_i - U_{g_i}^{\pi_\Lamb}(\rho) \right]_+ &\leq \ell_cC_1 \sqrt{\varepsilon}, \label{eq:constraint violation_3} 
\end{align}
\end{subequations}
where 
% $\pi^\star_\tau$ is an optimal policy and 
$\ell_c$ is the Lipschitz constant defined in (\ref{eq:lipschitz value function}).
% $\ell_c = \frac{\sqrt{|\mathcal{A}|}}{(1-\gamma)^2}$, 
% $C_1 = \sqrt{\frac{2 \ln 2}{\tau\cdot \min_{s\in \mathcal{S} }\rho (s) }}$, and $C_2 = \left(V^{\star}-V_{\tau}^{\overline{\pi}}(\rho) \right)\left(\sum_{i=1}^n {1}/{\xi_i}\right)$.
\end{proposition}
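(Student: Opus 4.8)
The plan is to bootstrap from the quadratic lower bound in Proposition~\ref{prop:quadratic lower bound} to get \eqref{eq:pi optimality_3}, and then use Lipschitz continuity of the utility functions to convert this policy-distance bound into the optimality gap and constraint violation bounds. First I would apply Proposition~\ref{prop:quadratic lower bound} with $\pi = \pi_\tau^\star$ and $\Lamb$ the given $\varepsilon$-optimal multiplier: this yields
\begin{equation}
L(\pi_\Lamb,\Lamb) - L(\pi_\tau^\star,\Lamb) \;\geq\; \frac{\tau d}{2(1-\gamma)\ln 2}\,\left\|\pi_\tau^\star - \pi_\Lamb\right\|_2^2 .
\end{equation}
The left-hand side is $D(\Lamb) - L(\pi_\tau^\star,\Lamb)$. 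Now I need to bound $D(\Lamb) - L(\pi_\tau^\star,\Lamb)$ by $O(\varepsilon)$. We have $D(\Lamb) - D_\tau^\star \leq \varepsilon$ by assumption, so it suffices to bound $D_\tau^\star - L(\pi_\tau^\star,\Lamb)$. Using strong duality $D_\tau^\star = V_\tau^\star = L(\pi_\tau^\star,\Lamb^\star)$ and the fact that $\pi_\tau^\star$ is feasible ($U_\mb{g}^{\pi_\tau^\star}(\rho) - \mb{b} \geq 0$) together with $\Lamb \geq 0$, one gets $L(\pi_\tau^\star,\Lamb) = V_\tau^{\pi_\tau^\star}(\rho) + \Lamb^\top(U_\mb{g}^{\pi_\tau^\star}(\rho) - \mb{b}) \geq V_\tau^\star = D_\tau^\star$. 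Hence $D(\Lamb) - L(\pi_\tau^\star,\Lamb) \leq D(\Lamb) - D_\tau^\star \leq \varepsilon$, which after rearranging gives \eqref{eq:pi optimality_3} with $C_1 = \sqrt{2(1-\gamma)\ln 2/(\tau d)}$.

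Next, for the constraint violation \eqref{eq:constraint violation_3}, I would observe that $U_{g_i}^\pi(\rho)$ is, up to the reward choice, the value function associated with reward $g_i$, so Lemma~\ref{lemma:lipschitz value function} (applied with $g_i$ in place of $r$) gives $|U_{g_i}^{\pi_\Lamb}(\rho) - U_{g_i}^{\pi_\tau^\star}(\rho)| \leq \ell_c \|\pi_\Lamb - \pi_\tau^\star\|_2 \leq \ell_c C_1\sqrt{\varepsilon}$. Since $\pi_\tau^\star$ is feasible, $b_i - U_{g_i}^{\pi_\tau^\star}(\rho) \leq 0$, so $b_i - U_{g_i}^{\pi_\Lamb}(\rho) \leq U_{g_i}^{\pi_\tau^\star}(\rho) - U_{g_i}^{\pi_\Lamb}(\rho) \leq \ell_c C_1\sqrt{\varepsilon}$, and taking the positive part and maximizing over $i$ gives \eqref{eq:constraint violation_3}. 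For the primal optimality gap \eqref{eq:primal optimality_3}, I would split $|V_\tau^{\pi_\Lamb}(\rho) - V_\tau^\star|$ into two sides. The upper side: $V_\tau^{\pi_\Lamb}(\rho) - V_\tau^\star \leq L(\pi_\Lamb,\Lamb) - V_\tau^\star = D(\Lamb) - D_\tau^\star \leq \varepsilon$, using $\Lamb^\top(U_\mb{g}^{\pi_\Lamb}(\rho)-\mb{b})$ being bounded — actually one must be a bit careful here since that term need not be nonpositive, so I would instead write $V_\tau^{\pi_\Lamb}(\rho) = L(\pi_\Lamb,\Lamb) - \Lamb^\top(U_\mb{g}^{\pi_\Lamb}(\rho)-\mb{b})$ and control $\Lamb^\top(U_\mb{g}^{\pi_\Lamb}(\rho)-\mb{b})$ via complementary-slackness-type arguments; this produces the $2\varepsilon$ term. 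The lower side: $V_\tau^\star - V_\tau^{\pi_\Lamb}(\rho) = V_\tau^{\pi_\tau^\star}(\rho) - V_\tau^{\pi_\Lamb}(\rho) \leq \ell_c\|\pi_\tau^\star - \pi_\Lamb\|_2 \leq \ell_c C_1\sqrt{\varepsilon}$ by Lemma~\ref{lemma:lipschitz value function} applied to $V_\tau$ — here $C_2$ absorbs the Lipschitz constant of the \emph{entropy term} $\mc{H}(\rho,\pi)$, which is Lipschitz in $\pi$ on $\Pi$ restricted to the relevant compact set (or one works with $\cl(\Pi)$). Combining both sides yields \eqref{eq:primal optimality_3}.

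The main obstacle I anticipate is the bookkeeping around the term $\Lamb^\top\big(U_\mb{g}^{\pi_\Lamb}(\rho) - \mb{b}\big)$ in the optimality-gap argument: unlike $\pi_\tau^\star$, the Lagrangian maximizer $\pi_\Lamb$ need not be feasible, so this inner product can have either sign, and one cannot simply drop it. The fix is to bound its magnitude using $\Lamb \in \Lambda$ (so $\|\Lamb\|$ is controlled via Lemma~\ref{lemma:dual boundedness}) together with the constraint-violation bound \eqref{eq:constraint violation_3} just established for the negative part, and a separate estimate for the positive part coming from $D(\Lamb) - D_\tau^\star \leq \varepsilon$ evaluated against $\Lamb^\star$ (an approximate complementary slackness). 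A secondary technical point is ensuring the Lipschitz constant $C_2$ for the entropy functional $\mc{H}(\rho,\pi)$ is finite on the region where the iterates live; since $\pi_\Lamb$ for $\Lamb \in \Lambda$ stays in a set bounded away from the boundary of the simplex (again a consequence of the soft-max form and boundedness of $r_\Lamb$), $\log\pi_\Lamb$ is bounded and $\mc{H}$ is Lipschitz there, so $C_2$ exists. Everything else is a direct application of Lemma~\ref{lemma:lipschitz value function}, Proposition~\ref{prop:quadratic lower bound}, and strong duality.
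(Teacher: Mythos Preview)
Your arguments for \eqref{eq:pi optimality_3} and \eqref{eq:constraint violation_3} are correct and match the paper exactly, including the value of $C_1$.

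For \eqref{eq:primal optimality_3}, the approach you describe in your ``obstacle'' paragraph is in fact the paper's entire argument, and it handles the \emph{absolute value} directly --- there is no need to split into upper and lower sides. Concretely: write $V_\tau^{\pi_\Lamb}(\rho) - V_\tau^\star = \big[L(\pi_\Lamb,\Lamb) - L(\pi_\tau^\star,\Lamb^\star)\big] - \Lamb^\top\big(U_\mb{g}^{\pi_\Lamb}(\rho)-\mb{b}\big)$, using complementary slackness for $\Lamb^\star$. The first bracket has absolute value at most $\varepsilon$. For the second term, first show $0 \leq \Lamb^\top\big(U_\mb{g}^{\pi_\tau^\star}(\rho)-\mb{b}\big) \leq \varepsilon$ (this is your ``approximate complementary slackness''), then pass to $\pi_\Lamb$ via $\big|\Lamb^\top(U_\mb{g}^{\pi_\Lamb}(\rho) - U_\mb{g}^{\pi_\tau^\star}(\rho))\big| \leq \|\Lamb\|_1 \cdot \ell_c C_1\sqrt{\varepsilon}$. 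This yields $|V_\tau^{\pi_\Lamb}(\rho) - V_\tau^\star| \leq 2\varepsilon + \ell_c C_1 C_2\sqrt{\varepsilon}$ in one stroke.

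The genuine misstep is your identification of $C_2$. It is \emph{not} a Lipschitz constant for the entropy functional; it is the $\ell_1$-bound on the multiplier coming from Slater's condition: $C_2 = \big(V_\tau^\star - V_\tau^{\overline{\pi}}(\rho)\big)\sum_{i=1}^n 1/\xi_i$, which dominates $\|\Lamb\|_1$ (cf.\ Lemma~\ref{lemma:dual boundedness}). Your proposed lower-side argument via Lipschitz continuity of $V_\tau^\pi(\rho)$ is both unnecessary and technically fragile: $\mathcal{H}(\rho,\pi)$ is not Lipschitz on $\Pi$, and while you could restrict to a compact subset where $\pi_\Lamb$ lives, the resulting constant would depend on $\tau$, $\gamma$, and the reward bounds in a different way than the paper's $C_2$. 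Drop the two-sides split and the entropy-Lipschitz detour; the complementary-slackness route you already sketched is sufficient and cleaner.
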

The values of the problem-dependent constants $C_1$ and $C_2$ can be found in Appendix \ref{app:1}.

In a nutshell, Proposition \ref{prop:dual to primal} enables the conversion of the dual optimality bound to primal metrics of interests, at the cost of enlarging the sub-optimality by a square root.
This is a non-trivial result and it does not hold in a general setting without the entropy regularization.
The proof of (\ref{eq:pi optimality_3}) relies on the quadratic lower bound given in Proposition \ref{prop:quadratic lower bound}.
Then, using the Lipschitz continuity of $U_\mb{g}^\pi(\rho)$ with respect to $\pi$ (cf. Lemma \ref{lemma:lipschitz value function}), we can derive the bound (\ref{eq:constraint violation_3}) on the constraint violation.
Finally, (\ref{eq:primal optimality_3}) can be obtained with some primal-dual properties.
% \textcolor{red}{emphasize that this is a non-trivial result and relies on the problem structure; also give a proof sketch}

 \section{FIRST-ORDER DUAL-DESCENT ALGORITHM}\label{sec:alg}
% \subsection{Frank-Wolfe Method}(do not work on this part for now since we may not include it in the paper)
% \subsection{Accelerated Dual-Descent}
% \textcolor{red}{plan to say: propose a class of first-order dual-descent algorithm with NPG subroutine. When doing the analysis, use the gradient projection algorithm as an example.}
As shown in Section \ref{sec:property}, the dual function $D(\Lamb)$ for entropy-regularized CMDPs enjoys desirable properties, including the differentiability, the smoothness and the decomposition of the dual optimality gap.
These favorable properties of entropy-regularized CMDPs motivate us to use a dual-descent approach to solve the dual problem (\ref{prob:dual problem}). In particular, we choose {first-order methods, e.g. gradient projection method or Frank-Wolfe algorithm}, while using the Natural Policy Gradient (NPG) algorithm as a subroutine for evaluating $D(\Lamb)$ as well as $\nabla D(\Lamb)$.
To streamline the presentation, we mainly focus on the gradient projection method with the Nesterov acceleration as an example in the remainder of the paper.
% , and we refer the reader to Appendix \ref{app:2} for a discussion about the Frank-Wolfe algorithm.
We begin with a brief introduction about the NPG algorithm.
\subsection{Preliminary Tools}\label{subsec:preliminary tools}
\paragraph{NPG Algorithm With Entropy Regularization}
To optimize an unconstrained value function with respect to the policy, one commonly used first-order method is the 
% {Policy Gradient} (PG) method \citep{sutton2000policy}.
% In the meanwhile, the 
{Natural Policy Gradient} algorithm \citep{kakade2001natural}, which deploys a pre-conditioned gradient update and regularizes the descent direction by the Fisher-information matrix $\mathcal{F}_{\rho}^{\theta}$ (cf. Appendix \ref{subsec:entropy-regularized npg_app}):
\begin{equation}\label{eq:NPG}
\theta \leftarrow \theta+\eta\left(\mathcal{F}_{\rho}^{\theta}\right)^{\dagger} \nabla_{\theta} V^{\pi_{\theta}}(\rho),
\end{equation}
where $\eta$ is the step-size and $(A)^{\dagger}$ denotes the Moore–Penrose inverse
% pseudo inverse 
of a matrix $A$.
% , and $\mathcal{F}_{\rho}^{\theta}$ is defined by
% \begin{equation}
% \mathcal{F}_{\rho}^{\theta}:=\underset{
% s \sim d_{\rho}^{\pi_{\theta}},
% a \sim \pi_{\theta}(\cdot \vert s)     
%  }{\mathbb{E}}\left[\left(\nabla_{\theta} \log \pi_{\theta}(a \vert s)\right)\left(\nabla_{\theta} \log \pi_{\theta}(a \vert s)\right)^{\top}\right].
% \end{equation}

% One notable feature of the NPG algorithm is the independence of the update direction from the policy parameterization used. 
In the entropy regularized setting, the update scheme is obtained by replacing $\nabla_{\theta} V^{\pi_{\theta}}(\rho)$ in (\ref{eq:NPG}) with $\nabla_{\theta} V_\tau^{\pi_{\theta}}(\rho)$.
Under the soft-max parameterization, the associated policy update has a fairly direct form, which is surprisingly independent from the initial distribution $\rho$:
\begin{equation}\label{eq:policy update}
\pi^{(t+1)}(a \vert s)\propto\left(\pi^{(t)}(a \vert s)\right)^{1-\frac{\eta \tau}{1-\gamma}} \exp \left(\frac{\eta Q_{\tau}^{\pi^{(t)}}(s, a)}{1-\gamma}\right),
\end{equation} 
where we use the shorthand $\pi^{(t)}$ for the soft-max parameterized policy with respect to $\theta^{(t)}$, and $Q^\pi_\tau$ is the soft Q-function defined in (\ref{def:soft Q function}).
The right-hand side of (\ref{eq:policy update}) can be normalized by multiplying a factor $Z^{(t)}(s)$, defined as 
% We multiply the right-hand side of (\ref{eq:policy update}) by a normalization factor $Z^{(t)}(s)$, defined as 
\begin{equation}\label{eq:Zt}
Z^{(t)}(s) := \sum_{a\in \mathcal{A}} \left(\pi^{(t)}(a \vert s)\right)^{1-\frac{\eta \tau}{1-\gamma}} \exp \left(\frac{\eta Q_{\tau}^{\pi^{(t)}}(s, a)}{1-\gamma}\right),
\end{equation}
to make $\pi^{(t+1)}$ be a valid distribution.

\citet{cen2021fast} proved the global linear convergence of the entropy-regularized NPG method with a constant step-size.
In particular, the error bound $\left\|\log \pi^{\star}_\tau-\log \pi^{(t)}\right\|_{\infty} \leq \varepsilon$ can be achieved in 
\begin{equation}\label{eq:npg complexity}
\frac{1}{1-\gamma} \log \left(\frac{2\left\|Q_\tau^{\star}-Q_{\tau}^{\pi^{(0)}}\right\|_{\infty}}{\varepsilon\tau}\right),
\end{equation}
iterations with the step-size $\eta = (1-\gamma)/\tau$, where $\pi^\star_\tau$ is the optimal policy and $Q_\tau^\star (s,a):= Q^{\pi^\star_\tau}_\tau(s,a)$ is the associated optimal Q-function. 
Furthermore, they proved that the convergence rate becomes quadratic around the optimum.
{We refer the reader to Appendix \ref{subsec:entropy-regularized npg_app} for more details.}

\paragraph{Accelerated Gradient Projection Method with Inexact Gradient}
% Gradient projection method is a kind of feasible direction methods for solving constrained optimization problems of the following form:
% \begin{equation}\label{prob:constrained opt}
% \begin{aligned}
% \min_x & \ f(x)\\
% s.t. & \ x \in X
% \end{aligned}
% \end{equation}
% where $f(x)$ is assumed to be convex and differentiable, and $X$ is convex.
% The general update scheme is
% \begin{equation}\label{eq:gradient projection}
% x^{(k+1)} = \mathcal{P}_X\left(x^{(k)}-\alpha^k\nabla f\left(x^{(k)}\right)\right)
% \end{equation}
% where $\mathcal{P}_X$ denotes the projection onto set $X$, defined as $\mathcal{P}_X(y) = \operatorname{arg}\min_{x\in X} \|x-y\|_2$.
% In each iteration, the algorithm searches along the negative gradient direction, and projects the point back once it leaves the feasible region.
The Nesterov acceleration is a momentum-based approach that can be used to modify a gradient descent-type method to improve its convergence \citep{nesterov1983method, nesterov2013gradient}. Consider the optimization problem 
\begin{equation}\label{prob:constrained opt}
\begin{aligned}
\min_x & \ f(x) \quad s.t. & \ x \in X
\end{aligned}
\end{equation}
where $f(x)$ is convex and differentiable, and $X$ is a convex set.
The accelerated gradient projection method takes the update rule
\begin{equation}\label{eq:accelerated gradient projection}
\left\{\begin{aligned}
x^{(k+1)} &= \mathcal{P}_X\left(y^{(k)}-\alpha^k\nabla f(y^{(k)})\right)\\
y^{(k)} &= x^{(k)}+\beta_k\left(x^{(k)}-x^{(k-1)}\right)
\end{aligned}\right.
,\ k=0,1\dots
\end{equation}
where  $\mathcal{P}_X$ denotes the projection onto the set $X$, defined as $\mathcal{P}_X(y) := \operatorname{arg}\min_{x\in X} \|x-y\|_2$, and $\left\{\beta_k\right\}$ is chosen in a particular way to accelerate the convergence.
The iteration (\ref{eq:accelerated gradient projection}) first computes an extrapolation point $y^{(k)}$ and then performs the gradient projection update on $y^{(k)}$ to find the next point $x^{(k+1)}$.
It coincides with the standard gradient projection method when $\beta_k = 0$.
For a convex and smooth function $f$, 
% gradient projection method achieves an error bound of $O\left(\frac{1}{T}\right)$ in $T$ iterations, while 
the accelerated gradient projection method (\ref{eq:accelerated gradient projection}) achieves an error bound of $\mathcal{O}\left({1}/{T^2}\right)$ in $T$ iterations \citep{nesterov2013gradient}.

When the gradient evaluation is inexact with a bounded error $\delta$, i.e. we have access to some function $h:X \rightarrow \mbb{R}^n$ such that $\left\|\nabla f(x) - h(x) \right\|_2\leq \delta$ for all $x\in X$, \citet{schmidt2011convergence} proved that the accelerated gradient projection method still works with the slightly different error bound $\mathcal{O}\left({1}/{T^2} + T^2\delta^2+\delta \right)$.
Despite the $\mathcal{O}\left({1}/{T^2}\right)$ shrinking term, there is an accumulated error incurred by the inexact gradient. 
We refer the reader to Proposition \ref{prop:gradient projection_app} in Appendix \ref{app:2} for a formal statement.

\subsection{Accelerated Gradient Projection Method With NPG Subroutine}
Before presenting our method, we first note that the feasible region $\Lambda$, as defined in (\ref{eq:Lambda definition}), makes the dual problem (\ref{prob:dual problem}) amenable to many constrained optimization methods.
Especially, the projection operator $\mathcal{P}_\Lambda (\cdot)$ maps a point $\Lamb$ coordinate-wisely onto $\Lambda$ such that 
\begin{equation}\label{eq:lambda projection}
\left(\mathcal{P}_\Lambda(\Lamb)\right)_i = \operatorname{Median}\left\{0,\frac{V_\tau^\star-V_{\tau}^{\overline{\pi}}(\rho)}{\xi_i},\lambda_i\right\}
\end{equation}
where $\operatorname{Median}\{\cdot,\cdot,\cdot\}$ returns the median of the input numbers.

The proposed method works in two loops.
In the outer loop, we perform the accelerated gradient projection method on the dual function $D(\Lamb)$, whereas we use the natural policy gradient method in the inner loop to evaluate $D(\Lamb)$ by maximizing the Lagrangian $L(\pi,\Lamb)$ with respect to $\pi$. 
% \textcolor{red}{first give a brief overview of algorithm before discussing the deatils. For example, what are inner loop (subroutine) and outer loop, what methods we use to solve them}
We summarize the details of our method in Algorithm \ref{alg:dualpg}.

Specifically, in line \ref{alg:extrapolation line}, we compute the extrapolation point $\muu^{(t)}$.
Then, in line \ref{alg:npg line}, we estimate the corresponding Lagrangian maximizer $\pi_{\muu^{(t)}}$, defined in \ref{eq:pi_lambda evaluation}, using the natural policy gradient subroutine, which is displayed in Algorithm \ref{alg:npg}.
With the estimated policy $\widetilde \pi_{\muu^{(t)}}$, we evaluate the gradient $\nabla D\left(\muu^{(t)}\right)$ by substituting the policy into the utility function in line \ref{alg:gradient evaluation line}.
In line \ref{alg:gradient projection line}, we perform the gradient projection update at $\muu^{(t)}$ using the estimated gradient $\widetilde \nabla D\left(\muu^{(t)}\right)$.
We remark that, as $V^\star_\tau$ is generally unknown, the projection $\mc{P}_\Lambda$ may not be precisely done in practical. 
Alternatively, one can perform the projection onto $\widetilde\Lambda := \left\{\Lamb \mid 0\leq \lambda_i \leq (2+2\tau \log\mc{A}) /((1-\gamma) \xi_i),\text{ for all } i \in [n]\right\}$.
Since the difference $V_{\tau}^{\star}-V_{\tau}^{\bar{\pi}}(\rho)$ is upper bounded by $(2+2\tau \log\mc{A}) /(1-\gamma)$, it holds that $\Lamb^\star \in \Lambda \subseteq \widetilde \Lambda$.
This reduction would not influence the order of convergence. 
Finally, upon the termination of the outer loop, we recover the primal variable (policy) from the dual variable by running the NPG subroutine for $N_3$ iterations in line \ref{alg:final recover}.

\begin{algorithm}[tb]
   \caption{Accelerated Gradient Projection Method with NPG Subroutine\label{alg:dualpg}}
\begin{algorithmic}[1]
   \STATE {\bfseries Input:} Initialization $\Lamb^{(-1)}, \Lamb^{(0)}$, $\widetilde\pi_{\muu^{(-1)}}$; step-size $\{\alpha_t\}_{t\geq 0}$, $\eta$; extrapolation weight $\{\beta_t\}_{t\geq 0}$; maximum number of iterations $N_1$, $N_2$, $N_3$.
  \FOR{$t = 0,1,2,\dots, N_1-1$}
  \STATE Compute the extrapolation point: \label{alg:extrapolation line}
  $
  \muu^{(t)} = \Lamb^{(t)}+\beta_t\left(\Lamb^{(t)}-\Lamb^{(t-1)}\right)
  $.
  \STATE Estimate the optimal policy $\pi_{\muu^{(t)}}$ for problem (\ref{eq:pi_lambda evaluation}) through the natural policy gradient subroutine: \label{alg:npg line}
$
	\widetilde \pi_{\muu^{(t)}} \leftarrow \text{NPG}_{Sub}\left(\muu^{(t)}, \widetilde\pi_{\muu^{(t-1)}}, \eta, N_2 \right)
$.
    \STATE Compute the approximate gradient at $\muu^{(t)}$: \label{alg:gradient evaluation line}
    $
	\widetilde \nabla D\left(\muu^{(t)}\right) := U_\mb{g}^{\widetilde \pi_{\muu^{(t)}}}(\rho) -\mb{b} $.
	\STATE Take a gradient projection step at $\mu^{(t)}$: $\Lamb^{(t+1)} \leftarrow \mathcal{P}_{\Lambda}\left(\muu^{(t)} -\alpha_t \widetilde \nabla D\left(\muu^{(t)}\right)\right)$, as defined by (\ref{eq:lambda projection}). \label{alg:gradient projection line}
  \ENDFOR
  \STATE Recover the policy from the dual variable: $\widetilde\pi_{\Lamb^{(N_1)}}\leftarrow \text{NPG}_{Sub}\left(\Lamb^{(N_1)}, \widetilde\pi_{\muu^{(N_1-1)}}, \eta, N_3 \right)$. \label{alg:final recover}
\end{algorithmic}
\end{algorithm}

\begin{algorithm}[tb]
   \caption{Natural Policy Gradient Subroutine (NPG$_{Sub}$)\label{alg:npg}}
\begin{algorithmic}[1]
   \STATE {\bfseries Input:} Multiplier $\Lamb$; initial policy $\pi^{(0)}$; step-size $\eta$; maximum number of iterations $N$.
  \FOR{$t = 0,1,2,\dots, N-1$}
  \STATE Compute the soft Q-function associated with the Lagrangian: $Q_\Lamb^{\pi^{(t)}}$ of policy $\pi^{(t)}$. 
  \STATE Update the policy with $Q_\Lamb^{\pi^{(t)}}$ through (\ref{eq:policy update}).
%   $\pi^{(t+1)}(a \mid s)=\frac{1}{Z^{(t)}(s)}\left(\pi^{(t)}(a \mid s)\right)^{1-\frac{\eta \tau}{1-\gamma}} \exp \left(\frac{\eta Q_{\lambda}^{\pi^{(t)}}(s, a)}{1-\gamma}\right)$
  \ENDFOR
%   \REPEAT
%   \STATE Initialize $noChange = true$.
%   \FOR{$i=1$ {\bfseries to} $m-1$}
%   \IF{$x_i > x_{i+1}$}
%   \STATE Swap $x_i$ and $x_{i+1}$
%   \STATE $noChange = false$
%   \ENDIF
%     % \STATE 123
%   \ENDFOR
%   \UNTIL{$noChange$ is $true$}
\end{algorithmic}
\end{algorithm}

 \section{CONVERGENCE ANALYSIS}\label{sec:convergence}
% \textcolor{red}{maybe merge this section with algorithm}
In this section, we analyze the convergence of Algorithm \ref{alg:dualpg}.
The complete proofs of results in this section are postponed to Appendix \ref{subsec:convergence proof_app}.

A simple insight is that the NPG subroutine in Algorithm \ref{alg:dualpg} computes the optimal
% \textcolor{red}{evaluating a policy means , given a policy, evaluate its value function} 
 policy at a linear rate in the inner loop and the accelerated gradient projection algorithm converges in $\mathcal{O}\left(1/\sqrt{\varepsilon}\right)$ rate in the outer loop, leading to the overall convergence rate of $\widetilde{\mathcal{O}}\left(1/\sqrt{\varepsilon}\right)$.
Then, we obtain the desired $\widetilde{\mathcal{O}}(1/\varepsilon)$ rate in terms of the primal optimality gap and constraint violation by applying Proposition \ref{prop:dual to primal}.
% While this argument catches the main theme, there are much details to be corrected. 
% The main challenge arises from the NPG subroutine (line \ref{alg:npg line}), where we estimate $\pi_{\muu^{(t)}}$ with the NPG subroutine.
% While this argument catches the main theme, there are much details to be corrected. 
% The main challenge arises from the NPG subroutine (line \ref{alg:npg line}), where we estimate $\pi_{\muu^{(t)}}$ with the NPG subroutine.
% As a result, the gradient used in line \ref{alg:gradient projection line} is inexact, and the standard convergence rate $\mathcal{O}\left(1/T^2\right)$ does not apply.
% Fortunately, as the fast convergence of NPG method enables us to efficiently estimate the gradient to high accuracy, it can be shown
% it can be shown 
% we can apply the accelerated gradient projection method with inexact gradient while 
% it is still possible to use the gradient projection algorithm with inexact gradient without hurting the order of convergence. 
% % \textcolor{red}{the language is subject to change.}

The above high-level technique requires subtle technicalities to be addressed here.  
We begin with the following proposition, which evaluates the accuracy of the gradient estimator defined in line \ref{alg:gradient evaluation line} of Algorithm \ref{alg:dualpg}.
% if $\pi$ is a $\varepsilon$-accurate estimation $\pi_\Lamb$, then gradient estimator defined in line \ref{alg:gradient evaluation line} of Algorithm \ref{alg:dualpg} a $\mathcal{O}(\varepsilon)$-accurate  has the same order of inaccuracy as $\pi$.
\begin{proposition}\label{prop:gradient evaluation}
Suppose that $\pi$ is an approximate solution to (\ref{eq:pi_lambda evaluation}) such that $\|\log \pi - \log \pi_\Lamb\|_\infty \leq \varepsilon$. 
The gradient estimator defined as $\widetilde \nabla D\left(\Lamb\right) :=U_\mb{g}^{\pi}(\rho) -\mb{b} =  \left(U_{g_1}^{\pi}(\rho) -b_1,\dots,U_{g_n}^{\pi}(\rho) -b_n  \right)$ satisfies the inequality $\left\|\widetilde \nabla D\left(\lambda\right) - \nabla D\left(\lambda\right)\right\|_2  \leq {\sqrt{n}|\mathcal{A}|\varepsilon}/{(1-\gamma)^2}$.
\end{proposition}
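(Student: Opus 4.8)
The plan is to bound the difference between the approximate utility vector $U_\mb{g}^{\pi}(\rho)$ and the exact one $U_\mb{g}^{\pi_\Lamb}(\rho)$ componentwise, then aggregate over the $n$ coordinates. First I would observe that since $\widetilde \nabla D(\Lamb) - \nabla D(\Lamb) = U_\mb{g}^{\pi}(\rho) - U_\mb{g}^{\pi_\Lamb}(\rho)$ (the $-\mb{b}$ terms cancel), it suffices to control $\left|U_{g_i}^{\pi}(\rho) - U_{g_i}^{\pi_\Lamb}(\rho)\right|$ for each $i\in[n]$. Each $U_{g_i}^\pi(\rho)$ is just a discounted value function with reward $g_i\in[0,1]$, so the Lipschitz estimate of Lemma \ref{lemma:lipschitz value function} (or rather the $\ell_\infty$-type variant used in its proof) applies: I would either invoke the $\|\cdot\|_2$ bound $\left|U_{g_i}^{\pi}(\rho) - U_{g_i}^{\pi_\Lamb}(\rho)\right| \leq \ell_c \|\pi - \pi_\Lamb\|_2$, or, to get the cleaner constant in the statement, use the sharper fact that a value function changes by at most $\frac{1}{(1-\gamma)^2}\max_s \|\pi(\cdot|s) - \pi_\Lamb(\cdot|s)\|_1$ under a policy change.

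The second step is to convert the hypothesis $\|\log\pi - \log\pi_\Lamb\|_\infty \leq \varepsilon$ into a bound on the policy difference itself. Since both $\pi(\cdot|s)$ and $\pi_\Lamb(\cdot|s)$ are probability distributions and their log-ratios are uniformly bounded by $\varepsilon$, I would use the elementary inequality $|e^x - 1| \leq |x|e^{|x|}$ or, more simply in the regime of interest, $|p_a - q_a| = q_a|e^{\log p_a - \log q_a} - 1| \leq q_a \cdot (\text{something like } 2\varepsilon)$ for small $\varepsilon$; summing over $a$ gives $\|\pi(\cdot|s) - \pi_\Lamb(\cdot|s)\|_1 \leq 2\varepsilon$ (roughly), hence $\|\pi - \pi_\Lamb\|_\infty \leq 2\varepsilon$ entrywise and $\|\pi - \pi_\Lamb\|_2 \leq \sqrt{|\mathcal{S}||\mathcal{A}|}\cdot 2\varepsilon$ in the worst case. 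Tracking the per-state $\ell_1$ bound of $|\mathcal{A}|\varepsilon$-type more carefully (e.g. $\sum_a |p_a - q_a| \leq |\mathcal{A}|\varepsilon$ after the appropriate linearization, since each term is at most $\varepsilon$ up to higher-order corrections) is what produces the $|\mathcal{A}|$ factor in the target bound rather than $\sqrt{|\mathcal{A}|}$.

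Combining: $\left|U_{g_i}^{\pi}(\rho) - U_{g_i}^{\pi_\Lamb}(\rho)\right| \leq \frac{1}{(1-\gamma)^2}\cdot |\mathcal{A}|\varepsilon$ for each $i$, and then $\left\|\widetilde \nabla D(\Lamb) - \nabla D(\Lamb)\right\|_2 = \sqrt{\sum_{i=1}^n \left|U_{g_i}^{\pi}(\rho) - U_{g_i}^{\pi_\Lamb}(\rho)\right|^2} \leq \sqrt{n}\cdot \frac{|\mathcal{A}|\varepsilon}{(1-\gamma)^2}$, which is exactly the claimed inequality. The main obstacle is the bookkeeping in the second step: getting a clean linear-in-$\varepsilon$ bound on $\|\pi - \pi_\Lamb\|_1$ (per state) from the $\ell_\infty$ log-bound without picking up spurious exponential factors, and making sure the constant lands at $|\mathcal{A}|$ rather than $\sqrt{|\mathcal{A}|}$ or $2|\mathcal{A}|$ — this likely requires either assuming $\varepsilon$ is small enough to absorb the $e^{|x|}$ factor, or a slightly more careful argument using that $\sum_a (p_a - q_a) = 0$ to cancel first-order terms. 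I would also double-check whether the intended argument routes through Lemma \ref{lemma:lipschitz value function} directly (which would give $\sqrt{|\mathcal{S}||\mathcal{A}|}$ and not match) or through the sharper $\ell_\infty$/$\ell_1$ value-difference lemma from the appendix; the stated constant strongly suggests the latter.
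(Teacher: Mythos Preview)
Your overall structure matches the paper's proof: reduce to componentwise bounds on $|U_{g_i}^\pi(\rho) - U_{g_i}^{\pi_\Lamb}(\rho)|$, use the performance difference lemma (Lemma~\ref{lemma:performance difference}) rather than Lemma~\ref{lemma:lipschitz value function} to get the $|\mathcal{A}|/(1-\gamma)^2$ factor per coordinate, then aggregate via $\|\cdot\|_2 \leq \sqrt{n}\|\cdot\|_\infty$.

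The one place you overcomplicate things is the conversion from the log-bound to a bound on $\pi - \pi_\Lamb$. The paper simply observes that since $(\log x)' = 1/x \geq 1$ on $(0,1]$, the mean value theorem gives $|p - q| \leq |\log p - \log q|$ for any $p,q \in (0,1]$, hence $\|\pi - \pi_\Lamb\|_\infty \leq \|\log\pi - \log\pi_\Lamb\|_\infty \leq \varepsilon$ directly---no exponential factors, no small-$\varepsilon$ assumption, no cancellation via $\sum_a(p_a - q_a) = 0$. With this entrywise bound in hand, the performance difference lemma yields
\[
\left|U_{g_i}^\pi(\rho) - U_{g_i}^{\pi_\Lamb}(\rho)\right| \leq \frac{1}{1-\gamma}\sum_{s} d_\rho^{\pi_\Lamb}(s)\sum_{a} \left|\pi(a\vert s) - \pi_\Lamb(a\vert s)\right| Q_{g_i}^\pi(s,a) \leq \frac{\varepsilon}{1-\gamma}\cdot \frac{|\mathcal{A}|}{1-\gamma},
\]
and the $\sqrt{n}$ aggregation finishes exactly as you wrote.
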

This result can be deduced from the inequality $\|\pi - \pi_\Lamb\|_\infty \leq \|\log \pi - \log \pi_\Lamb\|_\infty \leq \varepsilon$ and the performance difference lemma of an unregularized value function (Lemma \ref{lemma:performance difference}).
% \footnote{We can also bound the error of gradient estimation using the Lipschitz continuity of $U_\mb{g}^\pi(\rho)$, the resulting bound has the same order but slightly different coefficient.}
Recall that $L(\pi,\Lamb) = V_\Lamb^\pi(\rho) - \Lamb^\top\mb{b}$, where $V_\Lamb^\pi(\rho)= V_{\tau}^\pi(\rho) + \Lamb^\top U_\mb{g}^\pi (\rho)$ is the value function with the reward $r_{\Lamb} (s,a) :=r(s,a) + \Lamb^\top\mb{g}(s,a)$ (cf. Section \ref{sec:formulation})).
% that we can relate the Lagrangian function $L(\pi,\Lamb)$ to the regularized value function $V^\pi_\Lamb(\rho)$ as shown in Section \ref{sec:formulation}.
Therefore, Proposition \ref{prop:gradient evaluation} together with (\ref{eq:npg complexity}), implies that running Algorithm \ref{alg:npg} with the step-size $\eta = (1-\gamma)/\tau$ for 
\begin{equation}\label{eq:N_2}
\frac{1}{1-\gamma}\log \left( \frac{2\sqrt{n} \left|\mathcal{A}\right|\left\|Q_\Lamb^{\pi_\Lamb} -Q_\Lamb^{\pi^{(0)}}\right\|_\infty}{\delta (1-\gamma)^2\tau} \right)
\end{equation}
iterations can guarantee a $\delta$-accurate gradient estimation $\widetilde \nabla D\left(\Lamb\right)$, i.e.  $\left\|\widetilde \nabla D\left(\Lamb\right) - \nabla D\left(\Lamb\right)\right\|_2 \leq \delta$.
% \citet{schmidt2011convergence} showed that, when the gradient evaluation is inexact with bounded error 
%The details can be found in Appendix \ref{app:2} (cf. (\ref{eq:N_2_app})).

Below, we present our main convergence result of Algorithm \ref{alg:dualpg}.
\begin{theorem}\label{thm:convergence}
Suppose that Assumptions \ref{assump:slater}, \ref{assump:strong duality}, and \ref{assump:initial distribution} hold. For every $\varepsilon_1>0$, there exist some constants $C_1$ and $C_2 >0$ such that Algorithm \ref{alg:dualpg} with a random initialization and the parameters $\eta = (1-\gamma)/\tau$, $\alpha_k = {1}/{\ell}$, $\beta_k = {(k-1)}/{(k+2)}$, $N_1 = T$, $N_2= \mathcal{O}\left(\log T\right)$ and $N_3=  \mathcal{O}\left(\log 1/\varepsilon_1\right)$ returns a solution pair $(\pi,\Lamb)$ such that
\begin{subequations}
\begin{align}
&D(\Lamb) - D^\star_\tau \leq \varepsilon_0,\label{eq:thm results_0}\\
&\left\|\pi - \pi^\star_\tau\right\|_2 \leq C_1\sqrt{\varepsilon_0} + \varepsilon_1,\label{eq:thm results_1}\\
&\left|V_\tau^{\pi}(\rho) - V^\star_\tau\right| \leq 2\varepsilon_0 + \ell_cC_1C_2\sqrt{\varepsilon_0} + \left(\ell_cC_2 + \frac{3\gamma}{2\tau\sqrt{n}}\right)\varepsilon_1,\label{eq:thm results_3}\\
&\max_{i\in [n]} \left[b_i - U_{g_i}^{\pi}(\rho) \right]_+ \leq \ell_c\left(C_1\sqrt{\varepsilon_0} +\varepsilon_1 \right),\label{eq:thm results_2}
\end{align}
\end{subequations}
% \begin{equation}
% N_2 = \frac{1}{1-\gamma}\log \left( \frac{2\sqrt{n} \left|\mathcal{A}\right|T(T+1)\left(1+C_2+\log|\mathcal{A}|\right)}{ (1-\gamma)^3\tau\ell} \right),\ N_3 = \frac{1}{1-\gamma} \log \left(\frac{2\sqrt{n}\left(1+C_2+\log|\mathcal{A}|\right)}{\varepsilon_1 \tau(1-\gamma)}\right),
% \end{equation}
where 
% $\pi_\tau^\star$ is an optimal policy to (\ref{prob:maxentrl}), and where
\begin{equation}
\varepsilon_0 = \frac{2\ell}{(T+1)^2}\left(\left\|\Lamb^{(0)}-\Lamb^{\star}\right\|_2+1\right)^{2},
\end{equation}
and where
$\ell$ is the smoothness factor defined in (\ref{eq:smoothness factor}) and $\ell_c$ is the Lipschitz constant defined in (\ref{eq:lipschitz value function}).
% \begin{equation}
% \ell_c = \frac{\sqrt{|\mathcal{A}|}}{(1-\gamma)^2},\ C_1 = \sqrt{\frac{2 \ln 2}{\tau\cdot \min_{s\in \mathcal{S} }\rho (s) }},\ C_2 = \left(V^{\star}-V_{\tau}^{\overline{\pi}}(\rho) \right)\left(\sum_{i=1}^n \frac{1}{\xi_i}\right).
% \end{equation}
The total iteration complexity is $N_1\times N_2 +N_3 = \widetilde{\mathcal{O}}(T)$ with primal error bounds ${\mathcal{O}}\left({1}/{T}\right)$ given by (\ref{eq:thm results_1})-(\ref{eq:thm results_2}), and a dual error bound ${\mathcal{O}}\left({1}/{T^2}\right)$ given by (\ref{eq:thm results_0}).
\end{theorem}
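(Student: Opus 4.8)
The plan is to combine three ingredients in sequence: (i) the inexact accelerated gradient projection bound of \citet{schmidt2011convergence} (cf.\ Proposition~\ref{prop:gradient projection_app}) applied to the convex, $\ell$-smooth dual function $D$ on the compact convex set $\Lambda$; (ii) the linear convergence of the entropy-regularized NPG subroutine, which controls the per-step gradient error $\delta$; and (iii) the dual-to-primal conversion of Proposition~\ref{prop:dual to primal}. First I would invoke Proposition~\ref{prop:dual smoothness} to get $\ell$-smoothness of $D$ on $\Lambda$, so that with $\alpha_k = 1/\ell$ and $\beta_k = (k-1)/(k+2)$ the accelerated scheme is well-posed. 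The accumulated-error bound from \citet{schmidt2011convergence} reads, roughly, $D(\Lamb^{(N_1)}) - D_\tau^\star = \mathcal{O}\!\left(\ell \|\Lamb^{(0)}-\Lamb^\star\|_2^2/T^2\right) + (\text{error terms in }\delta)$; the precise shape is what produces the stated $\varepsilon_0 = \frac{2\ell}{(T+1)^2}(\|\Lamb^{(0)}-\Lamb^\star\|_2+1)^2$, where the ``$+1$'' absorbs the noise contribution once $\delta$ is chosen small enough.

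Next I would pin down how small $\delta$ must be. By Proposition~\ref{prop:gradient evaluation}, a policy with $\|\log\pi - \log\pi_{\muu^{(t)}}\|_\infty \le \varepsilon$ yields a gradient error $\|\widetilde\nabla D(\muu^{(t)}) - \nabla D(\muu^{(t)})\|_2 \le \sqrt{n}|\mathcal{A}|\varepsilon/(1-\gamma)^2$; and by \eqref{eq:npg complexity}, NPG with $\eta=(1-\gamma)/\tau$ reaches this in $\mathcal{O}(\log(1/\delta))$ inner iterations, so $N_2 = \mathcal{O}(\log T)$ suffices to drive $\delta$ down to $\mathcal{O}(1/T^2)$ (or any polynomially small level), making the inexactness terms in the Schmidt bound of the same order $\mathcal{O}(1/T^2)$ as the main term --- this is exactly the step that justifies bundling everything into $\varepsilon_0$ and claiming \eqref{eq:thm results_0}. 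One subtlety: the NPG subroutine in line~\ref{alg:npg line} is warm-started from $\widetilde\pi_{\muu^{(t-1)}}$, and I would need the initial Q-function gap $\|Q_{\muu^{(t)}}^{\pi_{\muu^{(t)}}} - Q_{\muu^{(t)}}^{\pi^{(0)}}\|_\infty$ appearing in \eqref{eq:N_2} to be uniformly bounded over $t$; this follows because $\muu^{(t)} \in \widetilde\Lambda$ (a fixed compact set, by the projection discussion around \eqref{eq:lambda projection}) so the Lagrangian rewards $r_{\muu^{(t)}}$ are uniformly bounded, hence so are all the Q-functions --- only a logarithmic dependence on this bound enters $N_2$.

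Then I would feed $\varepsilon_0$ into Proposition~\ref{prop:dual to primal}: since $\Lamb^{(N_1)} \in \Lambda$ and $D(\Lamb^{(N_1)}) - D_\tau^\star \le \varepsilon_0$, its \emph{exact} Lagrangian maximizer $\pi_{\Lamb^{(N_1)}}$ satisfies $\|\pi_{\Lamb^{(N_1)}} - \pi_\tau^\star\|_2 \le C_1\sqrt{\varepsilon_0}$, together with the matching bounds on the value gap and constraint violation from \eqref{eq:primal optimality_3}--\eqref{eq:constraint violation_3}. The remaining gap is that line~\ref{alg:final recover} returns only an approximation $\pi = \widetilde\pi_{\Lamb^{(N_1)}}$ of $\pi_{\Lamb^{(N_1)}}$; running NPG for $N_3 = \mathcal{O}(\log(1/\varepsilon_1))$ iterations gives $\|\log\pi - \log\pi_{\Lamb^{(N_1)}}\|_\infty$ small, hence (via $\|x\|_2 \le \sqrt{|\mathcal{S}||\mathcal{A}|}\|x\|_\infty$ and a short argument bounding $\|\pi - \pi_{\Lamb^{(N_1)}}\|_2$ in terms of the log-distance) $\|\pi - \pi_{\Lamb^{(N_1)}}\|_2 \le \varepsilon_1$. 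A triangle inequality then gives \eqref{eq:thm results_1}; applying Lemma~\ref{lemma:lipschitz value function} with constant $\ell_c$ to both the reward value and each utility propagates $\varepsilon_1$ into \eqref{eq:thm results_3} and \eqref{eq:thm results_2}, with the extra $\frac{3\gamma}{2\tau\sqrt{n}}\varepsilon_1$ term in \eqref{eq:thm results_3} coming from the entropy part of $V_\tau^\pi$ (a Lipschitz estimate on $\mathcal{H}(\rho,\cdot)$ near $\pi_{\Lamb^{(N_1)}}$, using that the policy probabilities are bounded away from $0$ on the relevant region). Finally, totaling the work gives $N_1 N_2 + N_3 = T\cdot\mathcal{O}(\log T) + \mathcal{O}(\log(1/\varepsilon_1)) = \widetilde{\mathcal{O}}(T)$.

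\textbf{Main obstacle.} The delicate point is the bookkeeping between the outer-loop accuracy $\varepsilon_0 \sim 1/T^2$ and the inner-loop tolerance: I must show $N_2 = \mathcal{O}(\log T)$ genuinely suffices, i.e.\ that the Schmidt accumulated-error terms (which scale like $T^2\delta^2 + \delta$ in the worst case) do not blow up. This forces $\delta = \mathcal{O}(1/T^2)$, which is fine since NPG's linear rate makes $N_2$ depend only logarithmically on $1/\delta$ --- but one has to carefully verify the uniform boundedness of the warm-start Q-gaps and of $\|\Lamb^{(0)} - \Lamb^\star\|_2$ (both handled via compactness of $\widetilde\Lambda$) so that all hidden constants are genuinely $t$-independent. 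Everything else is assembling already-established lemmas.
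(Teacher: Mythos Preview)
Your proposal is correct and follows essentially the same route as the paper: smoothness of $D$ via Proposition~\ref{prop:dual smoothness}, the inexact accelerated bound of \citet{schmidt2011convergence} with $\delta = \ell/(T(T+1))$ so that $N_2 = \mathcal{O}(\log T)$ suffices, then Proposition~\ref{prop:dual to primal} for the exact Lagrangian maximizer, a final NPG run of length $N_3$, and triangle inequalities plus Lemma~\ref{lemma:lipschitz value function} to propagate the $\varepsilon_1$ slack. Your identification of the ``main obstacle'' (uniform boundedness of the warm-start $Q$-gaps over $t$, handled via compactness of $\Lambda$) matches the paper's treatment exactly.

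One small point where you diverge: you attribute the $\frac{3\gamma}{2\tau\sqrt{n}}\varepsilon_1$ term in \eqref{eq:thm results_3} to a Lipschitz estimate on the entropy $\mathcal{H}(\rho,\cdot)$. The paper instead obtains it more directly by bounding $|L(\widetilde\pi_{\Lamb^{(T)}},\Lamb^{(T)}) - L(\pi_{\Lamb^{(T)}},\Lamb^{(T)})| = |V_{\Lamb^{(T)}}^{\widetilde\pi_{\Lamb^{(T)}}}(\rho) - V_{\Lamb^{(T)}}^{\pi_{\Lamb^{(T)}}}(\rho)|$ using the \emph{value-function} part of the NPG convergence guarantee (Proposition~\ref{prop:cen_app}), which already relates the value gap to $\|\log\widetilde\pi - \log\pi_{\Lamb}\|_\infty$ with the stated constant; there is no need to separately control the entropy term or to argue that policy probabilities stay bounded away from zero. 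Your entropy-Lipschitz route could be made to work but would yield a different (and messier) constant.
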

The values of the parameters $N_2$, $N_3$, and problem-dependent constants $C_1$, $C_2$ can be found in Theorem \ref{thm:convergence_app}, Appendix \ref{subsec:convergence proof_app}, where we restate the theorem.

Theorem \ref{thm:convergence} shows that Algorithm \ref{alg:dualpg} achieves a global convergence with the rate $\widetilde{\mathcal{O}}\left(1/\varepsilon\right)$.
Specifically, it shows that with $\widetilde{\mathcal{O}}(T)$ number of iterations in total, Algorithm \ref{alg:dualpg} generates a solution with ${\mathcal{O}}(1/T)$ error bounds in terms of the policy (primal variable), primal optimality gap, and constraint violation, as well as ${\mathcal{O}}(1/T^2)$ error in terms of the dual optimality gap.

We briefly describe the intuition behind the proof of Theorem \ref{thm:convergence} below. 
%The complete proof is postponed to Appendix \ref{subsec:convergence proof_app}.
\begin{enumerate}
    \item Firstly, the linear convergence of the natural policy gradient method (cf. Proposition \ref{prop:cen_app}) and Proposition \ref{prop:gradient evaluation} imply that running the NPG subroutine for $N_2 = \mathcal{O}(\log T)$ iterations in the inner loop guarantees a sufficiently accurate estimation of $\nabla D(\Lamb)$. 
    \item Then, we apply the convergence result by \citet{schmidt2011convergence} (refer to Section \ref{subsec:preliminary tools} and Appendix \ref{subsec:gradient projection}), which implies the dual optimality gap $\mathcal{O}\left({1}/{T^2} + T^2\delta^2+\delta \right)$, where $\delta$ is the gradient estimation error. Since the NPG subroutine converges linearly, we can suppress the constant in $N_2 = \mathcal{O}(\log T)$, and make $\delta$ sufficiently small such that the dual optimality gap equals $\mathcal{O}\left({1}/{T^2}\right)$.
    \item Let $\Lamb$ denote the dual variable returned by the for loop.
    % in the last iteration. 
    Running the NPG subroutine in line \ref{alg:final recover} for additional $N_3 = \mathcal{O}\left(\log (1/\varepsilon_1)\right)$ iterations guarantees an $\varepsilon_1$-approximate solution $\pi$ for the Lagrangian maximizer $\pi_\Lamb$. Again, we can make $\varepsilon_1$ sufficiently small by suppressing the constant in $N_3$.
    \item Finally, by applying Proposition \ref{prop:dual to primal} and the triangular inequality, we prove (\ref{eq:thm results_1}), stating that $\pi$ is $\mathcal{O}(1/T)$-optimal. We bound the constraint violation (\ref{eq:thm results_2}) by using the Lipschitz continuity of $U_\mb{g}^\pi(\rho)$ with respect to $\pi$, and bound the primal optimality gap (\ref{eq:thm results_3}) by using some primal-dual properties\footnote{The techniques are similar to those in the proof of Proposition \ref{prop:dual to primal}.}.
    % \footnote{The techniques used to bound the constraint violation and primal optimality gap are similar as the proof of Proposition \ref{prop:dual to primal}.}
    % When recovering the primal variable, i.e. the Lagrangian maximizer $\pi_\Lamb$, from the resulting dual variable $\Lamb$,
    % For the resulting dual variable , we recover the associated primal variable, i.e. , by running  
\end{enumerate}
% \textcolor{red}{A proof sketch with 5 bullets seems too much, can you compress them to 4 bullets.}
% Below are some remarks for our results.
\begin{remark}[Quadratic Convergence of NPG]
Our analysis of Algorithm \ref{alg:dualpg} in this section is inspired by the global linear convergence of the entropy-regularized NPG method.
However, as \citet{cen2021fast} proved, the NPG method achieves a quadratic convergence around the optimum (cf. Proposition \ref{prop:cen_quadratic_app}).
Therefore, it may be possible to improve the hidden constants in $N_2 = \mathcal{O}(\log T)$ and $N_3 = \mathcal{O}\left(\log (1/\varepsilon_1)\right)$ under extra assumptions. 
% \textcolor{red}{Therefore, our analysis in the choice of parameters $N_2$, $N_3$ is not optimal in constants (but optimal in order).
% The quadratic convergence further justified our operations in obtaining a better gradient estimator or a better output policy by suppressing the constants in $N_2$ and $N_3$.} 
% {I dont know what you are talking about}
\end{remark}

% \begin{remark}[Single constraint]
% When there is a single constraint ($n=1$), using a naive bisection method in the outer loop of Algorithm \ref{alg:dualpg} achieves a global linear convergence in terms of both optimality gap and constraint violation. 
% More details will be discussed in Appendix \ref{subsec:convergence proof_app}.
% \end{remark}

% \begin{remark}[Unregularized CMDPs]
So far, we have only studied the entropy-regularized CMDP.
However, adding entropy induces bias to the optimal solution of the standard unregularized CMDP.
A standard way to deal with this mismatch issue is to choose the regularization parameter $\tau$ to be sufficiently small.
The following corollary shows that we can compute a near-optimal policy with the rate $\widetilde{\mathcal{O}}\left(1/\sqrt{T}\right)$ for both the optimality gap and the constraint violation for the standard CMDP.
% Denote $\pi^\star$ as the optimal policy of a standard CMDP.
% We reuse the notations in Section \ref{sec:formulation}, and in particular use $V^\star$ to denote the optimal value function of a standard CMDP.
\begin{corollary}\label{cor:original mdp solution}
Suppose that Assumptions \ref{assump:slater}, \ref{assump:strong duality}, and \ref{assump:initial distribution} hold. 
% Let 
% \begin{equation}
% \tau=\frac{(1-\gamma) \varepsilon}{4 \log |\mathcal{A}|}.
% \end{equation}
{Then, Algorithm \ref{alg:dualpg} with the choice $\tau = \mathcal{O}(\varepsilon)$}
% with the same parameter configuration as in Theorem \ref{thm:convergence} 
computes a solution $\pi$ for the standard CMDP such that
\begin{subequations}
\begin{align}
\left|V^{\pi^\star}(\rho)-V^{\pi}(\rho)\right| & = \mathcal{O}(\varepsilon),\label{eq:cor_1}\\
\max_{i\in [n]} \left[b_i - U_{g_i}^{\pi}(\rho) \right]_+ &= \mathcal{O}(\varepsilon),\label{eq:cor_2}
\end{align}
\end{subequations}
in $\widetilde{\mathcal{O}}\left( 1/\varepsilon^2 \right)$ iterations, where $\pi^\star$ is an optimal policy to the standard CMDP.
\end{corollary}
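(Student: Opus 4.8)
The plan is to treat the entropy term as a uniformly bounded perturbation of the standard CMDP and then compose the size of that perturbation with the guarantee of Theorem~\ref{thm:convergence}. First, the \emph{entropy bias} is controlled: because $-\log\pi(a\mid s)\ge 0$ and $\mathbb{E}[-\log\pi(a_t\mid s_t)\mid s_t]\le \log|\mathcal{A}|$, the discounted entropy satisfies $0\le\mathcal{H}(\rho,\pi)\le\frac{\log|\mathcal{A}|}{1-\gamma}$ for every policy $\pi$, so $0\le V_\tau^\pi(\rho)-V^\pi(\rho)\le\frac{\tau\log|\mathcal{A}|}{1-\gamma}$ uniformly in $\pi$. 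Since \eqref{prob:maxentrl} and the standard CMDP share the same feasible set, feasibility of $\pi^\star$ for the regularized problem gives $V_\tau^\star\ge V_\tau^{\pi^\star}(\rho)\ge V^{\pi^\star}(\rho)$, while feasibility of $\pi_\tau^\star$ for the standard problem gives $V_\tau^\star=V^{\pi_\tau^\star}(\rho)+\tau\mathcal{H}(\rho,\pi_\tau^\star)\le V^{\pi^\star}(\rho)+\frac{\tau\log|\mathcal{A}|}{1-\gamma}$; hence $|V_\tau^\star-V^{\pi^\star}(\rho)|\le\frac{\tau\log|\mathcal{A}|}{1-\gamma}$.

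Next I would run Algorithm~\ref{alg:dualpg} via Theorem~\ref{thm:convergence} with parameters scaled to $\varepsilon$: take $\tau=\Theta(\varepsilon)$ (consistent with the stated $\tau=\mathcal{O}(\varepsilon)$), $\varepsilon_1=\Theta(\tau\varepsilon)=\Theta(\varepsilon^2)$ (so that $N_3=\mathcal{O}(\log(1/\varepsilon))$ and the $\tfrac{3\gamma}{2\tau\sqrt n}\varepsilon_1$ term in \eqref{eq:thm results_3} is $\mathcal{O}(\varepsilon)$), and $T=\widetilde{\Theta}(1/\varepsilon^2)$. The required bookkeeping is: by \eqref{eq:smoothness factor} the smoothness factor is $\ell=\Theta(1/\tau)$; the dual-to-primal constant is $C_1=\Theta(1/\sqrt\tau)$ (it is the constant coming from the quadratic lower bound of Proposition~\ref{prop:quadratic lower bound}, whose curvature is $\propto\tau$), while $C_2$ and $\ell_c$ carry no $\tau$-blowup; and $\|\Lamb^{(0)}-\Lamb^\star\|_2=\mathcal{O}(1)$ because $\Lamb^\star\in\Lambda\subseteq\widetilde\Lambda$ is bounded for $\tau\le 1$. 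Consequently $\varepsilon_0=\Theta(\ell/T^2)=\Theta(1/(\tau T^2))$ and $C_1\sqrt{\varepsilon_0}=\Theta(1/(\tau T))=\Theta(1/(\varepsilon T))$, so the choice $T=\widetilde{\Theta}(1/\varepsilon^2)$ drives every primal bound in \eqref{eq:thm results_1}--\eqref{eq:thm results_2} down to $\mathcal{O}(\varepsilon)$; in particular the returned policy $\pi$ obeys $|V_\tau^{\pi}(\rho)-V_\tau^\star|=\mathcal{O}(\varepsilon)$ and $\max_{i\in[n]}[b_i-U_{g_i}^{\pi}(\rho)]_+=\mathcal{O}(\varepsilon)$. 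The total iteration count is $N_1N_2+N_3=\widetilde{\mathcal{O}}(T)=\widetilde{\mathcal{O}}(1/\varepsilon^2)$, since $N_2=\mathcal{O}(\log T)=\mathcal{O}(\log(1/\varepsilon))$.

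Finally I would combine the two estimates. Because the regularized and unregularized constraints coincide, \eqref{eq:cor_2} is immediate from the constraint-violation bound of the previous paragraph. For \eqref{eq:cor_1}, the triangle inequality bounds $|V^{\pi^\star}(\rho)-V^{\pi}(\rho)|$ by the sum of $|V^{\pi^\star}(\rho)-V_\tau^\star|$, $|V_\tau^\star-V_\tau^{\pi}(\rho)|$, and $|V_\tau^{\pi}(\rho)-V^{\pi}(\rho)|$, where the first and third terms are $\le\frac{\tau\log|\mathcal{A}|}{1-\gamma}=\mathcal{O}(\varepsilon)$ by the first paragraph and the middle term is $\mathcal{O}(\varepsilon)$ by the second. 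I expect the main obstacle to be precisely the constant-tracking in the second paragraph: one must verify that setting $\tau=\Theta(\varepsilon)$ — which simultaneously inflates the smoothness factor $\ell$ and the dual-to-primal constant $C_1$, and feeds the $\tfrac1\tau$-weighted $\varepsilon_1$ term of \eqref{eq:thm results_3} — still leaves the required outer-loop length at $T=\widetilde{\Theta}(1/\varepsilon^2)$ and not a larger power, and that the inner-loop lengths $N_2=\mathcal{O}(\log T)$ and $N_3=\mathcal{O}(\log(1/\varepsilon_1))$, whose arguments in \eqref{eq:N_2} and \eqref{eq:npg complexity} also contain $1/\tau$, do not escape the $\widetilde{\mathcal{O}}$; everything else is a routine composition of the bias bound with Theorem~\ref{thm:convergence}.
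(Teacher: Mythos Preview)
Your proposal is correct and follows essentially the same route as the paper: bound the entropy bias by $\frac{\tau\log|\mathcal{A}|}{1-\gamma}$ via the sandwich inequality \eqref{eq:original and regularized bound}, invoke Theorem~\ref{thm:convergence}, track that $\ell_cC_1C_2\sqrt{\varepsilon_0}=\Theta\!\big(1/(\tau T)\big)$ so that $\tau=\Theta(\varepsilon)$ forces $T=\Theta(1/\varepsilon^2)$, and then combine via the three-term triangle inequality you wrote. The paper's version is slightly looser in that it simply declares the $\varepsilon_0$- and $\varepsilon_1$-terms to be lower order and focuses on the dominant $\sqrt{\varepsilon_0}$ contribution, whereas you explicitly pick $\varepsilon_1=\Theta(\tau\varepsilon)$ to control the $\tfrac{1}{\tau}\varepsilon_1$ term; both land at the same $\widetilde{\mathcal{O}}(1/\varepsilon^2)$ count.
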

The proof of Corollary \ref{cor:original mdp solution} relies on the following sandwich bound
\begin{equation}\label{eq:original and regularized bound}
V^{\pi_{\tau}^{\star}}(\rho) \leq V^{\pi_{\star}}(\rho) \leq V^{\pi_{\tau}^{\star}}(\rho)+{\frac{\tau}{1-\gamma} \log |\mathcal{A}|}.
\end{equation}
%We refer the reader to Appendix \ref{subsec:convergence proof_app} for the details.

% we can compute an $\varepsilon$-optimal 
% e.g. let 
% \begin{equation}
% \tau=\frac{(1-\gamma) \varepsilon}{4 \log |\mathcal{A}|}
% \end{equation}
% % The algorithm and results are concerned with the convergence to the optimal policy in the presence of entropy regularization (\ref{eq:thm results_1}), or to the optimal regularized value function.
% % In certain applications, however, it is of more interests to consider a CMDP without entropy regularization.
% Fortunately, we can guarantee that the optimal value functions, with and without entropy regularization, are approximately equal, by choosing the regularization parameter $\tau$ to be sufficiently small (cf. (\ref{eq:original and regularized bound}) }
% \end{remark}

% \subsection{CMDPs with a Single Constraint}
\section{CMDPS WITH A SINGLE CONSTRAINT}\label{sec:single constraint}
{Since we can convert the dual optimality bound to primal metrics of interests with little extra effort (cf. Proposition \ref{prop:dual to primal}), the overall complexity relies on how fast we can solve the dual problem (\ref{prob:dual problem}).
For the special case where $n=1$, the dual problem amounts to optimizing a convex function on an closed interval, which can be efficiently solved by the bisection method.
Due to space restrictions, we refer the reader to Appendix \ref{subsec:bisection_app} for more details about the algorithm (cf. Algorithm \ref{alg:bisection}).
% by leveraging the smoothness of the dual 
We state the result in the following theorem.}
\begin{theorem}\label{thm:bisection convergence}
{Suppose that Assumptions \ref{assump:slater}, \ref{assump:strong duality}, and \ref{assump:initial distribution} hold. When $n=1$, for every $\varepsilon_0$, $\varepsilon_1>0$, Algorithm \ref{alg:bisection} returns a solution pair $(\pi,\lambda)$ satisfying (\ref{eq:thm results_0})-(\ref{eq:thm results_2}) in at most $\mathcal{O}\left(\log^2(1/\varepsilon_0) + \log(1/\varepsilon_1)\right)$ iterations.}
\end{theorem}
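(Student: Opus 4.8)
The plan is to mirror the two-stage structure used in the proof of Theorem~\ref{thm:convergence}, but with the outer accelerated-gradient loop replaced by a bisection search on the scalar dual variable $\lambda \in \Lambda = [0, (V_\tau^\star - V_\tau^{\overline\pi}(\rho))/\xi_1]$. The key structural fact to exploit is that $D(\lambda)$ is convex and (by Proposition~\ref{prop:dual smoothness}) differentiable with $D'(\lambda) = U_{g_1}^{\pi_\lambda}(\rho) - b_1$, so that the sign of $D'(\lambda)$ tells us on which side of the minimizer $\lambda^\star$ we currently are. Bisection on the sign of $D'$ over an interval of initial length $|\Lambda| = O(1/((1-\gamma)\xi_1))$ halves the bracket each step, so after $k$ outer iterations we have $|\lambda_k - \lambda^\star| \le 2^{-k}|\Lambda|$. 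The first thing I would do is make this precise: run NPG in the inner loop to produce $\widetilde\pi_{\lambda_k}$ with $\|\log\widetilde\pi_{\lambda_k} - \log\pi_{\lambda_k}\|_\infty$ small enough that, via Proposition~\ref{prop:gradient evaluation}, the estimated derivative $\widetilde D'(\lambda_k)$ has the correct sign whenever $\lambda_k$ is not already within the target tolerance of $\lambda^\star$ — this is the one place where inexactness can cause trouble, since near $\lambda^\star$ the true derivative is near zero and its sign is not robust.

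Next I would convert the $O(2^{-k})$ bracket accuracy on $\lambda$ into a bound on the dual optimality gap $D(\lambda_k) - D_\tau^\star$. By $\ell$-smoothness of $D$ on $\Lambda$ (Proposition~\ref{prop:dual smoothness}), $D(\lambda_k) - D_\tau^\star \le \tfrac{\ell}{2}(\lambda_k - \lambda^\star)^2 \le \tfrac{\ell}{2} 4^{-k}|\Lambda|^2$, so to reach $D(\lambda) - D_\tau^\star \le \varepsilon_0$ it suffices to take $k = O(\log(1/\varepsilon_0))$ outer (bisection) iterations. For each such outer step, the inner NPG loop must certify the sign of $\widetilde D'$; the delicate point is the "stopping" behaviour near $\lambda^\star$. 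I would handle this by the standard trick of making the inner accuracy adaptive to the current bracket width: at outer step $k$ ask NPG for a gradient estimate of accuracy $\delta_k \asymp$ (current bracket length), which by the linear convergence rate~\eqref{eq:npg complexity} costs $O(\log(1/\delta_k)) = O(k)$ inner iterations; summing over $k = 1,\dots,O(\log(1/\varepsilon_0))$ gives $\sum_k O(k) = O(\log^2(1/\varepsilon_0))$ total inner iterations, which is exactly the first term in the claimed complexity. (If the sign of $\widetilde D'$ is ever ambiguous because $|\widetilde D'(\lambda_k)|\lesssim\delta_k$, then the smoothness/convexity bound already forces $D(\lambda_k)-D_\tau^\star = O(\delta_k^2/\ell + \ldots)$, i.e. we are done for that branch and can terminate.)

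Finally, once the bisection returns a $\lambda$ with $D(\lambda) - D_\tau^\star \le \varepsilon_0$, I would invoke exactly the recovery step of Algorithm~\ref{alg:dualpg}: run NPG for $N_3 = O(\log(1/\varepsilon_1))$ more iterations to get a policy $\pi$ with $\|\log\pi - \log\pi_\lambda\|_\infty \le \varepsilon_1'$, hence $\|\pi - \pi_\lambda\|_2 \le \varepsilon_1$ after adjusting constants. Then Proposition~\ref{prop:dual to primal} (applicable since Assumptions~\ref{assump:slater}, \ref{assump:strong duality}, \ref{assump:initial distribution} hold and $n=1$) gives $\|\pi_\lambda - \pi_\tau^\star\|_2 \le C_1\sqrt{\varepsilon_0}$ and the corresponding optimality-gap and constraint-violation bounds; the triangle inequality plus the Lipschitz property of $U_{g_1}^\pi(\rho)$ (Lemma~\ref{lemma:lipschitz value function}) and the primal-dual manipulations already used for~\eqref{eq:thm results_1}--\eqref{eq:thm results_2} then yield exactly~\eqref{eq:thm results_0}--\eqref{eq:thm results_2} for the pair $(\pi,\lambda)$. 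The total iteration count is $O(\log^2(1/\varepsilon_0))$ from the bisection-plus-adaptive-inner-loop phase plus $O(\log(1/\varepsilon_1))$ from the recovery step, as stated. I expect the main obstacle to be the rigorous treatment of the sign-detection near $\lambda^\star$ — one must argue that an incorrect sign decision can only occur once the bracket is already small enough that $\varepsilon_0$-optimality is guaranteed regardless of which way we branch, so that a wrong step is harmless; everything else is a routine combination of convexity, smoothness, and the previously established propositions.
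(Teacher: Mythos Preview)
Your proposal is correct and follows essentially the same two-stage structure as the paper: bisection on the sign of $D'(\lambda)$ with an NPG inner loop, followed by a final NPG recovery step and an appeal to Proposition~\ref{prop:dual to primal}.

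The only notable differences are implementation details. First, the paper uses a \emph{fixed} inner accuracy $\varepsilon/2$ for every bisection step (so each of the $O(\log(1/\varepsilon_0))$ outer steps costs $N_1 = O(\log(1/\varepsilon_0))$ inner iterations), whereas you propose an adaptive accuracy $\delta_k$ proportional to the current bracket width; both schedules give the same $O(\log^2(1/\varepsilon_0))$ total, but the fixed choice makes the correctness argument slightly cleaner since the bracket is guaranteed to contain $\lambda^\star$ at every step and no ``harmless wrong branch'' reasoning is needed. Second, for converting bracket/gradient accuracy into a dual-gap bound, the paper uses the convexity inequality $D(\lambda) - D_\tau^\star \le |D'(\lambda)|\cdot|\lambda - \lambda^\star| \le \tfrac{3}{2}\varepsilon\, C_2$ directly from the stopping criterion $|\widetilde\nabla D(\lambda)| < \varepsilon$, rather than your quadratic smoothness bound $(\ell/2)(\lambda - \lambda^\star)^2$; both are valid in the interior case $\lambda^\star \in (0,C_2)$, which the paper isolates explicitly by first checking the signs of $D'$ at the endpoints.
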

{By leveraging the linear convergence, we can derive a result analogous to Corollary \ref{cor:original mdp solution} for the standard CMDP, but with a linear rate.}
\begin{corollary}\label{cor:for bisection}
{Suppose that Assumptions \ref{assump:slater}, \ref{assump:strong duality}, and \ref{assump:initial distribution} hold. 
% Let 
% \begin{equation}
% \tau=\frac{(1-\gamma) \varepsilon}{4 \log |\mathcal{A}|}.
% \end{equation}
Then, Algorithm \ref{alg:bisection} with the choice $\tau = \mathcal{O}(\varepsilon)$
% with the same parameter configuration as in Theorem \ref{thm:bisection convergence} 
computes a solution $\pi$ for the standard CMDP satisfying (\ref{eq:cor_1})-(\ref{eq:cor_2}), in $\mathcal{O}\left(\log^2(1/\varepsilon)\right)$ iterations.}
\end{corollary}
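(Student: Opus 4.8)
The plan is to combine Theorem \ref{thm:bisection convergence} (the $\mathcal{O}(\log^2(1/\varepsilon_0) + \log(1/\varepsilon_1))$ iteration bound for the bisection-based dual method on the entropy-regularized problem) with the sandwich bound (\ref{eq:original and regularized bound}) that quantifies the bias introduced by entropy regularization, exactly mirroring the argument behind Corollary \ref{cor:original mdp solution} but now exploiting the faster convergence. First I would fix a target accuracy $\varepsilon > 0$ and set the regularization weight $\tau = \mathcal{O}(\varepsilon)$, specifically $\tau = (1-\gamma)\varepsilon / (2\log|\mathcal{A}|)$, so that the regularization bias term $\frac{\tau}{1-\gamma}\log|\mathcal{A}|$ in (\ref{eq:original and regularized bound}) is at most $\varepsilon/2$. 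Then I would run Algorithm \ref{alg:bisection} with inner tolerances $\varepsilon_0$ and $\varepsilon_1$ both chosen to be $\Theta(\varepsilon)$ (a small enough constant multiple of $\varepsilon$), which by Theorem \ref{thm:bisection convergence} yields a pair $(\pi,\lambda)$ obeying (\ref{eq:thm results_0})--(\ref{eq:thm results_2}) for the entropy-regularized CMDP.

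Next I would translate the regularized guarantees into unregularized ones. For the optimality gap: from (\ref{eq:thm results_3}) we have $|V_\tau^\pi(\rho) - V_\tau^\star| = \mathcal{O}(\varepsilon)$ once $\varepsilon_0,\varepsilon_1 = \Theta(\varepsilon)$, since the dominant contribution is the $\sqrt{\varepsilon_0}$ term and the constants $C_1, C_2, \ell_c$ are problem-dependent but fixed. I would then peel off the entropy contribution using $0 \le \tau\mathcal{H}(\rho,\pi) \le \frac{\tau}{1-\gamma}\log|\mathcal{A}| \le \varepsilon/2$ to get $|V^\pi(\rho) - V_\tau^\star| = \mathcal{O}(\varepsilon)$, and finally invoke the sandwich (\ref{eq:original and regularized bound}), which gives $|V^{\pi^\star}(\rho) - V_\tau^\star| \le \frac{\tau}{1-\gamma}\log|\mathcal{A}| = \mathcal{O}(\varepsilon)$ (here $V_\tau^\star$ lies within $\mathcal{O}(\varepsilon)$ of both $V^{\pi^\star}(\rho)$ and $V^{\pi_\tau^\star}(\rho)$). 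Combining via the triangle inequality yields (\ref{eq:cor_1}). For the constraint violation, the bound (\ref{eq:thm results_2}) is $\max_i [b_i - U_{g_i}^\pi(\rho)]_+ \le \ell_c(C_1\sqrt{\varepsilon_0} + \varepsilon_1) = \mathcal{O}(\varepsilon)$ directly, since the utility functions $U_{g_i}^\pi$ carry no entropy term; this gives (\ref{eq:cor_2}) with no extra work.

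Finally I would account for the iteration count. With $\varepsilon_0 = \Theta(\varepsilon)$ and $\varepsilon_1 = \Theta(\varepsilon)$, Theorem \ref{thm:bisection convergence} gives $\mathcal{O}(\log^2(1/\varepsilon_0) + \log(1/\varepsilon_1)) = \mathcal{O}(\log^2(1/\varepsilon) + \log(1/\varepsilon)) = \mathcal{O}(\log^2(1/\varepsilon))$ outer iterations; since each outer iteration calls the NPG subroutine for $\mathcal{O}(\log(1/\varepsilon))$ steps (hidden in the per-iteration cost, matching the convention used throughout), the total is still $\mathcal{O}(\log^2(1/\varepsilon))$ in the $\widetilde{\mathcal{O}}$ sense consistent with how the theorem is stated. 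I do not expect a genuine obstacle here — the result is essentially a corollary-of-a-corollary — but the one point requiring care is tracking how the problem-dependent constants $C_1, C_2, \ell, \ell_c$, and in particular the smoothness factor $\ell = \Theta(1/\tau) = \Theta(1/\varepsilon)$, interact with the choice $\tau = \mathcal{O}(\varepsilon)$: one must check that the bisection complexity in Theorem \ref{thm:bisection convergence} depends on $\ell$ only logarithmically (as bisection should), so that the $\Theta(1/\varepsilon)$ blow-up of $\ell$ contributes only an additive $\mathcal{O}(\log(1/\varepsilon))$ and does not spoil the $\mathcal{O}(\log^2(1/\varepsilon))$ rate. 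Verifying this logarithmic dependence — rather than the dual-to-primal conversion, which is immediate from Proposition \ref{prop:dual to primal} — is the only step that warrants explicit attention.
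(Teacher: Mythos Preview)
Your overall strategy matches the paper's exactly: invoke Theorem \ref{thm:bisection convergence} on the regularized problem, then strip off the entropy bias via the sandwich bound (\ref{eq:original and regularized bound}), just as in the proof of Corollary \ref{cor:original mdp solution}. The paper's own proof is literally one sentence pointing back to that argument and noting the improved outer-loop complexity.

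There is, however, a gap in your constant-tracking. You assert that ``the constants $C_1, C_2, \ell_c$ are problem-dependent but fixed,'' but $C_1 = \sqrt{2(1-\gamma)\ln 2/(\tau d)}$ depends on $\tau$: with $\tau = \Theta(\varepsilon)$ you have $C_1 = \Theta(1/\sqrt{\varepsilon})$. Consequently, taking $\varepsilon_0 = \Theta(\varepsilon)$ gives $\ell_c C_1 C_2 \sqrt{\varepsilon_0} = \Theta(1)$, not $\mathcal{O}(\varepsilon)$. Likewise the coefficient $\ell_c C_2 + 3\gamma/(2\tau)$ on $\varepsilon_1$ in (\ref{eq:thm results_3}) is $\Theta(1/\varepsilon)$, so $\varepsilon_1 = \Theta(\varepsilon)$ also yields only an $\mathcal{O}(1)$ bound. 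The paper handles this by choosing the inner tolerances so that $\ell_c C_1 C_2 \sqrt{\varepsilon_0} = \varepsilon/2$, which forces $\varepsilon_0$ (and similarly $\varepsilon_1$) to be a higher power of $\varepsilon$. Because both the bisection outer loop and the NPG inner loop depend on their tolerances only logarithmically, replacing $\Theta(\varepsilon)$ by $\Theta(\varepsilon^k)$ for fixed $k$ merely multiplies the $\log(1/\varepsilon)$ factors by a constant, so the final $\mathcal{O}(\log^2(1/\varepsilon))$ count survives. You correctly flagged at the end that the $\tau$-dependence of the constants is the only delicate point and that $\ell = \Theta(1/\tau)$ enters only logarithmically; you just need to apply the same reasoning to $C_1$ and to the $1/\tau$ factor multiplying $\varepsilon_1$, and adjust the inner tolerances accordingly.
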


{We refer the reader to Appendix \ref{subsec:bisection_app} for the formal statements of Theorem \ref{thm:bisection convergence} and Corollary \ref{cor:for bisection} as well as their proofs.}

% \begin{theorem}
% \end{theorem}

 \section{CONCLUSION}
In this paper, we showed that entropy regularization induces desirable properties to CMDPs from an optimization perspective. In particular, the Lagrangian dual function of CMDPs is smooth and an $\mathcal{O}\left({\varepsilon}\right)$ error bound for the dual optimality gap yields an $\mathcal{O}\left(\sqrt{\varepsilon}\right)$ error bound for the primal optimality gap and the constraint violation. In addition, we proposed a novel accelerated dual-descent algorithm and proved that it achieves a global convergence with the rate $\widetilde{\mathcal{O}}(1/\varepsilon)$ for both the optimality gap and the constraint violation.
It remains as an open question whether similar improved convergence results for the entropy-regularized CMDPs can be obtained with a sample-based policy gradient.

\section*{ACKNOWLEDGEMENTS}
This work was supported by grants from AFOSR, ARO, ONR, NSF and C3.ai Digital Transformation Institute.

\bibliographystyle{apalike}
\bibliography{ref}

\clearpage
\appendix
\thispagestyle{empty}
\onecolumn
\makesupplementtitle

\section{Proofs of Results in Sections \ref{sec:formulation} and \ref{sec:property}}\label{app:1}
% \subsection{Entropy Regularization}\label{subsec:entropy regularization_app}
% \textcolor{red}{More words about the entropy rregularization. Explain \ref{eq:original and regularized bound}}.

% \subsection{Proofs of Results in Sections \ref{sec:formulation} and \ref{sec:property}}\label{subsec:proof sec2 and sec3}

\begin{lemma}[Restatement of Lemma \ref{lemma:lipschitz value function}]\label{lemma:lipschitz value function_app}
For an unregularized value function $V^\pi(\rho)$ with the reward function $r(s,a)\in [0,1]$, it holds that
\begin{equation}\label{eq:lipschitz value function_app}
\left|V^{\pi_1}(\rho) -V^{\pi_2}(\rho) \right| \leq \ell_c \left\|\pi_1 - \pi_2\right\|_2,
\end{equation}
for arbitrary policies $\pi_1$ and $\pi_2$, where $\ell_c = {\sqrt{|\mathcal{A}|}}/{(1-\gamma)^2}$.
\end{lemma}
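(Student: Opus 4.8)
The plan is to bound the difference of two value functions by a difference of policies, using a standard trick: the gradient of $V^\pi(\rho)$ with respect to $\pi$ (viewed as a point in $\Delta(\mathcal{A})^{|\mathcal{S}|}\subset\mathbb{R}^{|\mathcal{S}||\mathcal{A}|}$) has a uniformly bounded $\ell_2$-norm, and then a mean-value / line-integral argument over the segment joining $\pi_1$ and $\pi_2$ gives the Lipschitz estimate. Concretely, for $t\in[0,1]$ set $\pi_t := (1-t)\pi_2 + t\pi_1$, which stays inside the (convex) policy set, and write $V^{\pi_1}(\rho) - V^{\pi_2}(\rho) = \int_0^1 \langle \nabla_\pi V^{\pi_t}(\rho),\, \pi_1 - \pi_2\rangle\, dt$, so that $|V^{\pi_1}(\rho) - V^{\pi_2}(\rho)| \le \sup_{t\in[0,1]} \|\nabla_\pi V^{\pi_t}(\rho)\|_2 \cdot \|\pi_1 - \pi_2\|_2$ by Cauchy--Schwarz. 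It then remains to show $\|\nabla_\pi V^\pi(\rho)\|_2 \le \sqrt{|\mathcal{A}|}/(1-\gamma)^2$ for every $\pi$.

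\textbf{Bounding the gradient.} The key step is the closed-form expression for the policy gradient under the direct (tabular) parameterization. Using the policy gradient theorem, for each $(s,a)$,
\begin{equation}
\frac{\partial V^\pi(\rho)}{\partial \pi(a\mid s)} = \frac{1}{1-\gamma}\, d_\rho^\pi(s)\, Q^\pi(s,a).
\end{equation}
Since $Q^\pi(s,a)\in[0,1/(1-\gamma)]$ and $\sum_s d_\rho^\pi(s) = 1$ with $d_\rho^\pi(s)\ge 0$, we can compute
\begin{equation}
\|\nabla_\pi V^\pi(\rho)\|_2^2 = \sum_{s}\sum_{a}\left(\frac{d_\rho^\pi(s)\, Q^\pi(s,a)}{1-\gamma}\right)^2 \le \frac{1}{(1-\gamma)^4}\sum_s d_\rho^\pi(s)^2 \sum_a 1 \le \frac{|\mathcal{A}|}{(1-\gamma)^4},
\end{equation}
where the last inequality uses $\sum_s d_\rho^\pi(s)^2 \le (\sum_s d_\rho^\pi(s))^2 = 1$. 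Taking square roots yields $\|\nabla_\pi V^\pi(\rho)\|_2 \le \sqrt{|\mathcal{A}|}/(1-\gamma)^2 = \ell_c$, and combining with the mean-value bound above finishes the proof.

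\textbf{Main obstacle.} There is no deep obstacle here; the statement is essentially a smoothness/bounded-gradient computation. The one point requiring a little care is the precise constant: one must use the $Q^\pi \le 1/(1-\gamma)$ bound (valid since $r\in[0,1]$) together with the normalization $\sum_s d_\rho^\pi(s) = 1$ rather than a crude per-state bound, otherwise the factor $\sqrt{|\mathcal{S}|}$ creeps in; the trick is exactly that $\sum_s d_\rho^\pi(s)^2 \le 1$. A secondary (purely technical) point is justifying differentiation under the integral / interchange of gradient and the infinite-horizon sum defining $V^\pi$, which is standard since $\gamma<1$ makes everything absolutely convergent and smooth in $\pi$. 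I would relegate that justification to a one-line remark and cite the policy gradient theorem.
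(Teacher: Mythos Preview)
Your proof is correct and follows essentially the same approach as the paper: both use the direct-parameterization policy gradient formula $\partial V^\pi(\rho)/\partial\pi(a\mid s) = \frac{1}{1-\gamma}d_\rho^\pi(s)Q^\pi(s,a)$, bound $Q^\pi\le 1/(1-\gamma)$, and exploit $\|d_\rho^\pi\|_2\le\|d_\rho^\pi\|_1=1$ to get $\|\nabla_\pi V^\pi(\rho)\|_2\le\sqrt{|\mathcal{A}|}/(1-\gamma)^2$, then apply a mean-value bound. The only cosmetic difference is that you write out the line-integral argument explicitly, whereas the paper states the $\sup_\pi\|\nabla_\pi V^\pi(\rho)\|_2$ bound directly.
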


\begin{proof}
It follows from the policy gradient for the direct parameterization (Lemma \ref{lemma:policy gradient_direct}) that
\begin{equation}
\frac{\partial V^{\pi}(\rho)}{\partial \pi(a \vert s)}=\frac{1}{1-\gamma} d_{\rho}^{\pi}(s) Q^{\pi}(s, a).
\end{equation}
Thus, we can bound $\nabla_\pi V^\pi (\rho)$ as
\begin{equation}\label{eq:lipschitz value function computation}
\begin{aligned}
\left\|\nabla_\pi V^\pi (\rho)\right\|_2 
&= \frac{1}{1-\gamma}\sqrt{\sum_{s\in \mathcal{S}, a\in \mathcal{A}} \left(d_{\rho}^{\pi}(s)Q^\pi (s,a)\right)^2}\\
&\leq \frac{\max_{s\in \mathcal{S}, a\in \mathcal{A}} Q^\pi (s,a)}{1-\gamma}\sqrt{\sum_{s\in \mathcal{S}, a\in \mathcal{A}} \left(d_{\rho}^{\pi}(s)\right)^2}\\
&\overset{(i)}\leq \frac{\sqrt{|\mathcal{A}|}}{(1-\gamma)^2} \left\|d^\pi_\rho (\cdot)\right\|_2\\
&\overset{(ii)}\leq \frac{\sqrt{|\mathcal{A}|}}{(1-\gamma)^2} =: \ell_c,
\end{aligned}
\end{equation}
where $(i)$ uses $Q^\pi (s,a) \leq {1}/{1-\gamma}$ and $(ii)$ is because of $\left\|d^\pi_\rho (\cdot)\right\|_2 \leq \left\|d^\pi_\rho (\cdot)\right\|_1 = 1$.
Then, (\ref{eq:lipschitz value function_app}) follows from
\begin{equation}
\left|V^{\pi_1}(\rho) -V^{\pi_2}(\rho) \right| \leq \sup_\pi \left\{\left\|\nabla_\pi V^\pi (\rho)\right\|_2  \right\} \left\|\pi_1 - \pi_2\right\|_2 \leq \ell_c \left\|\pi_1 - \pi_2\right\|_2.
\end{equation}
This completes the proof.
\end{proof}

% \begin{lemma}[Restatement of Lemma \ref{lemma:strong duality}]\label{lemma:strong duality_app}
% Under Assumption \ref{assump:slater}, there exist a primal-dual pair $(\pi_\tau^\star, \Lamb^\star)$ such that $V_\tau^\star = D_\tau^\star = L\left(\pi_\tau^\star,\Lamb^\star\right)$.
% \end{lemma}
% \begin{proof}
% {We refer the reader to \citep{paternain2019safe} for a proof of the strong duality.}
% \end{proof}

\begin{lemma}[Restatement of Lemma \ref{lemma:dual boundedness}]\label{lemma:dual boundedness_app}
Under Assumptions \ref{assump:slater} and \ref{assump:strong duality}, it holds that 
\begin{equation}
0 \leq \lambda^{\star}_i \leq\frac{V_\tau^\star-V_{\tau}^{\overline{\pi}}(\rho)}{\xi_i},\quad \forall i \in [n].
\end{equation}
\end{lemma}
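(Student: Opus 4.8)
The plan is to exploit the strong duality equality $V_\tau^\star = L(\pi_\tau^\star, \Lamb^\star)$ together with the feasibility of the Slater point $\overline\pi$. First I would write out the Lagrangian at the pair $(\overline\pi, \Lamb^\star)$ and use that $\pi_\tau^\star$ maximizes $L(\cdot, \Lamb^\star)$, so that
\begin{equation}
V_\tau^\star = L(\pi_\tau^\star, \Lamb^\star) = \max_{\pi\in\Pi} L(\pi,\Lamb^\star) \geq L(\overline\pi, \Lamb^\star) = V_\tau^{\overline\pi}(\rho) + (\Lamb^\star)^\top\left(U_\mb{g}^{\overline\pi}(\rho) - \mb{b}\right).
\end{equation}
Rearranging gives $(\Lamb^\star)^\top\left(U_\mb{g}^{\overline\pi}(\rho) - \mb{b}\right) \leq V_\tau^\star - V_\tau^{\overline\pi}(\rho)$.

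Next I would invoke the Slater condition (Assumption \ref{assump:slater}), which gives $U_\mb{g}^{\overline\pi}(\rho) - \mb{b} \geq \boldsymbol\xi > 0$ entrywise, and the fact that $\Lamb^\star \geq 0$ (it is a minimizer of $D$ over the nonnegative orthant, so its nonnegativity is part of the definition). Since every term $\lambda_i^\star \left(U_{g_i}^{\overline\pi}(\rho) - b_i\right)$ is nonnegative, dropping all but the $i$-th term yields
\begin{equation}
\lambda_i^\star \xi_i \leq \lambda_i^\star\left(U_{g_i}^{\overline\pi}(\rho) - b_i\right) \leq (\Lamb^\star)^\top\left(U_\mb{g}^{\overline\pi}(\rho) - \mb{b}\right) \leq V_\tau^\star - V_\tau^{\overline\pi}(\rho),
\end{equation}
and dividing by $\xi_i > 0$ gives the claimed bound $\lambda_i^\star \leq \left(V_\tau^\star - V_\tau^{\overline\pi}(\rho)\right)/\xi_i$. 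The lower bound $\lambda_i^\star \geq 0$ is immediate from $\Lamb^\star = \argmin_{\Lamb\geq 0} D(\Lamb)$.

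I do not anticipate a serious obstacle here; this is the textbook argument bounding optimal multipliers via a Slater point. The only points requiring a little care are: (i) confirming that the right-hand side $V_\tau^\star - V_\tau^{\overline\pi}(\rho)$ is genuinely nonnegative, which follows because $\pi_\tau^\star$ is feasible and optimal for \eqref{prob:maxentrl} while $\overline\pi$ is merely feasible, so $V_\tau^\star \geq V_\tau^{\overline\pi}(\rho)$; and (ii) making sure the chain of inequalities isolating a single coordinate is valid, which it is precisely because each summand $\lambda_j^\star(U_{g_j}^{\overline\pi}(\rho) - b_j)$ is a product of two nonnegative quantities. Thus the whole proof is a couple of lines of rearrangement once strong duality is assumed.
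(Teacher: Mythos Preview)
Your proposal is correct and follows essentially the same argument as the paper: evaluate the dual function (equivalently, the Lagrangian maximized over $\pi$) at the Slater point $\overline\pi$, use strong duality to cap this by $V_\tau^\star$, and then exploit $\Lamb^\star\geq 0$ and $U_\mb{g}^{\overline\pi}(\rho)-\mb{b}\geq\boldsymbol\xi$ to isolate each coordinate. The paper phrases it slightly more generally by first bounding any $\Lamb$ in the sublevel set $\{D(\Lamb)\leq C\}$ and then specializing to $C=V_\tau^\star$, but the core inequality chain is identical to yours.
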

\begin{proof}
Let $C\in \mbb{R}$. For every $\Lamb\geq 0$ such that $D(\Lamb) \leq C$, one can write
\begin{equation}\label{eq:dual bound}
\begin{aligned}
C\geq D(\Lamb) &\overset{(i)}{\geq} V_\tau^{\overline \pi}(\rho) + \sum_{i=1}^n \lambda_i \left(U_{g_i}^{\overline \pi}(\rho)-b_i \right) \\
&\overset{(ii)}{\geq} V_\tau^{\overline \pi}(\rho) + \sum_{i=1}^{n} \lambda_i \xi_i,
\end{aligned}
\end{equation}
where $(i)$ follows from the definition of $D(\Lamb)$ and $(ii)$ is due to Assumption \ref{assump:slater}.
Since $\boldsymbol {\xi} >0$ and $\Lamb \geq 0$, (\ref{eq:dual bound}) gives rise to the bound $0 \leq \lambda_i \leq\left(C-V_{\tau}^{\overline{\pi}}(\rho)\right) / \xi_i$, for $i = 1,2, \dots, n$.
Now, by letting $C = V_\tau^\star$, it results from the strong duality that $\left\{\Lamb \geq 0 \mid D(\Lamb) \leq C\right\}$ becomes the set of optimal dual variables.
This completes the proof.
\end{proof}

\begin{proposition}[Restatement of Proposition \ref{prop:quadratic lower bound}]\label{prop:quadratic lower bound_app}
For all policy $\pi$ and $\Lamb\geq 0$, it holds that
\begin{equation}
L(\pi_\Lamb, \Lamb)-L(\pi,\Lamb) \geq \frac{\tau d }{2(1-\gamma)\ln 2}\left\|\pi - \pi_\Lamb\right\|_2^2.
\end{equation}
\end{proposition}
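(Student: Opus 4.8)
The plan is to show that $L(\cdot,\Lamb)$ admits a quadratic growth bound around its maximizer $\pi_\Lamb$, exploiting the fact that $L(\pi,\Lamb)$ is (up to the constant $-\Lamb^\top\mb{b}$) an entropy-regularized value function $V_\Lamb^\pi(\rho)$ with reward $r_\Lamb = r + \Lamb^\top\mb{g}$ and regularization weight $\tau$. The key available tool is the soft sub-optimality lemma (Lemma~\ref{lemma:soft sub-optimality} in the appendix), which for an entropy-regularized MDP expresses the sub-optimality $V_\Lamb^{\pi_\Lamb}(\rho) - V_\Lamb^\pi(\rho)$ in terms of a $\tau$-weighted, discounted-visitation-reweighted KL divergence between $\pi$ and $\pi_\Lamb$; schematically, $L(\pi_\Lamb,\Lamb) - L(\pi,\Lamb) = \tfrac{\tau}{1-\gamma}\,\mbb{E}_{s\sim d_\rho^{\pi}}\big[\mathrm{KL}\big(\pi(\cdot|s)\,\|\,\pi_\Lamb(\cdot|s)\big)\big]$ (possibly with the roles of the two policies in the KL arranged as dictated by that lemma).

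First I would invoke this soft sub-optimality identity to rewrite the left-hand side exactly as a nonnegative $\tau$-scaled expected KL divergence under the visitation distribution $d_\rho^\pi$. Second, I would use Assumption~\ref{assump:initial distribution} to lower-bound $d_\rho^\pi(s) \geq d$ for every $s$, which turns the expectation over $d_\rho^\pi$ into at least $d$ times the unweighted sum $\sum_{s} \mathrm{KL}(\pi(\cdot|s)\|\pi_\Lamb(\cdot|s))$. Third, I would apply the standard Pinsker-type lower bound on KL divergence from \cite{cover1999elements}: for probability distributions $p,q$ on $\mc{A}$, $\mathrm{KL}(p\|q) \geq \tfrac{1}{2\ln 2}\|p-q\|_1^2 \geq \tfrac{1}{2\ln 2}\|p-q\|_2^2$ (the $\ln 2$ appearing because the entropy/KL here is in bits, consistent with the paper's $\log$ convention and the $\ln 2$ in the final constant). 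Summing over states, $\sum_s \mathrm{KL}(\pi(\cdot|s)\|\pi_\Lamb(\cdot|s)) \geq \tfrac{1}{2\ln 2}\sum_s \|\pi(\cdot|s)-\pi_\Lamb(\cdot|s)\|_2^2 = \tfrac{1}{2\ln 2}\|\pi-\pi_\Lamb\|_2^2$, where the last equality is just the definition of the Euclidean norm on the stacked policy vector in $\mbb{R}^{|\mc{S}||\mc{A}|}$. Combining the three steps yields $L(\pi_\Lamb,\Lamb)-L(\pi,\Lamb)\geq \tfrac{\tau d}{2(1-\gamma)\ln 2}\|\pi-\pi_\Lamb\|_2^2$, which is exactly the claim.

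The main obstacle, and the place where care is needed, is the precise form of the soft sub-optimality lemma: I must check whether it reweights by $d_\rho^{\pi}$ or by $d_\rho^{\pi_\Lamb}$, and in which order the KL argument appears, since Pinsker's inequality is symmetric in the $\ell_1$/$\ell_2$ bound but the visitation measure in the identity is not symmetric. If the identity is stated with $d_\rho^{\pi_\Lamb}$ rather than $d_\rho^{\pi}$, Assumption~\ref{assump:initial distribution} still applies uniformly and the argument goes through verbatim. A secondary point is bookkeeping of the $\tfrac{1}{1-\gamma}$ factor coming from the effective horizon in the soft sub-optimality lemma, and confirming that $r_\Lamb$ being possibly larger than $1$ (since $\Lamb\geq 0$) is irrelevant — the soft sub-optimality identity and Pinsker's inequality hold for any bounded reward, so no boundedness of $r_\Lamb$ is required. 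Everything else is routine.
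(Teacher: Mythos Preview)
Your proposal is correct and follows essentially the same approach as the paper's proof: invoke the soft sub-optimality identity (Lemma~\ref{lemma:soft sub-optimality}) to write $L(\pi_\Lamb,\Lamb)-L(\pi,\Lamb)=\tfrac{\tau}{1-\gamma}\sum_s d_\rho^\pi(s)\,D_{\mathrm{KL}}[\pi(\cdot|s)\|\pi_\Lamb(\cdot|s)]$, apply the Pinsker-type bound $D_{\mathrm{KL}}\geq \tfrac{1}{2\ln 2}\|\cdot\|_1^2\geq \tfrac{1}{2\ln 2}\|\cdot\|_2^2$, and use Assumption~\ref{assump:initial distribution} to replace $d_\rho^\pi(s)$ by $d$. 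The only cosmetic difference is that the paper applies Pinsker before the visitation lower bound whereas you swap the order; your concerns about which visitation measure appears and the KL argument order are well placed and resolve exactly as you anticipate.
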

% i.e. there exists a quadratic lower bound  for some $c>0$ and .
\begin{proof}
Recall that $L(\pi,\Lamb) = V_\Lamb^\pi(\rho) - \Lamb^\top\mb{b}$, where $V_\Lamb^\pi(\rho) = V_{\tau}^\pi(\rho) + \Lamb^\top U_\mb{g}^\pi (\rho)$ is the value function with the reward $r_{\Lamb} (s,a) :=r(s,a) + \Lamb^\top\mb{g}(s,a)$ (cf. Section \ref{sec:formulation}).
The soft sub-optimality gap (Lemma \ref{lemma:soft sub-optimality}) then gives
\begin{equation}\label{eq:lagrangian sub-optimality}
L(\pi_\Lamb, \Lamb)-L(\pi,\Lamb) =  V_\Lamb^{\pi_\Lamb}(\rho) -V_\Lamb^\pi(\rho) =  \frac{\tau }{1-\gamma}\sum_{s\in \mathcal{S}}d_\rho^\pi (s) D_\text{KL} \left[\pi(\cdot \vert s) \mid \pi_\Lamb (\cdot \vert s) \right],
\end{equation}
where $D_\text{KL} \left[P(\cdot) \mid Q(\cdot) \right] := \sum _{x} P(x)\left(\log P(x) -\log Q(x)\right)$ is the KL divergence between probability distributions $P(\cdot)$ and $Q(\cdot)$.
Now, we use a well-known bound relating the KL divergence to the vector 1-norm \citep{cover1999elements}:
\begin{equation}\label{eq:klinequality}
D_\text{KL}\left[P(\cdot) \mid Q(\cdot) \right] \geq \frac{1}{2 \ln 2} \|P(\cdot)-Q(\cdot)\|_1^2.
\end{equation}
Combine (\ref{eq:lagrangian sub-optimality}) and (\ref{eq:klinequality}) yields 
\begin{equation}
L(\pi_\Lamb, \Lamb)-L(\pi,\Lamb) \geq  \frac{\tau}{2(1-\gamma)\ln 2}\sum_{s\in \mathcal{S}}d_\rho^\pi (s)\|\pi(\cdot\vert s)-\pi_\Lamb(\cdot\vert s)\|_1^2 \overset{(i)}\geq  \frac{\tau d}{2(1-\gamma)\ln 2}\left\|\pi - \pi_\Lamb\right\|_2^2 ,
\end{equation}
where (i) is due to $\|\cdot\|_1 \geq \|\cdot\|_2$ and Assumption \ref{assump:initial distribution}.
% , which results from Assumption \ref{assump:initial distribution}.
This completes the proof.
\end{proof}

\begin{proposition}[Restatement of Proposition \ref{prop:dual smoothness}]\label{prop:dual smoothness_app}
Under Assumption \ref{assump:initial distribution},
 the dual function $D(\Lamb)$ satisfies the following properties:
\begin{enumerate}
    \item  $D(\Lamb)$ is differentiable and
\begin{equation}
\nabla D(\Lamb) = U_\mb{g}^{\pi_\Lamb}(\Lamb) - \mb{b}= \left(U_{g_1}^{\pi_\Lamb}(\Lamb) - b_1, \dots, U_{g_n}^{\pi_\Lamb}(\Lamb) - b_n\right).
\end{equation}
\item 
% Under Assumption \ref{assump:initial distribution}, 
$D(\Lamb)$ is $\ell$-smooth on $\Lambda$, where 
\begin{equation}\label{eq:smoothness factor_app}
\ell = \frac{2\times \ln 2 \times \left(n|\mathcal{A}|+(1-\gamma)^2\sqrt{n|\mathcal{A}|}\right)}{\tau (1-\gamma)^3 d}.
\end{equation}
\end{enumerate}
\end{proposition}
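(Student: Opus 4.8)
\textbf{Proof plan for Proposition~\ref{prop:dual smoothness_app}.}
The plan is to prove the two claims in sequence, relying on Proposition~\ref{prop:quadratic lower bound_app} as the key structural input.

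For part~(1), I would first invoke Danskin-type reasoning: since the Lagrangian $L(\pi,\Lamb) = V_\tau^\pi(\rho) + \Lamb^\top(U_\mb{g}^\pi(\rho)-\mb{b})$ is jointly continuous and \emph{linear} (hence concave and differentiable) in $\Lamb$ for each fixed $\pi$, with $\nabla_\Lamb L(\pi,\Lamb) = U_\mb{g}^\pi(\rho)-\mb{b}$, the dual function $D(\Lamb) = \max_\pi L(\pi,\Lamb)$ is convex and admits $U_\mb{g}^\pi(\rho)-\mb{b}$ as a subgradient for any maximizer $\pi$. The differentiability then reduces to showing the maximizer $\pi_\Lamb$ is unique: this is exactly what Proposition~\ref{prop:quadratic lower bound_app} gives, since if $\pi$ and $\pi_\Lamb$ were both maximizers, the left side $L(\pi_\Lamb,\Lamb)-L(\pi,\Lamb)$ would be zero while the right side $\frac{\tau d}{2(1-\gamma)\ln 2}\|\pi-\pi_\Lamb\|_2^2$ is strictly positive unless $\pi=\pi_\Lamb$. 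Hence the subdifferential is a singleton and $\nabla D(\Lamb) = U_\mb{g}^{\pi_\Lamb}(\rho)-\mb{b}$, which I would write as $U_\mb{g}^{\pi_\Lamb}(\Lamb)-\mb{b}$ to match the notation in the statement.

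For part~(2), the strategy is the two-step Lipschitz decomposition sketched after Proposition~\ref{prop:dual smoothness}. Step~A: bound $\|\nabla D(\Lamb_1)-\nabla D(\Lamb_2)\|_2 = \|U_\mb{g}^{\pi_{\Lamb_1}}(\rho)-U_\mb{g}^{\pi_{\Lamb_2}}(\rho)\|_2$. Writing this componentwise and applying Lemma~\ref{lemma:lipschitz value function_app} to each utility $g_i$ (which is also $[0,1]$-valued), each coordinate difference is at most $\ell_c\|\pi_{\Lamb_1}-\pi_{\Lamb_2}\|_2$, so the $\ell_2$-norm over the $n$ coordinates is at most $\sqrt{n}\,\ell_c\,\|\pi_{\Lamb_1}-\pi_{\Lamb_2}\|_2 = \frac{\sqrt{n|\mathcal{A}|}}{(1-\gamma)^2}\|\pi_{\Lamb_1}-\pi_{\Lamb_2}\|_2$. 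Step~B: establish $\|\pi_{\Lamb_1}-\pi_{\Lamb_2}\|_2 \le \ell_\Lambda\|\Lamb_1-\Lamb_2\|_2$. To do this I would use a ``sandwich'' argument driven by Proposition~\ref{prop:quadratic lower bound_app}. Apply the quadratic lower bound at $\Lamb_1$ with test policy $\pi_{\Lamb_2}$ and at $\Lamb_2$ with test policy $\pi_{\Lamb_1}$:
\begin{align*}
L(\pi_{\Lamb_1},\Lamb_1)-L(\pi_{\Lamb_2},\Lamb_1) &\ge \kappa\|\pi_{\Lamb_1}-\pi_{\Lamb_2}\|_2^2,\\
L(\pi_{\Lamb_2},\Lamb_2)-L(\pi_{\Lamb_1},\Lamb_2) &\ge \kappa\|\pi_{\Lamb_1}-\pi_{\Lamb_2}\|_2^2,
\end{align*}
with $\kappa := \frac{\tau d}{2(1-\gamma)\ln 2}$. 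Adding these and noting that the cross terms combine to $(\Lamb_1-\Lamb_2)^\top(U_\mb{g}^{\pi_{\Lamb_1}}(\rho)-U_\mb{g}^{\pi_{\Lamb_2}}(\rho))$ (the $V_\tau$ parts cancel, the $-\Lamb^\top\mb{b}$ parts cancel against themselves), I get
$2\kappa\|\pi_{\Lamb_1}-\pi_{\Lamb_2}\|_2^2 \le (\Lamb_1-\Lamb_2)^\top(U_\mb{g}^{\pi_{\Lamb_1}}(\rho)-U_\mb{g}^{\pi_{\Lamb_2}}(\rho)) \le \|\Lamb_1-\Lamb_2\|_2\cdot\sqrt{n}\,\ell_c\|\pi_{\Lamb_1}-\pi_{\Lamb_2}\|_2$,
where the last inequality reuses Step~A. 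Dividing through by $\|\pi_{\Lamb_1}-\pi_{\Lamb_2}\|_2$ yields $\|\pi_{\Lamb_1}-\pi_{\Lamb_2}\|_2 \le \frac{\sqrt{n}\,\ell_c}{2\kappa}\|\Lamb_1-\Lamb_2\|_2$, i.e.\ $\ell_\Lambda = \frac{\sqrt{n}\,\ell_c}{2\kappa} = \frac{\ln 2\,\sqrt{n|\mathcal{A}|}}{\tau d (1-\gamma)}$. Combining Steps~A and~B gives $\ell = \sqrt{n}\,\ell_c\cdot\ell_\Lambda = \frac{\ln 2\, n|\mathcal{A}|}{\tau d(1-\gamma)^3}$; the extra additive term $(1-\gamma)^2\sqrt{n|\mathcal{A}|}$ in the claimed $\ell$ suggests the authors instead pass through a perturbation-analysis estimate of $\ell_\Lambda$ (as hinted by their reference to \citet{bonnans2013perturbation}) that carries a slightly looser constant, so I would be prepared to substitute that route if the clean sandwich bound does not reproduce the stated formula exactly.

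\textbf{Main obstacle.} The delicate point is Step~B: the quadratic-growth inequality of Proposition~\ref{prop:quadratic lower bound_app} only controls $L$ as $\pi$ moves away from $\pi_\Lamb$ at a \emph{fixed} $\Lamb$, so I must be careful that the sandwich argument above genuinely closes without appealing to any global strong concavity of $L$ in $\pi$ uniformly over $\Lamb$ (which does not hold). Verifying that the cross terms assemble exactly into the bilinear pairing $(\Lamb_1-\Lamb_2)^\top(U_\mb{g}^{\pi_{\Lamb_1}}-U_\mb{g}^{\pi_{\Lamb_2}})$ with the correct sign, and that $\Lamb_1,\Lamb_2\in\Lambda$ (so $\Lamb\ge 0$ and Proposition~\ref{prop:quadratic lower bound_app} applies), is the technical crux; everything else is bookkeeping with the constant $\ell_c$ from Lemma~\ref{lemma:lipschitz value function_app}.
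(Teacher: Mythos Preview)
Your proposal is correct, and for Part~(1) and Step~A of Part~(2) it matches the paper. For Step~B, however, you take a genuinely different route.

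The paper does \emph{not} use your sandwich argument. Instead, it deduces from Proposition~\ref{prop:quadratic lower bound_app} that $\pi_\Lamb$ is a second-order strict maximizer with curvature bound $w_1=\frac{\tau d}{2(1-\gamma)\ln 2}$, separately bounds the cross-derivative $\|\nabla^2_{\pi\Lamb}L(\pi,\Lamb)\|_F\le\frac{\sqrt{n|\mathcal{A}|}}{(1-\gamma)^2}$ via the direct-parameterization policy gradient, and then invokes a black-box Lipschitz-stability result for parametric optimizers from \citet{bonnans2013perturbation} (stated as Proposition~\ref{prop:parametric optimization}) to conclude $\ell_\Lambda=w_2/w_1$ with $w_2=\frac{\sqrt{n|\mathcal{A}|}}{(1-\gamma)^2}+1$. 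The ``$+1$'' in $w_2$ is precisely what produces the additive $(1-\gamma)^2\sqrt{n|\mathcal{A}|}$ term in the stated $\ell$, and the perturbation lemma's normalization contributes the leading factor~$2$.

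Your three-point inequality is more elementary and self-contained: it uses only Proposition~\ref{prop:quadratic lower bound_app} and Lemma~\ref{lemma:lipschitz value function_app}, avoids any Hessian computation or appeal to external perturbation theory, and yields the sharper constant $\ell=\frac{\ln 2\,n|\mathcal{A}|}{\tau d(1-\gamma)^3}$. The paper's route has the advantage of reproducing the exact formula~\eqref{eq:smoothness factor_app} and of illustrating how the quadratic growth translates to a second-order local condition; your route has the advantage of directness and a better constant. Both are valid, and you were right to flag the discrepancy in the final $\ell$ as an artifact of the perturbation-analysis path rather than an error.
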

The proof of Proposition \ref{prop:dual smoothness_app} relies on the following result by \citet{bonnans2013perturbation}.
\begin{proposition}[Lipschitz stability of parametric local maximizersz]\label{prop:parametric optimization}
Given a set $T\subset \mbb{R}^p$, consider a parametric optimization problem $P(t)$ with $t\in T$, stated as 
\begin{equation}
\max_{x\in F} \ f(x,t) \quad
s.t. \ F = \{x\in \mathbb{R}^n  \mid h_j(x)\leq 0,\ j=1,2,\dots,m\},
\end{equation}
where $f$, $h_j$ are twice continuously differentiable and $F \neq \emptyset$.
For every $\overline{t}\in T$, if $\overline x = x\left(\overline t\right)$ is a strict local maximizer of $P(\overline t)$ of order 2, i.e. $\nabla^2_{xx} f(\overline x, \overline t) \succeq w_1I_n$ for some $w_1>0$, 
then there exist $\varepsilon$, $\delta$, $L >0$ such that for all $t\in B_\varepsilon(\overline t) := \{t \mid \|t-\overline t \|_2 < \varepsilon\}$, there exists at least one local maximizer $x(t) \in B_\delta (\overline x)$ of $P(t)$ and for each such local maximizer we have 
\begin{equation}
\|x(t) - \overline x\|_2 \leq L \|t-\overline t\|_2.
\end{equation}
Especially, taking $L = {w_2}/{w_1}$ fulfills the requirement, with 
\begin{equation}
w_2=\max _{z \in \mathrm{cl}\left(B_{\delta}(\overline{x})\right)}\left[\left\|\nabla_{x t}^{2} f(z, \overline{t})\right\|_F+1\right].
\end{equation}
\end{proposition}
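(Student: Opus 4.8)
The plan is to prove this classical sensitivity statement as a self-contained perturbation argument that exploits two simplifying features of the hypotheses: the feasible set $F$ does \emph{not} depend on the parameter $t$, and the second-order condition is imposed on the \emph{full} Hessian rather than on a critical subspace. The latter means that near $\overline x$ the function $f(\cdot,\overline t)$ is strictly concave in the sense needed at a maximizer, so I read the curvature hypothesis in the form appropriate to a max, $-\nabla^2_{xx}f(\overline x,\overline t)\succeq w_1 I_n$, and I treat the relevant local maximizers as interior stationary points satisfying $\nabla_x f=0$. First I would upgrade the pointwise curvature bound to a uniform one: since $\nabla^2_{xx}f$ is continuous, there is a radius $\delta>0$ with $-\nabla^2_{xx}f(x,\overline t)\succeq \tfrac{w_1}{2}I_n$ for all $x\in\overline{B_\delta(\overline x)}$, and together with stationarity $\nabla_x f(\overline x,\overline t)=0$ a Taylor expansion yields the quadratic growth $f(\overline x,\overline t)-f(x,\overline t)\geq \tfrac{w_1}{4}\|x-\overline x\|_2^2$ on that ball, so that $\overline x$ is an isolated strict maximizer of $P(\overline t)$.

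Next I would establish existence of nearby maximizers. The set $F\cap\overline{B_\delta(\overline x)}$ is nonempty and compact, so for every $t$ the continuous map $f(\cdot,t)$ attains a maximum $x(t)$ on it by the Weierstrass theorem. I would then show that for $t$ close enough to $\overline t$ this maximizer lies in the \emph{open} ball $B_\delta(\overline x)$ rather than on the bounding sphere, which makes it a genuine local maximizer of $P(t)$: using joint continuity of $f$ (uniform in $x$ over the compact ball) to control $\sup_x|f(x,t)-f(x,\overline t)|$ by an $o(1)$ term as $t\to\overline t$, and comparing against the strict growth gap between $f(\overline x,\overline t)$ and its values on the sphere $\|x-\overline x\|_2=\delta$, I obtain a threshold $\varepsilon>0$ such that for $t\in B_\varepsilon(\overline t)$ no sphere point can beat points near $\overline x$. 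This gives the existence claim.

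For the Lipschitz estimate, take any local maximizer $x(t)\in B_\delta(\overline x)$; being interior, it is stationary. Subtracting the two stationarity identities $\nabla_x f(x(t),t)=0$ and $\nabla_x f(\overline x,\overline t)=0$ and splitting the difference along the segments in $x$ and in $t$ by the fundamental theorem of calculus gives
\begin{equation*}
0 = A_t\,(x(t)-\overline x) + B_t\,(t-\overline t),
\end{equation*}
where $A_t:=\int_0^1 \nabla^2_{xx}f(\overline x+s(x(t)-\overline x),t)\,ds$ and $B_t:=\int_0^1 \nabla^2_{xt}f(\overline x,\overline t+u(t-\overline t))\,du$. After possibly shrinking $\delta$ and $\varepsilon$, the matrix $A_t$ stays negative definite with $\|A_t^{-1}\|_2\leq 1/w_1$ (the limiting matrix $\nabla^2_{xx}f(\overline x,\overline t)$ has all eigenvalues $\leq-w_1$, and $w_1>0$ leaves room for the $o(1)$ continuity error), while the $+1$ safety term in the definition of $w_2$ absorbs the continuity slack between $\nabla^2_{xt}f(\overline x,\overline t+u(t-\overline t))$ and its value at $u=0$, giving $\|B_t\|_F\leq w_2$. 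Solving $x(t)-\overline x=-A_t^{-1}B_t(t-\overline t)$ then yields
\begin{equation*}
\|x(t)-\overline x\|_2 \leq \|A_t^{-1}\|_2\,\|B_t\|_F\,\|t-\overline t\|_2 \leq \frac{w_2}{w_1}\|t-\overline t\|_2,
\end{equation*}
which is exactly the asserted bound with $L=w_2/w_1$.

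I expect the main obstacle to be the interface with the constraints and the uniform control of the integrated Hessians. Concretely, I must justify that the nearby maximizers are interior so that stationarity applies — this is guaranteed here because full-space negative definiteness forces local strict concavity and, in the intended CMDP application, $\overline x=\pi_\Lamb$ is a softmax policy lying in the interior of the simplex — and I must track the constant carefully so that the product $\|A_t^{-1}\|_2\|B_t\|_F$ lands on $w_2/w_1$ on a small enough neighborhood, which is precisely where the continuity of the second derivatives and the $+1$ buffer are used. The genuinely constrained/active case, which is \emph{not} needed for our application, would instead require replacing stationarity by the KKT system and the full Hessian by its restriction to the critical cone, i.e.\ the strong second-order sufficient condition of \citet{bonnans2013perturbation}.
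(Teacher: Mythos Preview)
The paper does not prove this proposition; it is stated as a result of \citet{bonnans2013perturbation} and invoked as a black box in the proof of Proposition~\ref{prop:dual smoothness_app}. Your self-contained perturbation argument is therefore a genuinely different route, and for the situation you actually treat---$\overline x$ an interior point of $F$ so that $\nabla_x f(\overline x,\overline t)=0$---the quadratic-growth-then-stationarity scheme is sound, and absorbing the continuity slack of the integrated Hessians into the $+1$ buffer in $w_2$ is the right mechanism for landing exactly on $L=w_2/w_1$.

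There is, however, a gap between what you prove and what the statement asserts. The proposition does not assume $\overline x$ is interior; a full-Hessian curvature bound can perfectly well coexist with active constraints (e.g.\ maximize $-x^2$ over $x\geq 1$), and then neither $\overline x$ nor $x(t)$ is a free stationary point of $f$, so your key identity $A_t(x(t)-\overline x)+B_t(t-\overline t)=0$ fails. More to the point, your closing remark that the constrained case ``is not needed for our application'' is incorrect: in the paper's own use of the proposition the simplex equality constraints $\sum_{a} \pi(a\vert s)=1$ are always active, so $\nabla_\pi L(\pi_\Lamb,\Lamb)\neq 0$ in $\mathbb{R}^{|\mathcal{S}||\mathcal{A}|}$. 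The repair is routine---replace unconstrained stationarity by the KKT system, or equivalently restrict to the affine tangent space of the simplex where full negative-definiteness still supplies the needed curvature, and rerun the same integral-remainder estimate---but as written your argument covers neither the general statement nor the paper's intended application.
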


\textit{Proof of Proposition \ref{prop:dual smoothness_app}.} 
We first prove the differentiability of $D(\Lamb)$. 
For a fixed $\Lamb$, solving for $\pi_\Lamb= \operatorname{arg}\max_{\pi\in \Pi} L(\pi,\Lamb)$ is equivalent to solving an unconstrained MDP with  entropy regularization (cf. Section \ref{sec:formulation}):
\begin{equation}
\max_{\pi\in \Pi} V_\Lamb^\pi(\rho).
\end{equation}
%, the Lagrangian $L(\pi, \Lamb)$ corresponds to another entropy regularized value function, with reward function $r(\cdot,\cdot) + \sum_{i=1}^n \Lamb_i g_i(\cdot,\cdot)$.
As shown in \citep{nachum2017bridging}, $\pi_\Lamb$ can be uniquely characterized as
\begin{equation}
\pi_\Lamb(a \vert s)\propto \exp \left(\frac{Q_{\Lamb}^{\pi_\Lamb}(s, a)-V_{\Lamb}^{\pi_\Lamb}(s)}{\tau}\right),\ \forall (s,a)\in \mathcal{S}\times \mathcal{A}.
\end{equation}
Therefore, a standard result in the duality theory \citep{floudas1995nonlinear} implies that $D(\Lamb)$ is differentiable with the gradient
\begin{equation}
\nabla D(\Lamb) = U_\mb{g}^{\pi_\Lamb}(\Lamb) - \mb{b}= \left(U_{g_1}^{\pi_\Lamb}(\Lamb) - b_1, \dots, U_{g_n}^{\pi_\Lamb}(\Lamb) - b_n\right).
\end{equation}

Next, we show that $\nabla D(\Lamb)$ is Lipschitz continuous on $\Lambda$, which implies smoothness.
Consider the statements:
\begin{enumerate}
    \item $(U_{g_i}^{\pi}(\rho)-b_i)$ is $\ell_c$-Lipschitz continuous with respect to $\pi$, which is already proved in Lemmma \ref{lemma:lipschitz value function}.
    \item $\pi_\Lamb$ is $\ell_\Lambda$-Lipschitz continuous with respect to $\Lamb$ for some $\ell_\Lambda >0$.
\end{enumerate}
If these statements hold true, it follows that
\begin{equation}
\left|(U_{g_i}^{\pi_{\Lamb_1}}(\rho)-b_i)-(U_{g_i}^{\pi_{\Lamb_2}}(\rho)-b_i)\right| \leq \ell_c \|\pi_{\Lamb_1}-\pi_{\Lamb_2}\|_2 \leq \ell_c\ell_\Lambda \|\Lamb_1-\Lamb_2\|_2,
\end{equation}
which leads to
\begin{equation}
\left\|\nabla D(\Lamb_1)-\nabla D(\Lamb_2)\right\|_2 \leq \sqrt{n}\ell_c\ell_\Lambda \|\Lamb_1-\Lamb_2\|_2,
\end{equation}
i.e. $D(\Lamb)$ is $\sqrt{n}\ell_c\ell_\Lambda$-strongly smooth on $\Lambda$.

To prove Statement 2, consider the Lagrangian $L(\pi, \Lamb)$, which is twice continuously differentiable on $(0,1)^{|\mathcal{S}|\times |\mathcal{A}|} \times \Lambda$.
The hidden constraint for the maximization problem (\ref{def:dual function}) is linear and has the form
\begin{equation}
\sum_{a\in \mathcal{A}} \pi(a \vert s) = 1,\ \forall s\in \mathcal{S}.
\end{equation}
By Proposition \ref{prop:quadratic lower bound}, it holds that
\begin{equation}
\nabla_{\pi\pi} L\left(\pi_\Lamb, \Lamb\right) \succeq \frac{\tau d 
%C_\Lamb
}{2(1-\gamma)\ln 2}I_{|\mathcal{S}||\mathcal{A}|},
\end{equation}
which implies that $\pi_\Lamb$ is a strict global maximizer of order 2 under Assumption \ref{assump:initial distribution}.

Consider $\nabla^2_{\pi\Lamb} L(\pi,\Lamb)$, which is a matrix of dimension $|\mathcal{S}||\mathcal{A}|\times n$.
Specifically, it holds
\begin{equation}
\frac{\partial^2 L(\pi,\Lamb)}{\partial\pi(a\vert s)\partial \lambda_i} \overset{(i)}{=} \frac{\partial}{\partial\pi(a\vert s) } \left(U_{g_i}^\pi(\rho) - b_i\right) \overset{(ii)}{=} \frac{1}{1-\gamma} d_{\rho}^{\pi}(s) Q_{g_i}^{\pi}(s, a),
\end{equation}
where $(i)$ follows from definition (\ref{def:lagrangian}) and and $(ii)$ is due to the policy gradient (cf. Lemma \ref{lemma:policy gradient_direct}).

Following the same argument as in the proof of Proposition \ref{lemma:lipschitz value function_app}, we have 
\begin{equation}
\left\|\nabla^2_{\pi\Lamb} L(\pi,\Lamb)\right\|_F \leq \frac{\sqrt{n|\mathcal{A}|}}{(1-\gamma)^2}.
\end{equation}
Therefore, applying Proposition \ref{prop:parametric optimization} with 
\begin{equation}
w_1 = \frac{\tau d 
%C_\Lamb
}{2(1-\gamma)\ln 2},\ w_2 = \frac{\sqrt{n|\mathcal{A}|}}{(1-\gamma)^2}+1,
\end{equation}
we conclude that $\pi_\Lamb$ is locally $\ell_\Lambda$-Lipschitz continuous with respect to $\Lamb$ for all $\Lamb \in \Lambda$, where $\ell_\Lambda = {w_2}/{w_1}$. 
Since $\ell_\Lambda$ is universal and does not depend on $\Lamb$, the local Lipschitz property is ready to be extended to $\Lambda$.
The proof is completed by setting $\ell =  \sqrt{n}\ell_c\ell_\Lambda$.
% , i.e. $\pi_\Lamb$ is $\ell_\Lambda$ Lipschitz continuous with respect to $\Lamb$ in $\Lambda$, concludes the proof.
% \begin{align}
% \pi_{\Lamb_1}=&\text{argmax}_\pi L(\pi, \Lamb_1), \quad \pi_{\Lamb_1} \text{unique}\\
% \pi_{\Lamb_2}=&\text{argmax}_\pi L(\pi, \Lamb_2), \quad \pi_{\Lamb_2} \text{unique}
% \end{align}
% \begin{align}
%   \|\pi_{\Lamb_1}-\pi_{\Lamb_2}\|_2 \leq a \|\Lamb_1-\Lamb_2\|_2 
% \end{align}
\endproof

\begin{proposition}[Restatement of Proposition \ref{prop:dual to primal}]\label{prop:dual to primal_app}
Suppose that Assumptions \ref{assump:slater}, \ref{assump:strong duality}, and \ref{assump:initial distribution} hold. If $\Lamb\geq 0$ is an $\varepsilon$-optimal multiplier, i.e. $D(\Lamb) -D^\star_\tau \leq \varepsilon$, then the associated Lagrangian maximizer $\pi_\Lamb$ satisfies
\begin{subequations}
\begin{align}
\left\|\pi_\Lamb - \pi_\tau^\star\right\|_2 &\leq C_1 \sqrt{\varepsilon},\label{eq:pi optimality_3_app}\\
\left |V_\tau^{\pi_\Lamb}(\rho)- V_\tau^\star\right| &\leq 2\varepsilon+\ell_cC_1C_2\sqrt{\varepsilon},\label{eq:primal optimality_3_app}\\
\max_{i\in[n]} \left[b_i - U_{g_i}^{\pi_\Lamb}(\rho) \right]_+ &\leq \ell_cC_1 \sqrt{\varepsilon},\label{eq:constraint violation_3_app}
\end{align}
\end{subequations}
where 
% $\pi_\tau^\star$ is an optimal policy to (\ref{prob:maxentrl}) and 
\begin{equation}
\ell_c = \frac{\sqrt{|\mathcal{A}|}}{(1-\gamma)^2},\ C_1 = \sqrt{\frac{2(1-\gamma) \ln 2}{\tau d }},\ C_2 = \left(V_\tau^\star-V_{\tau}^{\overline{\pi}}(\rho) \right)\left(\sum_{i=1}^n \frac{1}{\xi_i}\right).
\end{equation}
\end{proposition}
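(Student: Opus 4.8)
The plan is to first establish the policy bound \eqref{eq:pi optimality_3_app} using the quadratic lower bound of Proposition~\ref{prop:quadratic lower bound_app}, and then obtain the constraint violation \eqref{eq:constraint violation_3_app} and the primal optimality gap \eqref{eq:primal optimality_3_app} as consequences, combining the Lipschitz property of the utilities (Lemma~\ref{lemma:lipschitz value function_app}) with standard primal--dual inequalities. I first would instantiate Proposition~\ref{prop:quadratic lower bound_app} with $\pi = \pi_\tau^\star$, which gives
\[
L(\pi_\Lamb,\Lamb) - L(\pi_\tau^\star,\Lamb) \geq \frac{\tau d}{2(1-\gamma)\ln 2}\left\|\pi_\tau^\star - \pi_\Lamb\right\|_2^2 .
\]
To control the left-hand side, note $L(\pi_\Lamb,\Lamb) = D(\Lamb) \leq D_\tau^\star + \varepsilon = V_\tau^\star + \varepsilon$ by $\varepsilon$-optimality and strong duality, while $L(\pi_\tau^\star,\Lamb) = V_\tau^{\pi_\tau^\star}(\rho) + \Lamb^\top(U_\mb{g}^{\pi_\tau^\star}(\rho)-\mb{b}) \geq V_\tau^\star$, since $\pi_\tau^\star$ is feasible and $\Lamb \geq 0$. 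Hence $L(\pi_\Lamb,\Lamb) - L(\pi_\tau^\star,\Lamb) \leq \varepsilon$, and rearranging yields $\left\|\pi_\Lamb - \pi_\tau^\star\right\|_2 \leq C_1\sqrt{\varepsilon}$ with $C_1 = \sqrt{2(1-\gamma)\ln 2/(\tau d)}$, which is \eqref{eq:pi optimality_3_app}.

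For the constraint violation, I would apply Lemma~\ref{lemma:lipschitz value function_app} to each utility $U_{g_i}$ (which has the same form as a value function with reward $g_i \in [0,1]$): since $\pi_\tau^\star$ is feasible, $b_i - U_{g_i}^{\pi_\Lamb}(\rho) \leq U_{g_i}^{\pi_\tau^\star}(\rho) - U_{g_i}^{\pi_\Lamb}(\rho) \leq \ell_c \|\pi_\tau^\star - \pi_\Lamb\|_2 \leq \ell_c C_1\sqrt{\varepsilon}$, and taking the positive part and the maximum over $i$ gives \eqref{eq:constraint violation_3_app}. For the primal optimality gap, the two-sided bound is needed. The upper side: $V_\tau^{\pi_\Lamb}(\rho) = L(\pi_\Lamb,\Lamb) - \Lamb^\top(U_\mb{g}^{\pi_\Lamb}(\rho)-\mb{b}) \leq (V_\tau^\star + \varepsilon) - \Lamb^\top(U_\mb{g}^{\pi_\Lamb}(\rho)-\mb{b})$; since $\Lamb \in \Lambda$ has $\lambda_i \leq (V_\tau^\star - V_\tau^{\overline\pi}(\rho))/\xi_i$ (Lemma~\ref{lemma:dual boundedness_app}) and each $-(U_{g_i}^{\pi_\Lamb}(\rho)-b_i) \leq \ell_c C_1\sqrt{\varepsilon}$ on the violated coordinates (and is bounded by the same thing up to sign otherwise), one bounds $-\Lamb^\top(U_\mb{g}^{\pi_\Lamb}(\rho)-\mb{b}) \leq \ell_c C_1 C_2 \sqrt{\varepsilon}$ with $C_2 = (V_\tau^\star - V_\tau^{\overline\pi}(\rho))\sum_i 1/\xi_i$, giving $V_\tau^{\pi_\Lamb}(\rho) - V_\tau^\star \leq \varepsilon + \ell_c C_1 C_2\sqrt{\varepsilon}$. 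The lower side: from complementary slackness at $(\pi_\tau^\star,\Lamb^\star)$ we have $V_\tau^\star = L(\pi_\tau^\star,\Lamb^\star) \leq L(\pi_\Lamb,\Lamb^\star) \cdot$(something) — more carefully, use $V_\tau^\star = D_\tau^\star \leq D(\Lamb^\star)$ trivially and instead bound $V_\tau^\star - V_\tau^{\pi_\Lamb}(\rho)$ via Lemma~\ref{lemma:lipschitz value function_app} on the regularized value (or just $|V_\tau^{\pi_\Lamb}(\rho) - V_\tau^\star| \leq \ell_c\|\pi_\Lamb - \pi_\tau^\star\|_2$-type reasoning after accounting for the entropy term), which is dominated by the $\ell_c C_1 C_2\sqrt{\varepsilon}$ term; combining the two sides gives \eqref{eq:primal optimality_3_app}.

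The main obstacle I anticipate is the careful bookkeeping in the lower bound on $V_\tau^{\pi_\Lamb}(\rho)$: unlike the upper bound, which flows directly from weak duality, the reverse direction requires relating $V_\tau^{\pi_\Lamb}$ back to $V_\tau^\star$ without an obvious feasibility or duality handle, so one must route it through the already-established $\|\pi_\Lamb - \pi_\tau^\star\|_2$ bound together with a Lipschitz estimate on the \emph{entropy-regularized} value function (the regularizer $\tau\mathcal{H}(\rho,\pi)$ is also Lipschitz in $\pi$ on the relevant region, or one absorbs it into $\ell_c$ and the constants). The other delicate point is handling the sign of $U_{g_i}^{\pi_\Lamb}(\rho) - b_i$ uniformly in the term $-\Lamb^\top(U_\mb{g}^{\pi_\Lamb}(\rho)-\mb{b})$: on satisfied constraints this term is nonpositive and helps, while on violated ones it is bounded by $\ell_c C_1\sqrt{\varepsilon}$ per coordinate, and summing against $\lambda_i \leq (V_\tau^\star - V_\tau^{\overline\pi}(\rho))/\xi_i$ produces exactly the $C_2$ factor — getting this split clean is where the constant $C_2$ is pinned down.
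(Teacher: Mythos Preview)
Your arguments for \eqref{eq:pi optimality_3_app} and \eqref{eq:constraint violation_3_app} are correct and match the paper's proof, and your upper half of \eqref{eq:primal optimality_3_app} is essentially right. The genuine gap is in the lower bound $V_\tau^\star - V_\tau^{\pi_\Lamb}(\rho)$: you propose to route it through a Lipschitz estimate on the \emph{entropy-regularized} value function, but Lemma~\ref{lemma:lipschitz value function_app} only covers the unregularized $V^\pi$, and the entropy term $\tau\mathcal{H}(\rho,\pi)$ is \emph{not} Lipschitz in $\pi$ on the simplex (its gradient involves $\log\pi(a\vert s)$, which is unbounded as $\pi(a\vert s)\to 0$). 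Arguing local Lipschitzness on a region where $\pi_\Lamb$ and $\pi_\tau^\star$ are bounded away from zero would require extra quantification and would not deliver the stated constant $2\varepsilon + \ell_c C_1 C_2\sqrt{\varepsilon}$.

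The paper avoids touching $\mathcal{H}$ altogether by handling both sides of \eqref{eq:primal optimality_3_app} at once via the exact identity
\[
V_\tau^{\pi_\Lamb}(\rho) - V_\tau^\star \;=\; \bigl[L(\pi_\Lamb,\Lamb) - L(\pi_\tau^\star,\Lamb^\star)\bigr] \;-\; \Lamb^\top\bigl(U_\mb{g}^{\pi_\Lamb}(\rho)-\mb{b}\bigr),
\]
which uses complementary slackness $(\Lamb^\star)^\top(U_\mb{g}^{\pi_\tau^\star}(\rho)-\mb{b})=0$. The first bracket equals $D(\Lamb)-D_\tau^\star\in[0,\varepsilon]$. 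For the second term, the paper first establishes the intermediate bound $0\le\Lamb^\top(U_\mb{g}^{\pi_\tau^\star}(\rho)-\mb{b})\le\varepsilon$ (from $L(\pi_\tau^\star,\Lamb^\star)\le L(\pi_\tau^\star,\Lamb)\le L(\pi_\Lamb,\Lamb)$ together with complementary slackness), then combines it with $|U_{g_i}^{\pi_\Lamb}(\rho)-U_{g_i}^{\pi_\tau^\star}(\rho)|\le\ell_c C_1\sqrt{\varepsilon}$ and $\lambda_i\le(V_\tau^\star-V_\tau^{\overline\pi}(\rho))/\xi_i$ to get $\bigl|\Lamb^\top(U_\mb{g}^{\pi_\Lamb}(\rho)-\mb{b})\bigr|\le\varepsilon+\ell_c C_1 C_2\sqrt{\varepsilon}$. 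Taking absolute values in the identity yields the two-sided bound with exactly the stated constants. The piece you were missing is precisely this intermediate bound on $\Lamb^\top(U_\mb{g}^{\pi_\tau^\star}(\rho)-\mb{b})$, which supplies the duality handle for the lower side without ever differentiating the entropy.
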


\begin{proof}
We can write $D(\Lamb) = L\left(\pi_\Lamb, \Lamb\right)$ and $D^\star_\tau = L\left(\pi_\tau^\star, \Lamb^\star\right)$, where $\left(\pi_\tau^\star, \Lamb^\star\right)$ is any primal-dual pair.
Then, by the strong duality, we have
\begin{equation}
L\left(\pi_\tau^\star, \Lamb^\star\right)= \min_{\muu \geq 0} L\left(\pi_\tau^\star, \muu\right)\leq L\left(\pi_\tau^\star, \Lamb\right).
\end{equation}
Therefore, 
\begin{equation}\label{eq:pi_Lamb bound computation_app}
\begin{aligned}
\varepsilon \geq L\left(\pi_\Lamb, \Lamb\right) - 	L\left(\pi_\tau^\star, \Lamb^\star\right) 
&= L\left(\pi_\Lamb, \Lamb\right) -\min_{\muu \geq 0} L\left(\pi_\tau^\star, \muu\right)\\
&\geq L\left(\pi_\Lamb, \Lamb\right) - L\left(\pi_\tau^\star, \Lamb\right) \\
% &\overset{(ii)}\geq \frac{\tau 
% %C_\Lamb
% }{2(1-\gamma)\ln 2}\sum_{s\in \mathcal{S}}\sum_{a\in\mathcal{A}} d_\rho^{\pi_\tau^\star} (s) \left( \pi_\Lamb(a \vert s)-\pi_\tau^\star(a\vert s)\right)^2\\
&{\overset{(i)}{\geq}} \frac{\tau d 
%C_\Lamb
}{2(1-\gamma)\ln 2}\left\|\pi_\Lamb - \pi_\tau^\star\right\|_2^2,
\end{aligned}
\end{equation}
where $(i)$ results from the quadratic lower bound given by Proposition \ref{prop:quadratic lower bound}.
Then, (\ref{eq:pi optimality_3_app}) is obtained after rearranging the terms in (\ref{eq:pi_Lamb bound computation_app}).

Next, we can use the Lipschitz continuity of the utility function (cf. Lemma \ref{lemma:lipschitz value function}) to bound the constraint violation.
For every $i = 1,2,\dots, n$, it holds that
\begin{equation}\label{eq:dualgap_component_app1}
\left|U_{g_i}^{ \pi_{\Lamb}}(\rho)-U_{g_i}^{\pi_\tau^\star}(\rho)\right| \leq 
\ell_c\left\|\pi_{\Lamb}- \pi_\tau^\star\right\|_2 
\leq \ell_cC_1 \sqrt{\varepsilon}.
\end{equation}
As the optimal policy $\pi_\tau^\star$ must be feasible to (\ref{prob:maxentrl}), i.e. $U_\mb{g}^{\pi_\tau^\star}(\rho) \geq \mb{b}$, we can bound the constraint violation as
\begin{equation}
\max_{i\in [n]} \left[b_i - U_{g_i}^{ \pi_{\Lamb}}(\rho) \right]_+
\leq \max_{i\in [n]} \left\{\left[b_i - U_{g_i}^{\pi_\tau^\star} (\rho) \right]_+ + \left|U_{g_i}^{ \pi_{\Lamb}}(\rho)-U_{g_i}^{\pi_\tau^\star}(\rho)\right| \right\}
\leq \ell_cC_1 \sqrt{\varepsilon}.
\end{equation}

Finally, to bound the primal optimality gap, we note that
\begin{equation}
0\overset{(i)}\leq L\left(\pi_\tau^\star,\Lamb \right) - L\left(\pi_\tau^\star,\Lamb^{\star} \right) \overset{(ii)}\leq L\left(\pi_{\Lamb},\Lamb \right) - L\left(\pi_\tau^\star,\Lamb^{\star}\right) = D\left(\Lamb \right)-D\left(\Lamb^\star\right) \leq \varepsilon,
\end{equation}
where $(i)$ follows from the strong duality and $(ii)$ is due to the definition of $\pi_{\Lamb}$ (cf. (\ref{eq:pi_lambda evaluation})).
Thus, by expanding the Lagrangian as
\begin{equation}
\begin{aligned}
L\left(\pi_\tau^\star,\Lamb \right) - L\left(\pi_\tau^\star,\Lamb^{\star} \right) 
&= V_\tau^{\pi_\tau^\star}(\rho) + \Lamb^\top\left(U_\mb{g}^{\pi_\tau^\star}(\rho) -\mb{b}\right) - V_\tau^{\pi_\tau^\star}(\rho) - \left(\Lamb^\star\right)^\top\left(U_\mb{g}^{\pi_\tau^\star}(\rho) -\mb{b}\right)\\
&= \left(\Lamb - \Lamb^\star\right)^\top\left(U_\mb{g}^{\pi_\tau^\star}(\rho)-\mb{b} \right),
\end{aligned}
\end{equation}
and applying the complementary slackness $\left(\Lamb^\star\right)^\top\left(U_\mb{g}^{\pi_\tau^\star}(\rho)-b \right) = 0$, we obtain the bound
\begin{equation}\label{eq:intermediate bound_app1}
0\leq \left(\Lamb\right)^\top\left(U_\mb{g}^{\pi_\tau^\star}(\rho)-b \right) \leq \varepsilon.   
\end{equation}
Therefore, 
% with (\ref{eq:intermediate bound_app1}), it holds
\begin{equation}\label{eq:bound_LambG_app}
\begin{aligned}
\left|\left(\Lamb\right)^\top\left(U_\mb{g}^{\pi_{\Lamb}}(\rho)-b \right)\right|
&\overset{(i)}\leq \left|\left(\Lamb\right)^\top\left(U_\mb{g}^{\pi_\tau^\star}(\rho)-b \right)\right|+\left| \left(\Lamb\right)^\top\left(U_\mb{g}^{\pi_{\Lamb}}(\rho)- U_\mb{g}^{\pi_\tau^\star}(\rho)\right)\right|\\
&\overset{(ii)}\leq \varepsilon +  \ell_cC_1\left(V_\tau^\star-V_{\tau}^{\overline{\pi}}(\rho) \right)\left(\sum_{i=1}^n \frac{1}{\xi_i}\right)\sqrt{\varepsilon},
\end{aligned}
\end{equation}
where $(i)$ is due to the triangular inequality and $(ii)$ uses the bound (\ref{eq:dualgap_component_app1}) and the boundedness of $\Lambda$ (cf. Lemma \ref{lemma:dual boundedness}), i.e. $0 \leq \lambda_i \leq\left(V_\tau^\star-V_{\tau}^{\overline{\pi}}(\rho)\right) / \xi_i$ for all $i\in [n]$ and $\Lamb\in \Lambda$.
Thus, we can bound the primal optimality gap as
\begin{equation}
\begin{aligned}
\left|V_\tau^{\pi_{\Lamb}}(\rho) - V_\tau^\star\right| 
&= \left|V_\tau^{\pi_{\Lamb}}(\rho) - V_\tau^{\pi_\tau^\star}(\rho)\right|\\
&{\overset{(i)}=}  \left|\left[V_\tau^{\pi_{\Lamb}}(\rho) + \left(\Lamb\right)^\top\left(U_\mb{g}^{\pi_{\Lamb}}(\rho) -b \right)\right] -\left(\Lamb\right)^\top\left(U_\mb{g}^{\pi_{\Lamb}}(\rho) -b \right)\right. \\
&\quad -\left.\left[V_\tau^{\pi_\tau^\star}(\rho) + \left(\Lamb^\star\right)^\top\left(U_\mb{g}^{\pi_\tau^\star}(\rho) -b \right)  \right]\right|\\
&{\overset{(ii)}\leq}  \left|L\left(\pi_{\Lamb}, \Lamb \right) - L\left(\pi_\tau^\star, \Lamb^\star\right) \right| + \left|\left(\Lamb\right)^\top\left(U_\mb{g}^{\pi_{\Lamb}}(\rho) -b \right) \right|\\
% \overset{(iii)}\leq & \left|L\left(\pi_{\Lamb}, \Lamb \right) - L\left(\pi_{\Lamb}, \Lamb \right) \right| + \left|L\left(\pi_{\Lamb}, \Lamb \right) -L\left(\pi_\tau^\star, \Lamb^\star\right) \right|\\
% & + \left|\left(\Lamb\right)^\top\left(U_\mb{g}^{\pi_{\Lamb}}(\rho) -b \right) \right|\\
&{\overset{(iii)}\leq}  \varepsilon + 
\left(\varepsilon +  \ell_cC_1\left(V_\tau^\star-V_{\tau}^{\overline{\pi}}(\rho) \right)\left(\sum_{i=1}^n \frac{1}{\xi_i}\right) \sqrt{\varepsilon}\right)\\
&= 2\varepsilon + \ell_cC_1\left(V_\tau^\star-V_{\tau}^{\overline{\pi}}(\rho) \right)\left(\sum_{i=1}^n \frac{1}{\xi_i}\right)\sqrt{\varepsilon}\\
% + \left[\ell_c\left(V_{P}^{\star}-V_{\tau}^{\overline{\pi}}(\rho) \right)\left(\sum_{i=1}^n \frac{1}{\xi_i}\right) + \frac{3\gamma}{2\tau}\right]\varepsilon_1
& = 2\varepsilon + \ell_cC_1C_2\sqrt{\varepsilon},
\end{aligned}
\end{equation}
where $(i)$ uses the complementary slackness $\left(\Lamb^\star\right)^\top\left(U_\mb{g}^{\pi_\tau^\star}(\rho) -b \right) = 0$ and $(ii)$ uses the triangular inequality and the definition of Lagrangian (\ref{def:lagrangian}).
In $(iii)$, we use the assumption 
\begin{equation}
D(\Lamb) - D\left(\Lamb^\star\right)= L\left(\pi_{\Lamb}, \Lamb \right) - L\left(\pi_\tau^\star, \Lamb^\star\right) \leq \varepsilon,
\end{equation}
and the inequality (\ref{eq:bound_LambG_app}).
This completes the proof.
\end{proof}

%  \newpage
 \section{Supplementary Materials for Sections \ref{sec:alg} and \ref{sec:convergence}}\label{app:2}
\subsection{Entropy-regularized NPG}\label{subsec:entropy-regularized npg_app}
For entropy-regularized MDPs, the natural policy gradient update rule can be written as
\begin{equation}
\theta \leftarrow \theta+\eta\left(\mathcal{F}_{\rho}^{\theta}\right)^{\dagger} \nabla_{\theta} V_{\tau}^{\pi_{\theta}}(\rho),
\end{equation}
where $\mathcal{F}_{\rho}^{\theta}$ is the Fisher information matrix, defined as
\begin{equation}
\mathcal{F}_{\rho}^{\theta}:=\underset{
s \sim d_{\rho}^{\pi_{\theta}},
a \sim \pi_{\theta}(\cdot \vert s)     
 }{\mathbb{E}}\left[\left(\nabla_{\theta} \log \pi_{\theta}(a \vert s)\right)\left(\nabla_{\theta} \log \pi_{\theta}(a \vert s)\right)^{\top}\right].
\end{equation}
Under the soft-max parameterization, the associated policy update has a fairly direct form (cf. (\ref{eq:policy update}) and (\ref{eq:Zt})).
We refer the reader to \citep{cen2021fast} for a detailed derivation.
% , which is independent from the initial distribution $\rho$:
% \begin{equation}
% \pi^{(t+1)}(a \vert s)=\frac{1}{Z^{(t)}(s)}\left(\pi^{(t)}(a \vert s)\right)^{1-\frac{\eta \tau}{1-\gamma}} \exp \left(\frac{\eta Q_{\tau}^{\pi^{(t)}}(s, a)}{1-\gamma}\right),
% \end{equation} 
% where $Z^{(t)}(s)$ is the normalization factor defined by
% \begin{equation}
% Z^{(t)}(s) := \sum_{a\in \mathcal{A}} \left(\pi^{(t)}(a \vert s)\right)^{1-\frac{\eta \tau}{1-\gamma}} \exp \left(\frac{\eta Q_{\tau}^{\pi^{(t)}}(s, a)}{1-\gamma}\right).
% \end{equation}

\citet{cen2021fast} proved that the entropy-regularized NPG method enjoys a global linear convergence and a local quadratic convergence.
We summarize the two results in Propositions \ref{prop:cen_app} and \ref{prop:cen_quadratic_app}, where we abuse the notations and denote the optimal unconstrained value function with entropy regularization, the corresponding Q-function, and the associated optimal policy with $V^\star_\tau$, $Q^\star_\tau$, and $\pi^\star_\tau$ respectively.
Let $\mu_{\tau}^{\star}$ denote the stationary distribution over $\mc{S}$ of the MDP under policy $\pi_\tau^\star$\footnote{It is straightforward to verify that $d_{\mu_{\tau}^{\tau}}^{\pi_{\tau}^{\star}}=\mu_{\tau}^{\star}$.}.
%Especially, we emphasize that these results are derived under the assumption that the reward function $r(s,a)$ belongs to $[0,1]$.

\begin{proposition}[Global linear convergence]\label{prop:cen_app}
If the step-size $\eta= (1-\gamma)/\tau$ is used, the entropy-regularized NPG algorithm (\ref{eq:policy update}) satisfies the error bounds:
\begin{subequations}\label{eq:exactnpg}
\begin{align}
\left\|V_{\tau}^{\star}-V_{\tau}^{\pi^{(t+1)}}\right\|_{\infty} & \leq 3 \left\|Q_{\tau}^{\star}-Q_{\tau}^{\pi^{(0)}}\right\|_{\infty} \gamma^{t+1},\\
%\left\|Q_{\tau}^{\star}-Q_{\tau}^{(t+1)}\right\|_{\infty} & \leq C_{1} \gamma(1-\eta \tau)^{t} \\
\left\|\log \pi^{\star}_\tau-\log \pi^{(t+1)}\right\|_{\infty} & \leq 2 \left\|Q_{\tau}^{\star}-Q_{\tau}^{\pi^{(0)}}\right\|_{\infty} \tau^{-1}\gamma^{t},
\end{align}
\end{subequations}
for all $t >0$, where
\begin{subequations}
\begin{align}
\left\|V_{\tau}^{\star}-V_{\tau}^{\pi^{(t+1)}}\right\|_{\infty} &:= \max_{s\in\mc{S}} \left| V_{\tau}^{\star}(s)-V_{\tau}^{\pi^{(t+1)}}(s)\right|,\\
\left\|Q_{\tau}^{\star}-Q_{\tau}^{\pi^{(0)}}\right\|_{\infty}&:= \max_{s\in\mc{S},a\in\mc{A}} \left|Q_{\tau}^{\star}(s,a)-Q_{\tau}^{\pi^{(0)}}(s,a)\right|,\\
\left\|\log \pi^{\star}_\tau-\log \pi^{(t+1)}\right\|_{\infty}&:= \max_{s\in\mc{S},a\in\mc{A}}\left|\log \pi^{\star}_\tau(a\vert s)-\log \pi^{(t+1)}(a\vert s)\right|.
\end{align}
\end{subequations}
% $V_\tau^{\pi^{(t)}}$ by $V_\tau^{(t)}$, the unconstrained value function with entropy regularization of policy $\pi^{(t)}$.
% and we abuse the notation to use 
% where  is the  , and 
% \begin{equation}
% C:=\left\|Q_{\tau}^{\star}-Q_{\tau}^{(0)}\right\|_{\infty}+2 \tau\left(1-\frac{\eta \tau}{1-\gamma}\right)\left\|\log \pi_{\tau}^{\star}-\log \pi^{(0)}\right\|_{\infty}
% \end{equation}
\end{proposition}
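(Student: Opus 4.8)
The plan is to exploit the fact that when $\eta=(1-\gamma)/\tau$ the exponent $1-\eta\tau/(1-\gamma)$ in \eqref{eq:policy update} vanishes, so the NPG iteration collapses to \emph{soft policy iteration}: $\pi^{(t+1)}(a\vert s)=\exp\!\big(Q_\tau^{\pi^{(t)}}(s,a)/\tau\big)/Z^{(t)}(s)$ with $Z^{(t)}(s)=\sum_{a}\exp\!\big(Q_\tau^{\pi^{(t)}}(s,a)/\tau\big)$; equivalently $\pi^{(t+1)}(\cdot\vert s)$ is the closed-form maximizer over $\Delta(\mathcal{A})$ of $p\mapsto\langle p,Q_\tau^{\pi^{(t)}}(s,\cdot)\rangle-\tau\langle p,\log p\rangle$, whose optimal value is $\tau\log Z^{(t)}(s)$. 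I would also record the soft Bellman relations: the optimality equations $V_\tau^\star(s)=\tau\log\sum_a\exp(Q_\tau^\star(s,a)/\tau)$, $Q_\tau^\star(s,a)=r(s,a)+\gamma\,\mathbb{E}_{s'\sim P(\cdot\mid s,a)}[V_\tau^\star(s')]$, and $\tau\log\pi_\tau^\star(a\vert s)=Q_\tau^\star(s,a)-V_\tau^\star(s)$, together with the regularized Bellman consistency $V_\tau^{\pi}(s)=\langle\pi(\cdot\vert s),Q_\tau^{\pi}(s,\cdot)\rangle-\tau\langle\pi(\cdot\vert s),\log\pi(\cdot\vert s)\rangle$, valid for every $\pi$, and the pointwise optimality $V_\tau^{\pi}(s)\le V_\tau^\star(s)$.

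\emph{Step 1: monotone improvement and a sandwich.} Set $g^{(t)}(s):=\tau\log Z^{(t)}(s)-V_\tau^{\pi^{(t)}}(s)$, which is nonnegative since $\tau\log Z^{(t)}(s)$ is the maximum over $\Delta(\mathcal{A})$ of $p\mapsto\langle p,Q_\tau^{\pi^{(t)}}(s,\cdot)\rangle-\tau\langle p,\log p\rangle$ while, by Bellman consistency, $V_\tau^{\pi^{(t)}}(s)$ is the value of that objective at the feasible point $p=\pi^{(t)}(\cdot\vert s)$. Substituting $\tau\log\pi^{(t+1)}(a\vert s)=Q_\tau^{\pi^{(t)}}(s,a)-\tau\log Z^{(t)}(s)$ into the Bellman consistency for $\pi^{(t+1)}$ gives $V_\tau^{\pi^{(t+1)}}(s)=\tau\log Z^{(t)}(s)+\langle\pi^{(t+1)}(\cdot\vert s),\,Q_\tau^{\pi^{(t+1)}}(s,\cdot)-Q_\tau^{\pi^{(t)}}(s,\cdot)\rangle$. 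Since $Q_\tau^{\pi^{(t+1)}}(s,a)-Q_\tau^{\pi^{(t)}}(s,a)=\gamma\,\mathbb{E}_{s'\sim P(\cdot\mid s,a)}[\Delta^{(t)}(s')]$ with $\Delta^{(t)}:=V_\tau^{\pi^{(t+1)}}-V_\tau^{\pi^{(t)}}$, the vector $\Delta^{(t)}$ obeys the fixed-point identity $\Delta^{(t)}=\gamma P^{\pi^{(t+1)}}\Delta^{(t)}+g^{(t)}$, where $P^{\pi^{(t+1)}}$ is the state-to-state kernel under $\pi^{(t+1)}$. Because $(I-\gamma P^{\pi^{(t+1)}})^{-1}=\sum_{k\ge 0}\gamma^k(P^{\pi^{(t+1)}})^k$ has nonnegative entries, $\Delta^{(t)}\ge 0$ entrywise, hence $\Delta^{(t)}\ge g^{(t)}$, i.e. $V_\tau^{\pi^{(t+1)}}(s)\ge\tau\log Z^{(t)}(s)$. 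Combined with $V_\tau^{\pi^{(t+1)}}(s)\le V_\tau^\star(s)$ this yields the sandwich $\tau\log Z^{(t)}(s)\le V_\tau^{\pi^{(t+1)}}(s)\le V_\tau^\star(s)$ for every $s$.

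\emph{Step 2: soft-Bellman contraction.} The log-sum-exp map $x\mapsto\tau\log\sum_a\exp(x_a/\tau)$ is $1$-Lipschitz in $\|\cdot\|_\infty$; applied to the representations of $V_\tau^\star$ and $\tau\log Z^{(t)}$ it gives $\|V_\tau^\star-\tau\log Z^{(t)}\|_\infty\le\|Q_\tau^\star-Q_\tau^{\pi^{(t)}}\|_\infty$, and since $Q_\tau^\star(s,a)-Q_\tau^{\pi^{(t)}}(s,a)=\gamma\,\mathbb{E}_{s'}[V_\tau^\star(s')-V_\tau^{\pi^{(t)}}(s')]$ one also has $\|Q_\tau^\star-Q_\tau^{\pi^{(t)}}\|_\infty\le\gamma\|V_\tau^\star-V_\tau^{\pi^{(t)}}\|_\infty$. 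Feeding these into the sandwich of Step 1 gives $0\le V_\tau^\star(s)-V_\tau^{\pi^{(t+1)}}(s)\le\|Q_\tau^\star-Q_\tau^{\pi^{(t)}}\|_\infty$ for all $s$, hence, after one more $Q$-backup, $\|Q_\tau^\star-Q_\tau^{\pi^{(t+1)}}\|_\infty\le\gamma\|Q_\tau^\star-Q_\tau^{\pi^{(t)}}\|_\infty$. Unrolling yields $\|Q_\tau^\star-Q_\tau^{\pi^{(t)}}\|_\infty\le\gamma^{t}\|Q_\tau^\star-Q_\tau^{\pi^{(0)}}\|_\infty$, and substituting back into the previous display proves the value bound (the stated $3\gamma^{t+1}$ is a loose rewriting of it). For the policy bound, subtract $\tau\log\pi_\tau^\star(a\vert s)=Q_\tau^\star(s,a)-V_\tau^\star(s)$ from $\tau\log\pi^{(t+1)}(a\vert s)=Q_\tau^{\pi^{(t)}}(s,a)-\tau\log Z^{(t)}(s)$ and apply the triangle inequality with the two estimates above: $\tau\|\log\pi_\tau^\star-\log\pi^{(t+1)}\|_\infty\le\|Q_\tau^\star-Q_\tau^{\pi^{(t)}}\|_\infty+\|V_\tau^\star-\tau\log Z^{(t)}\|_\infty\le 2\gamma^{t}\|Q_\tau^\star-Q_\tau^{\pi^{(0)}}\|_\infty$, which is the claim after dividing by $\tau$.

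The main obstacle is Step 1: proving the pointwise improvement $V_\tau^{\pi^{(t+1)}}\ge\tau\log Z^{(t)}$ (hence $V_\tau^{\pi^{(t+1)}}\ge V_\tau^{\pi^{(t)}}$). This is the entropy-regularized analogue of the policy-improvement lemma and, because the regularizer itself changes when the policy changes, it cannot simply be read off the unregularized performance-difference lemma (Lemma~\ref{lemma:performance difference}); it must be extracted from the fixed-point identity for $\Delta^{(t)}$ as above. Everything after Step 1 is a routine $\gamma$-contraction argument in $\|\cdot\|_\infty$. (For a general step-size $\eta\le(1-\gamma)/\tau$, which this proposition does not require, one would additionally track the auxiliary ``logit'' sequence of \citet{cen2021fast}; that refinement is where the constant $3$ and the extra $\gamma$ factor in the stated bound originate.)
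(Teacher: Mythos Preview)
The paper does not give its own proof of this proposition: it is quoted verbatim as a result of \citet{cen2021fast} (see the text immediately preceding Proposition~\ref{prop:cen_app} in Appendix~\ref{subsec:entropy-regularized npg_app}), so there is nothing to compare against in the paper itself. Your argument is a correct self-contained proof for the particular step-size $\eta=(1-\gamma)/\tau$, and it is precisely the specialization of the \citet{cen2021fast} analysis to that case: at this step-size the NPG update collapses to soft policy iteration, the policy-improvement sandwich $\tau\log Z^{(t)}\le V_\tau^{\pi^{(t+1)}}\le V_\tau^\star$ holds, and the $\gamma$-contraction of the soft Bellman operator then drives the geometric rate. Your Step~1 derivation of the sandwich via the fixed-point identity $\Delta^{(t)}=(I-\gamma P^{\pi^{(t+1)}})^{-1}g^{(t)}\ge g^{(t)}\ge 0$ is clean and correct.

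One small caveat on the constants: your argument yields $\|V_\tau^\star-V_\tau^{\pi^{(t+1)}}\|_\infty\le\gamma^{t}\|Q_\tau^\star-Q_\tau^{\pi^{(0)}}\|_\infty$, which is \emph{not} a loose rewriting of the stated $3\gamma^{t+1}$ bound (the implication $\gamma^{t}\le 3\gamma^{t+1}$ requires $\gamma\ge 1/3$). In fact your bound is sharper than the stated one for the step-size in question; the factor $3$ and the extra $\gamma$ in the proposition are inherited from the general-$\eta$ statement in \citet{cen2021fast}, exactly as you note in your closing remark. The policy bound you derive matches the statement exactly.
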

% In Proposition \ref{prop:cen_app}, the value function $V^\pi_\tau$ is understood as a vector in $\mathbb{R}^{|\mc{S}|}$, i.e. $V^\pi_\tau = \left[V^\pi_\tau(s) \right]_{s\in\mc{S}}$.
We note that \citet{cen2021fast} proved a more general result for all step-sizess $\eta \in [0,(1-\gamma)/\tau]$, whereas the fastest convergence is achieved with the maximum step-size $\eta = (1-\gamma)/\tau$.
\begin{proposition}[Local quadratic convergence]\label{prop:cen_quadratic_app}
{Suppose that the entropy-regularized NPG algorithm (\ref{eq:policy update}) with the step-size $\eta = (1-\gamma)/\tau$ satisfies 
\begin{equation}
\left\|\log \pi^{(t)}-\log \pi_{\tau}^{\star}\right\|_{\infty} \leq 1,
\end{equation}
for all $t\geq 0$. There exist problem-dependent constants $K_1$ and $K_2$ such that
\begin{equation}
V_{\tau}^{\star}(\rho)-V_{\tau}^{(t)}(\rho) \leq K_1\left(K_2 \left(V_{\tau}^{\star}\left(\mu_{\tau}^{\star}\right)-V_{\tau}^{\pi^{(0)}}\left(\mu_{\tau}^{\star}\right)\right)\right)^{2^t}.
\end{equation}}
\end{proposition}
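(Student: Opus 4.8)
This is a restatement of a result of \citet{cen2021fast}, so the plan is to reproduce their soft-policy-iteration argument; I outline the route. The first move is to recast the entropy-regularized NPG update (\ref{eq:policy update}) with the maximal step-size $\eta = (1-\gamma)/\tau$ as an \emph{exact soft policy iteration}: the factor $(\pi^{(t)}(a|s))^{1-\eta\tau/(1-\gamma)}$ collapses to $1$, so $\pi^{(t+1)}(\cdot|s)$ is precisely the Gibbs policy of $Q_\tau^{(t)}(s,\cdot)/\tau$, and $\tau\log Z^{(t)}(s) = \tau\log\sum_{a}\exp(Q_\tau^{(t)}(s,a)/\tau)$ is the one-step soft look-ahead value. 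Letting $\mathcal{T}^\star_\tau$ denote the soft Bellman optimality operator (unique fixed point $Q_\tau^\star$, Gibbs policy $\pi_\tau^\star$), this identifies the NPG iterates with the policy-iteration sequence for $\mathcal{T}^\star_\tau$, and $\pi_\tau^\star$, $\pi^{(t)}$ become the Gibbs policies of $Q_\tau^\star/\tau$, $Q_\tau^{(t-1)}/\tau$ respectively.

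Next I would establish a one-step recursion on the $Q$-error $\alpha^{(t)} := \|Q_\tau^\star - Q_\tau^{(t)}\|_\infty$. Subtracting the soft Bellman evaluation equation for $\pi^{(t+1)}$ from the soft Bellman optimality equation for $Q_\tau^\star$, the residual $Q_\tau^\star - Q_\tau^{(t+1)}$ is a $\gamma$-discounted propagation through the Markov kernel of the state-wise gap between the soft look-ahead value at $Q_\tau^\star$ and a $\pi^{(t+1)}$-weighted average of $Q_\tau^\star - Q_\tau^{(t)}$. The heart of the argument — and the step I expect to be the main obstacle — is to show that inside the region $\|\log\pi^{(t)} - \log\pi_\tau^\star\|_\infty \le 1$ (equivalently $\alpha^{(t-1)} = O(\tau)$) the first-order parts of these two terms cancel, leaving only a quadratic remainder $O((\alpha^{(t)})^2/\tau)$. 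The cancellation is the statement that the gradient of the log-sum-exp map at $Q_\tau^\star(s,\cdot)/\tau$ is the distribution $\pi_\tau^\star(\cdot|s)$, which agrees with the weighting $\pi^{(t+1)}(\cdot|s)$ up to first order; quantitatively one needs a second-order Taylor estimate of log-sum-exp together with the elementary bound $|e^x - 1 - x| \le \frac{1}{2}x^2 e^{|x|}$ for $|x|\le 1$ applied to $x = (Q_\tau^{(t)}(s,a) - Q_\tau^\star(s,a))/\tau$. This yields $\alpha^{(t+1)} \le \gamma C\,(\alpha^{(t)})^2/\tau$ for a problem constant $C$; the hypothesis $\|\log\pi^{(t)}-\log\pi_\tau^\star\|_\infty\le 1$ for all $t$ (a neighborhood that is consistent with the linear-rate bound of Proposition \ref{prop:cen_app}) guarantees the recursion keeps applying.

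Unrolling is then routine: with $\beta^{(t)} := \gamma C\alpha^{(t)}/\tau$ the recursion reads $\beta^{(t+1)} \le (\beta^{(t)})^2$, hence $\beta^{(t)} \le (\beta^{(0)})^{2^t}$, the promised doubly-exponential rate. Finally, to convert the $Q$-error into the stated bound on $V_\tau^\star(\rho) - V_\tau^{(t)}(\rho)$, I would invoke the soft sub-optimality identity (Lemma \ref{lemma:soft sub-optimality}), which writes $V_\tau^\star(\rho) - V_\tau^{(t)}(\rho) = \frac{\tau}{1-\gamma}\sum_s d_\rho^{\pi^{(t)}}(s)\,D_{\mathrm{KL}}(\pi^{(t)}(\cdot|s)\|\pi_\tau^\star(\cdot|s))$, bound each KL by $O((\alpha^{(t-1)})^2/\tau^2)$ from the Gibbs representation of both policies together with Pinsker's inequality, and handle the state-distribution mismatch by re-centering at the stationary distribution $\mu_\tau^\star$ of $\pi_\tau^\star$ — using $d_{\mu_\tau^\star}^{\pi_\tau^\star} = \mu_\tau^\star$, so that $V_\tau^\star(\mu_\tau^\star) - V_\tau^{(0)}(\mu_\tau^\star)$ is exactly the quantity raised to the power $2^t$. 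Tracking the Taylor remainder, the contraction factor $\gamma$, and the mismatch ratio through these steps produces the constants $K_1, K_2$.
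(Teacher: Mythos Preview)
The paper does not provide a proof of this proposition: it is stated in Appendix \ref{subsec:entropy-regularized npg_app} purely as a summary of a result due to \citet{cen2021fast}, with no argument given beyond the citation. Your proposal therefore already goes further than the paper does by sketching the underlying soft-policy-iteration analysis. The outline you give --- identifying the maximal-step-size NPG update with exact soft policy iteration, deriving a quadratic recursion on the $Q$-error via second-order control of the log-sum-exp map, unrolling, and converting to a value-function bound through Lemma \ref{lemma:soft sub-optimality} and re-centering at $\mu_\tau^\star$ --- is consistent with the route in \citet{cen2021fast}, so there is nothing in the paper's own content against which to flag a discrepancy.
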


In our work, $V_\Lamb^\pi(\rho)$ is the entropy-regularized value function associated with the Lagrangian $L(\pi,\Lamb)$, which has the reward function $r_\Lamb(s,a) = r(s,a) + \Lamb ^\top \mb{g}(s,a)$.
%Since $\Lamb \in \Lambda =  \{\Lamb \mid 0\leq \lambda_i \leq \left(V_\tau^\star-V_{\tau}^{\overline{\pi}}(\rho)\right) / \xi_i,\text{ for all } i \in [n]\}$, we have that $r_\Lamb(s,a) \in [0,1+C_2]$, where $C_2 = \left(V_\tau^\star-V_{\tau}^{\overline{\pi}}(\rho) \right)\left(\sum_{i=1}^n {1}/{\xi_i}\right)$.
Therefore, Proposition \ref{prop:cen_app} implies that, with step-size $\eta = (1-\gamma)/\tau$, the error bound $\left\|\log \pi_{\Lamb}-\log \pi^{(t)}\right\|_{\infty} \leq \varepsilon$ can be achieved in 
\begin{equation}\label{eq:npg complexity_app}
\frac{1}{1-\gamma} \log \left(\frac{2\left\|Q_\Lamb^{\pi_\Lamb}-Q_{\Lamb}^{\pi^{(0)}}\right\|_{\infty}}{\varepsilon\tau}\right),
\end{equation}
iterations (cf. (\ref{eq:npg complexity})).
Furthermore, since $\Lamb \in \Lambda =  \{\Lamb \mid 0\leq \lambda_i \leq \left(V_\tau^\star-V_{\tau}^{\overline{\pi}}(\rho)\right) / \xi_i,\text{ for all } i \in [n]\}$, we have that $r_\Lamb(s,a) \in [0,1+C_2]$, where $C_2 = \left(V_\tau^\star-V_{\tau}^{\overline{\pi}}(\rho) \right)\left(\sum_{i=1}^n {1}/{\xi_i}\right)$.
Together with the elementary entropy bound $\mathcal{H}(\rho, \pi) \in \left[0,\log|\mathcal{A}|/(1-\gamma)\right]$, it holds that 
\begin{equation}\label{eq:Q lambda bound app}
Q_\Lamb^\pi(s,a)\in \left[0,\frac{1+C_2+\tau\log|\mathcal{A}|}{1-\gamma} \right],
\end{equation}
for all $\Lamb\in \Lambda$.
% due to the bound $[0,1+C_2]$ on the reward function and the entropy regularization $\tau \cdot \mathcal{H}(\rho, \pi)$ , 
Thus, we can drop the dependency of Q-function in (\ref{eq:npg complexity_app}) to obtain the following bound on the number of iterations:
\begin{equation}\label{eq:npg complexity_app_new}
\frac{1}{1-\gamma} \log \left(\frac{2\left(1+C_2+\tau\log |\mathcal{A}|\right)}{\varepsilon\tau(1-\gamma)}\right).
\end{equation}

\subsection{Accelerated Gradient Projection Method with Inexact Gradient}\label{subsec:gradient projection}
Gradient projection method is a feasible direction method for solving constrained optimization problems of the form:
\begin{equation}\label{prob:constrained opt_app}
\begin{aligned}
\min_x & \ f(x)\\
s.t. & \ x \in X
\end{aligned}
\end{equation}
where $f(x)$ is convex and differentiable and $X$ is convex.
The general update scheme is
\begin{equation}\label{eq:gradient projection_app}
x^{(k+1)} = \mathcal{P}_X\left(x^{(k)}-\alpha^k\nabla f\left(x^{(k)}\right)\right).
\end{equation}
When the gradient is inexact, \citet{schmidt2011convergence} proved the following bound for the general update (\ref{eq:gradient projection_app}) and the accelerated update (\ref{eq:accelerated gradient projection}).
\begin{proposition}[Convergence of inexact gradient projection method]\label{prop:gradient projection_app}
Assume that $f(x)$ is convex and $L$-smooth on $X$, and that we have access to a gradient oracle $h(x)$ such that $\left\|\nabla f(x) - h(x) \right\|_2 \leq \delta$ for all $x\in X$.
Let $x^\star = \operatorname{arg}\min_{x\in X}f(x)$.
By selecting $\alpha_k = {1}/{L}$ and $\beta_k = {(k-1)}/{(k+2)}$, then the iterates of algorithm (\ref{eq:gradient projection_app}) satisfy
\begin{equation}
f\left(\frac{1}{k} \sum_{i=1}^{k} x^{(i)}\right)-f\left(x^{\star}\right) \leqslant \frac{L}{2 k}\left(\left\|x^{(0)}-x^{\star}\right\|_2+\frac{2k\delta}{L}\right)^{2} = O\left(\frac{1}{k} \right) + O\left(k^2\delta^2 \right)+O\left(k\delta \right).
\end{equation}
Moreover, for the accelerated version (\ref{eq:accelerated gradient projection}), it holds that
\begin{equation}
f\left(x^{(k)}\right)-f\left(x^{\star}\right) \leqslant \frac{2L}{(k+1)^2}\left(\left\|x^{(0)}-x^{\star}\right\|_2+\frac{(k+1)k\delta}{L}\right)^{2} = O\left(\frac{1}{k^2} \right) + O\left(k^2\delta^2 \right)+O\left(\delta \right).
\end{equation}
\end{proposition}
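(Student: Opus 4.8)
The plan is to follow the inexact first-order framework of \citet{schmidt2011convergence}, treating the gradient-projection step as the special case of an inexact proximal-gradient step in which the nonsmooth term is the indicator of the convex set $X$. I would first isolate a single-step progress inequality that underlies both bounds. Writing $e_k := h(y) - \nabla f(y)$ for the gradient error at the query point $y$ (with $y = x^{(k)}$ in the basic scheme and $y = y^{(k)}$ in the accelerated one), the update $x^{(k+1)} = \mathcal{P}_X(y - \tfrac{1}{L}h(y))$ is the minimizer over $X$ of $\langle h(y), \cdot - y\rangle + \tfrac{L}{2}\|\cdot - y\|_2^2$. Combining its first-order optimality condition with the $L$-smoothness and convexity of $f$ gives, for every $z \in X$,
\begin{equation}
f(x^{(k+1)}) \le f(z) + \frac{L}{2}\left(\|z - y\|_2^2 - \|z - x^{(k+1)}\|_2^2\right) + \langle e_k,\, z - x^{(k+1)}\rangle,
\end{equation}
where $\|e_k\|_2 \le \delta$. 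This inequality is the backbone of the argument.

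For the non-accelerated update (\ref{eq:gradient projection_app}) I would take $y = x^{(k)}$ and $z = x^\star$, bound the error term by $\delta\|x^\star - x^{(k+1)}\|_2$, and telescope over $k$. Discarding the (nonnegative) accumulated optimality gaps yields a self-referential inequality of the form $\|x^{(K)} - x^\star\|_2^2 \le \|x^{(0)} - x^\star\|_2^2 + \tfrac{2\delta}{L}\sum_{k=1}^K \|x^{(k)} - x^\star\|_2$. Applying the standard sequence lemma of \citet{schmidt2011convergence} --- if a nonnegative sequence $u_K$ satisfies $u_K^2 \le S + \sum_{k\le K}\lambda_k u_k$ then $u_K \le \tfrac{1}{2}\sum_k\lambda_k + (S + (\tfrac{1}{2}\sum_k\lambda_k)^2)^{1/2}$ --- I obtain the a priori bound $\|x^{(k)} - x^\star\|_2 \le \|x^{(0)} - x^\star\|_2 + \tfrac{2k\delta}{L}$, so the iterates remain bounded. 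Feeding this back into the telescoped sum, dividing by $K$, and invoking convexity (Jensen) at the average $\tfrac{1}{K}\sum_{i=1}^K x^{(i)}$ delivers the claimed bound $\tfrac{L}{2K}\bigl(\|x^{(0)} - x^\star\|_2 + \tfrac{2K\delta}{L}\bigr)^2$.

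For the accelerated scheme (\ref{eq:accelerated gradient projection}) I would set up a Lyapunov/estimate-sequence argument. With Nesterov weights $t_k = (k+2)/2$, which satisfy $t_{k+1}(t_{k+1}-1) \le t_k^2$ and reproduce $\beta_k = (t_{k-1}-1)/t_k = (k-1)/(k+2)$, I would define $\mathcal{E}_k := t_k^2\bigl(f(x^{(k)}) - f(x^\star)\bigr) + \tfrac{L}{2}\|v_k - x^\star\|_2^2$, where $v_k$ is the usual auxiliary sequence (a suitable affine combination of $x^{(k)}$ and $x^{(k-1)}$ for which $y^{(k)}$ is the natural convex combination with $x^{(k)}$). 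Applying the single-step inequality at $y = y^{(k)}$ with $z$ chosen as the corresponding convex combination of $x^\star$ and $x^{(k)}$, and using the momentum identities together with $t_{k+1}(t_{k+1}-1)\le t_k^2$, yields a recursion of the form $\mathcal{E}_{k+1} \le \mathcal{E}_k + t_{k+1}\langle e_k,\, x^\star - v_{k+1}\rangle$. Telescoping, bounding each inner product by $t_{k+1}\delta\|x^\star - v_{k+1}\|_2$, and invoking the same sequence lemma to control the weighted accumulated error $\sum_k t_{k+1}\delta\|x^\star - v_{k+1}\|_2$ gives, after dividing by $t_K^2 = \Theta(K^2)$, the stated bound $\tfrac{2L}{(k+1)^2}\bigl(\|x^{(0)} - x^\star\|_2 + \tfrac{(k+1)k\delta}{L}\bigr)^2$.

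I expect the accelerated case to be the main obstacle: choosing the Lyapunov function, the weights $t_k$, and the auxiliary sequence $v_k$ so that the gradient error enters cleanly as the single term $t_{k+1}\langle e_k,\cdot\rangle$, and then applying the bounding lemma correctly to the \emph{weighted} accumulated error --- whose squared contribution produces the $O(k^2\delta^2)$ term, while its linear contribution, after division by $t_K^2$, produces the non-vanishing $O(\delta)$ term. The non-accelerated part is comparatively routine once the single-step inequality and the sequence lemma are in place.
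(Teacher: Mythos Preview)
Your proposal is correct and faithfully reconstructs the argument of \citet{schmidt2011convergence}; the paper itself does not prove this proposition but simply quotes it from that reference, so there is no separate ``paper's proof'' to compare against. Your single-step descent inequality, the telescoping plus sequence-lemma treatment of the basic scheme, and the estimate-sequence/Lyapunov analysis with $t_k=(k+2)/2$ for the accelerated scheme are exactly the ingredients in the cited source (specialized to the indicator-of-$X$ proximal term), so your outline matches the intended derivation.
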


\subsection{Proofs of Results in Section \ref{sec:convergence}}\label{subsec:convergence proof_app}
\begin{proposition}[Restatement of Proposition \ref{prop:gradient evaluation}]\label{prop:gradient evaluation_app}
Suppose that $\pi$ is an approximate solution to (\ref{eq:pi_lambda evaluation}) such that $\|\log \pi - \log \pi_\Lamb\|_\infty \leq \varepsilon$.  
The gradient estimator defined by 
\begin{equation}
\widetilde \nabla D\left(\Lamb\right) :=U_\mb{g}^{\pi}(\rho) -\mb{b} =  \left(U_{g_1}^{\pi}(\rho) -b_1,\dots,U_{g_n}^{\pi}(\rho) -b_n  \right),
\end{equation}
satisfies
\begin{equation}
\left\|\widetilde \nabla D\left(\Lamb\right) - \nabla D\left(\Lamb\right)\right\|_2  \leq \frac{\sqrt{n}|\mathcal{A}|}{(1-\gamma)^2}\varepsilon.
\end{equation}
\end{proposition}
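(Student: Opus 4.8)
The plan is to bound $\|\widetilde\nabla D(\Lamb) - \nabla D(\Lamb)\|_2$ componentwise, by controlling the discrepancy $|U_{g_i}^\pi(\rho) - U_{g_i}^{\pi_\Lamb}(\rho)|$ for each $i\in[n]$ and then aggregating over the $n$ coordinates. Recall from Proposition \ref{prop:dual smoothness} that $\nabla D(\Lamb) = U_\mb{g}^{\pi_\Lamb}(\rho) - \mb{b}$, so the $i$-th component of $\widetilde\nabla D(\Lamb) - \nabla D(\Lamb)$ is exactly $U_{g_i}^{\pi}(\rho) - U_{g_i}^{\pi_\Lamb}(\rho)$. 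Thus the task reduces to estimating how much a discounted utility changes when the policy is perturbed from $\pi_\Lamb$ to $\pi$.

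First I would pass from the $\ell_\infty$ bound on the log-policies to an $\ell_\infty$ bound on the policies themselves: since $\pi$ and $\pi_\Lamb$ are probability distributions (entries in $[0,1]$) and $\log$ is $1$-Lipschitz on $[1,\infty)$ but expansive near $0$, I would use the elementary inequality $|p - q| \le |\log p - \log q|$ valid for $p,q \in (0,1]$ (which follows from the mean value theorem since the derivative of $\log$ is $\ge 1$ on $(0,1]$), giving $\|\pi - \pi_\Lamb\|_\infty \le \|\log\pi - \log\pi_\Lamb\|_\infty \le \varepsilon$ entrywise, hence $\|\pi(\cdot\vert s) - \pi_\Lamb(\cdot\vert s)\|_1 \le |\mathcal{A}|\varepsilon$ for every state $s$. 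Next I would invoke the performance difference lemma for the unregularized value/utility function (Lemma \ref{lemma:performance difference} referenced in the excerpt): for the utility $g_i$,
\begin{equation}
U_{g_i}^{\pi}(\rho) - U_{g_i}^{\pi_\Lamb}(\rho) = \frac{1}{1-\gamma}\sum_{s} d_\rho^{\pi}(s)\sum_{a}\big(\pi(a\vert s) - \pi_\Lamb(a\vert s)\big)Q_{g_i}^{\pi_\Lamb}(s,a).
\end{equation}
Taking absolute values, using $Q_{g_i}^{\pi_\Lamb}(s,a) \le 1/(1-\gamma)$ (since $g_i \in [0,1]$), and $\sum_s d_\rho^\pi(s)=1$ together with the per-state $\ell_1$ bound $\|\pi(\cdot\vert s)-\pi_\Lamb(\cdot\vert s)\|_1 \le |\mathcal{A}|\varepsilon$, I obtain $|U_{g_i}^{\pi}(\rho) - U_{g_i}^{\pi_\Lamb}(\rho)| \le \frac{|\mathcal{A}|\varepsilon}{(1-\gamma)^2}$.

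Finally I would assemble the Euclidean norm over the $n$ constraint coordinates: $\|\widetilde\nabla D(\Lamb) - \nabla D(\Lamb)\|_2 = \big(\sum_{i=1}^n |U_{g_i}^{\pi}(\rho)-U_{g_i}^{\pi_\Lamb}(\rho)|^2\big)^{1/2} \le \sqrt{n}\cdot\frac{|\mathcal{A}|\varepsilon}{(1-\gamma)^2}$, which is the claimed bound. I do not anticipate a genuine obstacle here — the argument is a chain of standard estimates — but the one place requiring a little care is the step $|p-q| \le |\log p - \log q|$ for $p,q\in(0,1]$: one must check the direction of the inequality (it is the $\log$ that is the larger quantity, because the slope of $\log$ exceeds $1$ on the unit interval), and note that soft-max policies indeed have strictly positive entries so $\log$ is well-defined. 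Everything else is bookkeeping with the performance difference lemma and the uniform bound $Q_{g_i} \le 1/(1-\gamma)$.
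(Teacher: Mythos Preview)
Your proposal is correct and follows essentially the same route as the paper's proof: convert the log-policy $\ell_\infty$ bound to a policy $\ell_\infty$ bound via $(\log x)' \ge 1$ on $(0,1]$, apply the performance difference lemma together with $Q_{g_i}\le 1/(1-\gamma)$ to get $|U_{g_i}^\pi(\rho)-U_{g_i}^{\pi_\Lamb}(\rho)|\le |\mathcal{A}|\varepsilon/(1-\gamma)^2$, and finish with $\|\cdot\|_2\le\sqrt{n}\|\cdot\|_\infty$. The only cosmetic difference is that you invoke the form of the performance difference lemma with $d_\rho^{\pi}$ and $Q_{g_i}^{\pi_\Lamb}$, whereas the paper uses the symmetric form with $d_\rho^{\pi_\Lamb}$ and $Q_{g_i}^{\pi}$; both are valid and lead to the same bound.
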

\begin{proof}
Since $\left(\log x \right)'  = {1}/{x} \geq 1$ for all $x\in (0,1]$, it holds that $\|\pi-\pi_\Lamb\|_\infty \leq \|\log \pi - \log \pi_\Lamb\|_\infty \leq \varepsilon$.
To bound the quantity $\left\|\widetilde \nabla D\left(\Lamb\right) - \nabla D\left(\Lamb\right)\right\|_2 = \left\|U_\mb{g}^\pi (\rho) - U_\mb{g}^{\pi_\Lamb} (\rho) \right\|_2$, we can either use the Lipschitz continuity of $U_\mb{g}^{\pi}(\rho)$ (cf. Lemma \ref{lemma:lipschitz value function}) or use the performance difference lemma (cf. Lemma \ref{lemma:performance difference}).

With the Lipschitz continuity, we have
\begin{equation}\label{eq:lipschitz method_app}
\left|U_{g_i}^\pi(\rho) - U_{g_i}^{\pi_\Lamb}(\rho)\right| \leq \ell_c \|\pi-\pi_\Lamb\|_2 \leq \ell_c\sqrt{n}\|\pi-\pi_\Lamb\|_\infty = \frac{\sqrt{n|\mathcal{A}|}}{(1-\gamma)^2}\varepsilon,
\end{equation}
where we have used the inequality $\|\cdot\|_2\leq \sqrt{n}\|\cdot\|_\infty$.
Therefore,
\begin{equation}\label{eq:lipschitz method result_app}
\left\|\widetilde \nabla D\left(\Lamb\right) - \nabla D\left(\Lamb\right)\right\|_2
\leq \sqrt{n}\left\|\widetilde \nabla D\left(\Lamb\right) - \nabla D\left(\Lamb\right)\right\|_\infty =\sqrt{n}\left\|U_\mb{g}^\pi (\rho) - U_\mb{g}^{\pi_\Lamb} (\rho) \right\|_\infty \leq \frac{ n\sqrt{|\mathcal{A}|}}{(1-\gamma)^2}\varepsilon.
\end{equation}
On the other hand, we can use the performance difference lemma to obtain
\begin{equation}\label{eq:performance difference method_app}
\begin{aligned}
\left|U_{g_i}^\pi(\rho) - U_{g_i}^{\pi_\Lamb}(\rho)\right| &= \left|\frac{1}{1-\gamma} \sum_{s\in \mathcal{S}} d_\rho^{\pi_\Lamb}(s) \sum_{a\in \mathcal{A}} \left(\pi(a\vert s) - \pi_\Lamb(a\vert s)\right) Q_{g_i}^{\pi}(s,a)\right|\\
& \leq \frac{1}{1-\gamma} \sum_{s\in \mathcal{S}} d_\rho^{\pi_\Lamb}(s) \sum_{a\in \mathcal{A}} \left|\pi(a\vert s) - \pi_\Lamb(a\vert s)\right| Q_{g_i}^{\pi}(s,a)\\
&\overset{(i)}\leq \frac{\varepsilon}{1-\gamma} \sum_{s\in \mathcal{S}} d_\rho^{\pi_\Lamb}(s) \sum_{a\in \mathcal{A}} Q_{g_i}^{\pi}(s,a)\\
&\overset{(ii)}\leq  \frac{|\mathcal{A}|}{(1-\gamma)^2}\varepsilon,
\end{aligned}
\end{equation}
where $(i)$ is based on the bound $\|\pi-\pi_\Lamb\|_\infty\leq \varepsilon	$ and $(ii)$ is due to $Q^\pi_{g_i}(s,a)\leq {1}/({1-\gamma})$ and the fact that $d_\rho^{\pi_\Lamb}(\cdot)$ is a probability distribution.
Repeating (\ref{eq:lipschitz method result_app}) with the bound (\ref{eq:performance difference method_app}) yields that
\begin{equation}\label{eq:performance difference method result_app}
\left\|\widetilde \nabla D\left(\Lamb\right) - \nabla D\left(\Lamb\right)\right\|_2
\leq \sqrt{n}\left\|\widetilde \nabla D\left(\Lamb\right) - \nabla D\left(\Lamb\right)\right\|_\infty =\sqrt{n}\left\|U_\mb{g}^\pi (\rho) - U_\mb{g}^{\pi_\Lamb} (\rho) \right\|_\infty \leq \frac{\sqrt{n}|\mathcal{A}|}{(1-\gamma)^2}\varepsilon.
\end{equation}
Equations (\ref{eq:lipschitz method result_app}) and (\ref{eq:performance difference method result_app}) give two upper bounds on the quantity $\left\|\widetilde \nabla D\left(\Lamb\right) - \nabla D\left(\Lamb\right)\right\|_2$.
In this work, we use the bound (\ref{eq:performance difference method result_app}), as it has a weaker dependence on the number of constraints $n$.
This completes the proof.
\end{proof}
Proposition \ref{prop:gradient evaluation} implies that running Algorithm \ref{alg:npg} with the step-size $\eta = (1-\gamma)/\tau$ for 
\begin{equation}\label{eq:N_2_app}
\frac{1}{1-\gamma}\log \left( \frac{2\sqrt{n} \left|\mathcal{A}\right|\left(1+C_2+\tau\log |\mathcal{A}|\right)}{\delta (1-\gamma)^3\tau} \right).
\end{equation}
iterations, where $C_2 = \left(V_\tau^\star-V_{\tau}^{\overline{\pi}}(\rho) \right)\left(\sum_{i=1}^n {1}/{\xi_i}\right)$, guarantees a $\delta$-accurate gradient estimation $\widetilde \nabla D\left(\Lamb\right)$, i.e.  $\left\|\widetilde \nabla D\left(\Lamb\right) - \nabla D\left(\Lamb\right)\right\|_2 \leq \delta$ (cf. (\ref{eq:N_2})).

\begin{theorem}[Restatement of Theorem \ref{thm:convergence}]\label{thm:convergence_app}
Suppose that Assumptions \ref{assump:slater}, \ref{assump:strong duality}, and \ref{assump:initial distribution} hold.  For every $\varepsilon_1>0$, Algorithm \ref{alg:dualpg} with a random initialization and the parameters $\eta = (1-\gamma)/\tau$, $\alpha_t = {1}/{\ell}$, $\beta_t = {(t-1)}/{(t+2)}$, and 
\begin{equation}
N_1 = T,\ N_2 = \frac{1}{1-\gamma}\log \left( \frac{2\sqrt{n} \left|\mathcal{A}\right|T(T+1)\left(1+C_2+\tau\log|\mathcal{A}|\right)}{ (1-\gamma)^3\tau\ell} \right),\ N_3 = \frac{1}{1-\gamma} \log \left(\frac{2\sqrt{n}\left(1+C_2+\tau\log|\mathcal{A}|\right)}{\varepsilon_1 \tau(1-\gamma)}\right),
\end{equation}
returns a solution pair $(\pi, \Lamb)$ such that
\begin{subequations}
\begin{align}
D(\Lamb) - D_\tau^\star &\leq \varepsilon_0,\label{eq:thm results_app_0} \\
\left\|\pi - \pi_\tau^\star\right\|_2 &\leq C_1\sqrt{\varepsilon_0} + \varepsilon_1,\label{eq:thm results_app_1}\\
\left|V_\tau^{\pi}(\rho) - V_\tau^\star\right| &\leq 2\varepsilon_0 + \ell_cC_1C_2\sqrt{\varepsilon_0} + \left(\ell_cC_2 + \frac{3\gamma}{2\tau\sqrt{n}}\right)\varepsilon_1,\label{eq:thm results_app_3}\\
\max_{i\in [n]} \left[b_i - U_{g_i}^{\pi}(\rho) \right]_+ &\leq \ell_c\left(C_1\sqrt{\varepsilon} +\varepsilon_1 \right),\label{eq:thm results_app_2}
\end{align}
\end{subequations}
where 
% $\pi_\tau^\star$ is an optimal policy to (\ref{prob:maxentrl}), and where
\begin{equation}\label{eq:constants epsilon ell_app}
\varepsilon_0 = \frac{2\ell}{(T+1)^2}\left(\left\|\Lamb^{(0)}-\Lamb^{\star}\right\|_2+1\right)^{2},\ \ell = \frac{2\ln 2 \left(n|\mathcal{A}|+(1-\gamma)^2\sqrt{n|\mathcal{A}|}\right)}{\tau (1-\gamma)^3 d},
\end{equation}
and
\begin{equation}
\ell_c = \frac{\sqrt{|\mathcal{A}|}}{(1-\gamma)^2},\ C_1 = \sqrt{\frac{2 (1-\gamma)\ln 2}{\tau d }},\ C_2 = \left(V_\tau^\star-V_{\tau}^{\overline{\pi}}(\rho) \right)\left(\sum_{i=1}^n \frac{1}{\xi_i}\right).
\end{equation}
The total iteration complexity is $N_1\times N_2 +N_3 = \widetilde{\mathcal{O}}(T)$ with primal error bounds $\mathcal{O}\left({1}/{T}\right)$ given by (\ref{eq:thm results_app_1})-(\ref{eq:thm results_app_2}) and a dual error bound $\mathcal{O}\left({1}/{T^2}\right)$ given by (\ref{eq:thm results_app_0}).
% Let $N_1 = T$,
% and
\end{theorem}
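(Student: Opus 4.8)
The plan is to decouple the argument into an outer loop --- inexact accelerated gradient projection on the convex dual $D$ over $\Lambda$ --- and an inner loop --- entropy-regularized NPG serving as an inexact gradient oracle --- and then to convert the resulting dual guarantee into primal guarantees through Proposition \ref{prop:dual to primal}. First I would treat the NPG subroutine (Algorithm \ref{alg:npg}) as an approximate evaluator of $\nabla D(\Lamb) = U_\mb{g}^{\pi_\Lamb}(\rho) - \mb{b}$: for any $\Lamb$ in the (bounded) dual domain, computing $\pi_\Lamb$ is an unconstrained entropy-regularized MDP with reward $r_\Lamb = r + \Lamb^\top\mb{g}$, whose soft $Q$-function is uniformly bounded by $(1+C_2+\tau\log|\mathcal{A}|)/(1-\gamma)$ as in (\ref{eq:Q lambda bound app}); hence the global linear convergence of Proposition \ref{prop:cen_app} makes $\|\log\widetilde\pi - \log\pi_\Lamb\|_\infty$ exponentially small in $N_2$, and Proposition \ref{prop:gradient evaluation} turns this into a gradient error $\|\widetilde\nabla D(\Lamb) - \nabla D(\Lamb)\|_2 \le \delta$. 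Choosing $N_2 = \mathcal{O}(\log T)$ with the explicit constant in the statement makes $\delta \le \ell/(T(T+1))$.

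For the outer loop I would invoke the inexact accelerated gradient projection bound of \citet{schmidt2011convergence} (Proposition \ref{prop:gradient projection_app}), using that $D$ is convex and $\ell$-smooth (Proposition \ref{prop:dual smoothness}), with $\alpha_t = 1/\ell$ and $\beta_t = (t-1)/(t+2)$. After $N_1 = T$ iterations this yields $D(\Lamb^{(T)}) - D_\tau^\star \le \tfrac{2\ell}{(T+1)^2}\big(\|\Lamb^{(0)}-\Lamb^\star\|_2 + T(T+1)\delta/\ell\big)^2$, and the choice of $\delta$ above makes the bracket at most $\|\Lamb^{(0)}-\Lamb^\star\|_2 + 1$, giving (\ref{eq:thm results_app_0}) with the stated $\varepsilon_0 = \mathcal{O}(1/T^2)$. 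One subtlety to dispatch here is that the extrapolation point $\muu^{(t)} = \Lamb^{(t)} + \beta_t(\Lamb^{(t)} - \Lamb^{(t-1)})$ can leave $\Lambda$, and may even acquire negative coordinates; but since $\beta_t < 1$ the iterates stay in a fixed bounded box, on which both the gradient formula and the smoothness estimate of Proposition \ref{prop:dual smoothness} persist up to a constant factor (and one may equivalently run the method on the computable enlargement $\widetilde\Lambda$, since $V_\tau^\star$ appearing in $\Lambda$ is unknown), so no order is lost.

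Next I would transfer the dual bound to the primal side. The returned multiplier $\Lamb = \Lamb^{(T)}$ is $\varepsilon_0$-optimal, so Proposition \ref{prop:dual to primal} applied to its \emph{exact} Lagrangian maximizer $\pi_\Lamb$ gives $\|\pi_\Lamb - \pi_\tau^\star\|_2 \le C_1\sqrt{\varepsilon_0}$, $\max_{i\in[n]}[b_i - U_{g_i}^{\pi_\Lamb}(\rho)]_+ \le \ell_c C_1\sqrt{\varepsilon_0}$, and $|V_\tau^{\pi_\Lamb}(\rho)-V_\tau^\star| \le 2\varepsilon_0 + \ell_c C_1 C_2\sqrt{\varepsilon_0}$ with the same constants $C_1,C_2$. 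Running the NPG subroutine for a further $N_3 = \mathcal{O}(\log(1/\varepsilon_1))$ iterations (line \ref{alg:final recover}) returns $\pi$ with $\|\log\pi - \log\pi_\Lamb\|_\infty$, and hence $\|\pi - \pi_\Lamb\|_2$, of order $\varepsilon_1$. Then (\ref{eq:thm results_app_1}) is the triangle inequality; (\ref{eq:thm results_app_2}) is the triangle inequality together with the Lipschitz continuity of $U_\mb{g}^\pi(\rho)$ in $\pi$ (Lemma \ref{lemma:lipschitz value function}); and for (\ref{eq:thm results_app_3}) I would bound $|V_\tau^\pi(\rho) - V_\tau^{\pi_\Lamb}(\rho)|$ by writing $V_\tau^\pi = V_\Lamb^\pi - \Lamb^\top U_\mb{g}^\pi$, controlling $V_\Lamb^{\pi_\Lamb}(\rho) - V_\Lamb^\pi(\rho)$ through the soft sub-optimality identity (a $d_\rho^\pi$-weighted KL divergence, at most $\|\log\pi - \log\pi_\Lamb\|_\infty$) combined with the value bound of Proposition \ref{prop:cen_app}, and controlling $\Lamb^\top(U_\mb{g}^{\pi_\Lamb}-U_\mb{g}^\pi)$ via Lemma \ref{lemma:lipschitz value function} and $\|\Lamb\|_1 \le C_2$; this produces the extra $\big(\ell_c C_2 + \tfrac{3\gamma}{2\tau\sqrt{n}}\big)\varepsilon_1$ term. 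The total iteration count is then $N_1 N_2 + N_3 = \widetilde{\mathcal{O}}(T)$.

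The main obstacle I anticipate is the error bookkeeping across the two loops: Proposition \ref{prop:gradient projection_app} only tolerates gradient error $\delta$ at an \emph{accumulated} cost of order $T^2\delta^2 + \delta$, so keeping the dual gap at $\mathcal{O}(1/T^2)$ forces $\delta = \mathcal{O}(1/T^2)$ --- which is affordable precisely because NPG converges \emph{linearly}, so $N_2$ need only grow logarithmically --- and one must then chain the problem-dependent constants from Propositions \ref{prop:cen_app}, \ref{prop:gradient evaluation} and \ref{prop:dual to primal} without degrading the orders. A secondary difficulty is the entropy contribution in the primal-optimality-gap transfer: a naive Lipschitz estimate on $\tau\mathcal{H}(\rho,\cdot)$ fails because $\nabla_\pi\mathcal{H}(\rho,\pi)$ is unbounded near the simplex boundary, which is exactly why the soft sub-optimality identity --- which reexpresses the Lagrangian value gap as a KL divergence controlled by the $\infty$-norm of the log-policy error, the quantity the NPG analysis bounds directly --- is the right tool; the boundary issues from the outer loop are handled by passing to $\widetilde\Lambda$ and checking the orders are unaffected.
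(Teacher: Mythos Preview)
Your proposal is correct and follows essentially the same route as the paper's proof: set $\delta=\ell/(T(T+1))$ so that $N_2=\mathcal{O}(\log T)$ NPG inner iterations yield a $\delta$-accurate dual gradient via Proposition \ref{prop:gradient evaluation} and (\ref{eq:npg complexity_app_new}); invoke the inexact accelerated bound (Proposition \ref{prop:gradient projection_app}) to get (\ref{eq:thm results_app_0}); apply Proposition \ref{prop:dual to primal} at $\Lamb^{(T)}$; run NPG for $N_3$ more steps to obtain $\pi$ with $\|\log\pi-\log\pi_{\Lamb^{(T)}}\|_\infty\le\varepsilon_1/\sqrt{n}$; and finish (\ref{eq:thm results_app_1})--(\ref{eq:thm results_app_2}) by the triangle inequality plus Lemma \ref{lemma:lipschitz value function}. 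Your treatment of (\ref{eq:thm results_app_3}) (first bound $|V_\tau^{\pi_\Lamb}-V_\tau^\star|$, then perturb $\pi_\Lamb\to\pi$ via $V_\tau^\pi=V_\Lamb^\pi-\Lamb^\top U_\mb{g}^\pi$, soft sub-optimality, and $\|\Lamb\|_1\le C_2$) is a slight reorganization of the paper's direct expansion $|V_\tau^{\widetilde\pi}-V_\tau^\star|\le|L(\widetilde\pi,\Lamb)-L(\pi_\Lamb,\Lamb)|+|L(\pi_\Lamb,\Lamb)-L(\pi_\tau^\star,\Lamb^\star)|+|\Lamb^\top(U_\mb{g}^{\widetilde\pi}-\mb{b})|$, but the ingredients and resulting constants are identical. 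You also flag the extrapolation point $\muu^{(t)}$ possibly leaving $\Lambda$, a technicality the paper does not address explicitly; your bounded-box/$\widetilde\Lambda$ workaround is reasonable.
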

\begin{proof}
Under Assumptions \ref{assump:slater} and \ref{assump:initial distribution}, it follows from Proposition \ref{prop:dual smoothness} that $D(\Lamb)$ is convex, differentiable, and $\ell$-smooth on $\Lambda$.
% , where 
% \begin{equation}
% \ell = \frac{2\ln 2 \left(n|\mathcal{A}|+(1-\gamma)^2\sqrt{n|\mathcal{A}|}\right)}{\tau (1-\gamma)^4 \min_{s\in \mathcal{S}}\rho(s)}.
% \end{equation}
Now, we fix the gradient accuracy as
\begin{equation}
\delta = \frac{\ell}{T(T+1)}.
\end{equation}
Then, it follows from Proposition \ref{prop:gradient evaluation} and (\ref{eq:N_2_app}) that running the NPG subroutine in line \ref{alg:npg line} of Algorithm \ref{alg:dualpg} for 
\begin{equation}
N_2 = \frac{1}{1-\gamma}\log \left( \frac{2\sqrt{n} \left|\mathcal{A}\right|T(T+1)\left(1+C_2+\tau\log|\mathcal{A}|\right)}{ (1-\gamma)^3\tau\ell} \right),
\end{equation}
iterations guarantees obtaining an estimation $\widetilde \nabla D\left(\Lamb\right)$ such that
\begin{equation}\label{eq:gradient estimation proof_app}
\left\|\widetilde \nabla D\left(\Lamb\right) - \nabla D\left(\Lamb\right)\right\|_2 \leq \frac{\ell}{T(T+1)},
\end{equation}
where we have use the bound $\left\|Q_\Lamb^{\pi_\Lamb} -Q_\Lamb^{(0)}\right\|_\infty \leq \left({1+C_2+\log|\mc{A}|}\right)/({1-\gamma})$ for all $\Lamb\in \Lambda$ (cf. (\ref{eq:Q lambda bound app})).
% From (\ref{eq:N_2_app}), we know that running the NPG subroutine in line \ref{alg:npg line} of Algorithm \ref{alg:dualpg} for $N_2$ iterations guarantees a $\delta$-accurate gradient estimation $\widetilde \nabla D\left(\Lamb\right)$.
Therefore, by Proposition \ref{prop:gradient projection_app}, running the outer loop in Algorithm \ref{alg:dualpg} for $N_1 = T$ iterations generates a solution $\Lamb^{(T)}$ such that 
\begin{equation}\label{eq:dualFWgap}
D\left(\Lamb^{(T)} \right)-D_\tau^\star \leq 
\frac{2\ell}{(T+1)^2}\left(\left\|\Lamb^{(0)}-\Lamb^{\star}\right\|_2+\frac{(T+1)T\delta}{\ell}\right)^{2} 
= \frac{2\ell}{(T+1)^2}\left(\left\|\Lamb^{(0)}-\Lamb^{\star}\right\|_2+1\right)^{2},
% \frac{2\left(D\left(\Lamb^{(0)}\right)-D_\tau^\star\right)}{(T+1)T} + \frac{2L\left(Diam_\Lambda\right)^2}{T+1} + 2\delta Diam_\Lambda  = O\left(\frac{1}{T}+\delta \right)
\end{equation}
which satisfies (\ref{eq:thm results_app_0}).

Below, we adopt the proof of Proposition \ref{prop:dual to primal} (cf. Appendix \ref{app:1}). 
We first apply Proposition \ref{prop:dual to primal} with
\begin{equation}
\varepsilon_0 = \frac{2\ell}{(T+1)^2}\left(\left\|\Lamb^{(0)}-\Lamb^{\star}\right\|_2+1\right)^{2}
\end{equation}
to obtain $\left\|\pi_{\Lamb^{(T)}} - \pi_\tau^\star\right\|_2 \leq C_1 \sqrt{\varepsilon_0}$, where $\pi_\tau^\star$ is an optimal policy.
By Propositions \ref{prop:cen_app}, we can compute an approximate Lagrangian maximizer $\widetilde \pi _{\Lamb^{(T)}}$ to (\ref{eq:pi_lambda evaluation}) such that $\left\|\log\widetilde\pi_{\Lamb^{(T)}} - \log\pi_{\Lamb^{(T)}}\right\|_\infty \leq \varepsilon_1/\sqrt{n}$, by running Algorithm \ref{alg:npg} for 
\begin{equation}
N_3 = \frac{1}{1-\gamma} \log \left(\frac{2\sqrt{n}\left(1+C_2+\tau\log|\mathcal{A}|\right)}{\varepsilon_1 \tau(1-\gamma)}\right),
\end{equation}
iterations (cf. (\ref{eq:npg complexity_app})). Now, we show that $\widetilde\pi_{\Lamb^{(T)}}$ is a solution to (\ref{prob:maxentrl}) satisfying (\ref{eq:thm results_app_1})-(\ref{eq:thm results_app_3}).
Firstly, we have 
\begin{equation}
\left\|\log\widetilde\pi_{\Lamb^{(T)}} - \log\pi_{\Lamb^{(T)}}\right\|_2\leq \sqrt{n}\left\|\log\widetilde\pi_{\Lamb^{(T)}} - \log\pi_{\Lamb^{(T)}}\right\|_\infty \leq \varepsilon_1.
\end{equation}
By applying the triangular inequality and using the strong concavity of the logarithm function on $(0,1]$, it holds that
\begin{equation}
\left\|\widetilde\pi_{\Lamb^{(T)}}- \pi_\tau^\star\right\|_2 \leq \left\|\widetilde\pi_{\Lamb^{(T)}}- \pi_{\Lamb^{(T)}}\right\|_2 + \left\|\pi_{\Lamb^{(T)}}- \pi_{\Lamb^{(T)}}\right\|_2 \leq C_1 \sqrt{\varepsilon_0} + \varepsilon_1.
\end{equation}
Then, we bound the constraint violation.
It follows from the Lipschitz continuity of the utility function (cf. (\ref{eq:lipschitz value function})) that 
\begin{equation}\label{eq:dualgap_component_app2}
\left|U_{g_i}^{\widetilde \pi_{\Lamb^{(T)}}}(\rho)-U_{g_i}^{\pi_\tau^\star}(\rho)\right| \leq \ell_c\left\|\widetilde\pi_{\Lamb^{(T)}}- \pi_\tau^\star\right\|_2 \leq \ell_c\left(C_1 \sqrt{\varepsilon_0} + \varepsilon_1\right),\quad \forall i = 1,2,\dots, n.
\end{equation}
As the optimal policy $\pi_\tau^\star$ must be a feasible solution to (\ref{prob:maxentrl}), i.e. $U_\mb{g}^{\pi_\tau^\star}(\rho) \geq \mb{b}$, the constraint violation is bounded as
\begin{equation}
\max_{i\in [n]} \left[b_i - U_{g_i}^{\widetilde \pi_{\Lamb^{(T)}}}(\rho) \right]_+
\leq \max_{i\in [n]} \left\{\left[b_i - U_{g_i}^{\pi_\tau^\star} (\rho) \right]_+ + \left|U_{g_i}^{ \widetilde \pi_{\Lamb^{(T)}}}(\rho)-U_{g_i}^{\pi_\tau^\star}(\rho)\right| \right\}
\leq \ell_c\left(C_1 \sqrt{\varepsilon_0} + \varepsilon_1\right).
\end{equation}

Finally, to bound the primal optimality gap, we note that
\begin{equation}
0\overset{(i)}\leq L\left(\pi_\tau^\star,\Lamb^{(T)} \right) - L\left(\pi_\tau^\star,\Lamb^{\star} \right) \overset{(ii)}\leq L\left(\pi_{\Lamb^{(T)}},\Lamb^{(T)} \right) - L\left(\pi_\tau^\star,\Lamb^{\star}\right) = D\left(\Lamb^{(T)} \right)-D_\tau^\star \leq \varepsilon_0,
\end{equation}
where $(i)$ follows from the strong duality and $(ii)$ is due to the definition of $\pi_{\lambda^{(T)}}$.
Thus, by expanding the Lagrangian as
\begin{equation}
\begin{aligned}
L\left(\pi_\tau^\star,\Lamb^{(T)} \right) - L\left(\pi_\tau^\star,\Lamb^{\star} \right) 
&= V_\tau^{\pi_\tau^\star}(\rho) + \left(\Lamb^{(T)}\right)^\top\left(U_\mb{g}^{\pi_\tau^\star}(\rho) -\mb{b}\right) - V_\tau^{\pi_\tau^\star}(\rho) - \left(\Lamb^\star\right)^\top\left(U_\mb{g}^{\pi_\tau^\star}(\rho) -\mb{b}\right)\\
&= \left(\Lamb^{(T)} - \Lamb^\star\right)^\top\left(U_\mb{g}^{\pi_\tau^\star}(\rho)-\mb{b} \right),
\end{aligned}
\end{equation}
and applying the complementary slackness $\left(\Lamb^\star\right)^\top\left(U_\mb{g}^{\pi_\tau^\star}(\rho)-b \right) = 0$, we obtain the bound
\begin{equation}\label{eq:intermediate bound_app2}
0\leq \left(\Lamb^{(T)}\right)^\top\left(U_\mb{g}^{\pi_\tau^\star}(\rho)-b \right) \leq \varepsilon_0.   
\end{equation}
Therefore, 
\begin{equation}\label{eq:bound_lambdaG_app_2}
\begin{aligned}
\left|\left(\Lamb^{(T)}\right)^\top\left(U_\mb{g}^{\pi_{\Lamb^{(T)}}}(\rho)-\mb{b} \right)\right|
&{\overset{(i)}\leq} \left|\left(\Lamb^{(T)}\right)^\top\left(U_\mb{g}^{\pi_\tau^\star}(\rho)-b \right)\right|+\left| \left(\Lamb^{(T)}\right)^\top\left(U_\mb{g}^{\pi_{\Lamb^{(T)}}}(\rho)- U_\mb{g}^{\pi_\tau^\star}(\rho)\right)\right|\\
&{\overset{(ii)}\leq} \varepsilon_0 +  \ell_c\left(V_\tau^\star-V_{\tau}^{\overline{\pi}}(\rho) \right)\left(\sum_{i=1}^n \frac{1}{\xi_i}\right)\left(C_1 \sqrt{\varepsilon_0} + \varepsilon_1\right)\\
&= \varepsilon_0 + \ell_cC_2\left(C_1 \sqrt{\varepsilon_0} + \varepsilon_1\right),
\end{aligned}
\end{equation}
where $(i)$ is based on the triangular inequality, 
and $(ii)$ uses the bound (\ref{eq:dualgap_component_app2}) and the boundedness of $\Lambda$, i.e. $0 \leq \lambda_i \leq\left(V_\tau^\star-V_{\tau}^{\overline{\pi}}(\rho)\right) / \xi_i$ for all $i\in [n]$ and $\Lamb\in \Lambda$ (cf. Lemma \ref{lemma:dual boundedness}).
Thus, we can bound the primal optimality gap as
\begin{equation}
\begin{aligned}
\left|V_\tau^{\widetilde\pi_{\Lamb^{(T)}}}(\rho) - V_\tau^\star\right| 
&= \left|V_\tau^{\widetilde\pi_{\Lamb^{(T)}}}(\rho) - V_\tau^{\pi_\tau^\star}(\rho)\right|\\
&{\overset{(i)}=}  \left|\left[V_\tau^{\widetilde\pi_{\Lamb^{(T)}}}(\rho) + \left(\Lamb^{(T)}\right)^\top\left(U_\mb{g}^{\widetilde\pi_{\Lamb^{(T)}}}(\rho) -\mb{b} \right)\right] -\left(\Lamb^{(T)}\right)^\top\left(U_\mb{g}^{\widetilde\pi_{\Lamb^{(T)}}}(\rho) -\mb{b} \right)\right. \\
&\quad - \left.\left[V_\tau^{\pi_\tau^\star}(\rho) + \left(\Lamb^\star\right)^\top\left(U_\mb{g}^{\pi_\tau^\star}(\rho) -b \right)  \right]\right|\\
&{\overset{(ii)}\leq}  \left|L\left(\widetilde\pi_{\Lamb^{(T)}}, \Lamb^{(T)} \right) - L\left(\pi_\tau^\star, \Lamb^\star\right) \right| + \left|\left(\Lamb^{(T)}\right)^\top\left(U_\mb{g}^{\widetilde\pi_{\Lamb^{(T)}}}(\rho) -\mb{b} \right) \right|\\
&{\overset{(iii)}\leq}  \left|L\left(\widetilde\pi_{\Lamb^{(T)}}, \Lamb^{(T)} \right) - L\left(\pi_{\Lamb^{(T)}}, \Lamb^{(T)} \right) \right| + \left|L\left(\pi_{\Lamb^{(T)}}, \Lamb^{(T)} \right) -L\left(\pi_\tau^\star, \Lamb^\star\right) \right|\\
&\quad + \left|\left(\Lamb^{(T)}\right)^\top\left(U_\mb{g}^{\widetilde\pi_{\Lamb^{(T)}}}(\rho) -\mb{b} \right) \right|\\
&{\overset{(iv)}\leq}  \left(\frac{3\gamma}{2\tau\sqrt{n}}\varepsilon_1\right) + (\varepsilon_0) + \left(\varepsilon_0 +  \ell_cC_2\left(C_1 \sqrt{\varepsilon_0} + \varepsilon_1\right)\right)\\
&=  2\varepsilon_0 + \ell_cC_1C_2\sqrt{\varepsilon_0} + \left(\ell_cC_2 + \frac{3\gamma}{2\tau\sqrt{n}}\right)\varepsilon_1,
\end{aligned}
\end{equation}
where $(i)$ uses the complementary slackness $\left(\Lamb^\star\right)^\top\left(U_\mb{g}^{\pi_\tau^\star}(\rho) -b \right) = 0$, $(ii)$ uses the triangular inequality and the definition of Lagrangian (\ref{def:lagrangian}), and $(iii)$ uses the triangular inequality again.
The inequality $(iv)$ contains three parts where the first part uses Proposition \ref{prop:cen_app} as
\begin{equation}
\left|L\left(\widetilde\pi_{\Lamb^{(T)}}, \Lamb^{(T)} \right) - L\left(\pi_{\Lamb^{(T)}}, \Lamb^{(T)} \right) \right| = \left| V_{\Lamb^{(T)}}^{\widetilde\pi_{\Lamb^{(T)}}}(\rho)- V_{\Lamb^{(T)}}^{\pi_{\Lamb^{(T)}}}(\rho) \right| \leq \frac{3\gamma}{2\tau}\left\|\log\widetilde\pi_{\Lamb^{(T)}} - \log\pi_{\Lamb^{(T)}}\right\|_\infty = \frac{3\gamma}{2\tau\sqrt{n}}\varepsilon_1,
\end{equation}
the second part uses the assumption
\begin{equation}
D(\Lamb^{(T)}) - D_\tau^\star=L\left(\pi_{\Lamb^{(T)}}, \Lamb^{(T)} \right) -L\left(\pi_\tau^\star, \Lamb^\star\right) \leq \varepsilon_0,
\end{equation}
and the third part uses the inequality (\ref{eq:bound_lambdaG_app_2}).
This completes the proof.
% The bounds on three absolute values in  respectively uses the definition of $\widetilde\pi_{\lambda^{(T)}}$ and Proposition \ref{prop:cen}, equation (\ref{eq:dualFWgap}), and equation (\ref{eq:bound_lambdaG}).
\end{proof}

\begin{corollary}[Restatement of Corollary \ref{cor:original mdp solution}]\label{cor:original mdp solution_app}
Suppose that Assumptions \ref{assump:slater}, \ref{assump:strong duality}, and \ref{assump:initial distribution} hold. Let 
\begin{equation}
\tau=\frac{(1-\gamma) \varepsilon}{4 \log |\mathcal{A}|}.
\end{equation}
Then, Algorithm \ref{alg:dualpg} computes a solution $\pi$ for the standard CMDP such that
\begin{subequations}
\begin{align}
\left|V^{\pi^\star}(\rho)-V^{\pi}(\rho)\right| & =  \mc{O}{(\varepsilon)},\\
\max_{i\in [n]} \left[b_i - U_{g_i}^{\pi}(\rho) \right]_+ &= \mathcal{O}(\varepsilon),
\end{align}
\end{subequations}
in $\widetilde{\mathcal{O}}\left( 1/\varepsilon^2 \right)$ iterations, where $\pi^\star$ is an optimal policy to the standard CMDP.
\end{corollary}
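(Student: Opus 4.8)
The plan is to transfer the convergence guarantee of Theorem~\ref{thm:convergence} (in the precise form of Theorem~\ref{thm:convergence_app}) to the \emph{unregularized} CMDP by controlling the regularization bias, and then to tune $\tau$ together with the outer horizon $T$ so that the bias and the algorithmic error are both $\mathcal{O}(\varepsilon)$. The only extra ingredients beyond Theorem~\ref{thm:convergence_app} are the sandwich bound \eqref{eq:original and regularized bound} and the elementary fact that for \emph{any} policy $\pi$ one has $V^\pi(\rho)\le V_\tau^\pi(\rho)\le V^\pi(\rho)+\tau\log|\mathcal{A}|/(1-\gamma)$, which is immediate from $0\le\mathcal{H}(\rho,\pi)\le\log|\mathcal{A}|/(1-\gamma)$.

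Let $\pi$ be the policy returned by Algorithm~\ref{alg:dualpg} with the parameters of Theorem~\ref{thm:convergence_app}. For the optimality gap I would chain two one-sided estimates. On one side, $V^\pi(\rho)\le V_\tau^\pi(\rho)\le V_\tau^\star+\lvert V_\tau^\pi(\rho)-V_\tau^\star\rvert\le V^{\pi^\star}(\rho)+\tau\log|\mathcal{A}|/(1-\gamma)+\lvert V_\tau^\pi(\rho)-V_\tau^\star\rvert$, using the right half of \eqref{eq:original and regularized bound}; note this upper bound on $V^\pi(\rho)-V^{\pi^\star}(\rho)$ cannot be read off from optimality of $\pi^\star$, since $\pi$ need not be feasible for the standard CMDP, and must instead be routed through the regularized value. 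On the other side, $V^\pi(\rho)\ge V_\tau^\pi(\rho)-\tau\log|\mathcal{A}|/(1-\gamma)\ge V_\tau^\star-\lvert V_\tau^\pi(\rho)-V_\tau^\star\rvert-\tau\log|\mathcal{A}|/(1-\gamma)\ge V^{\pi^\star}(\rho)-\lvert V_\tau^\pi(\rho)-V_\tau^\star\rvert-\tau\log|\mathcal{A}|/(1-\gamma)$, using the left half of \eqref{eq:original and regularized bound}. Combining and inserting \eqref{eq:thm results_app_3} gives
\begin{equation}
\left|V^{\pi^\star}(\rho)-V^{\pi}(\rho)\right|\le \frac{\tau\log|\mathcal{A}|}{1-\gamma}+2\varepsilon_0+\ell_cC_1C_2\sqrt{\varepsilon_0}+\Big(\ell_cC_2+\tfrac{3\gamma}{2\tau\sqrt{n}}\Big)\varepsilon_1,
\end{equation}
while the constraint violation is already bounded by $\ell_c\big(C_1\sqrt{\varepsilon_0}+\varepsilon_1\big)$ directly from \eqref{eq:thm results_app_2}.

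It remains to pick $\tau$, $T$, and $\varepsilon_1$. Taking $\tau=(1-\gamma)\varepsilon/(4\log|\mathcal{A}|)$ makes $\tau\log|\mathcal{A}|/(1-\gamma)=\varepsilon/4$. Since $V_\tau^\star$ and $V_\tau^{\overline{\pi}}(\rho)$ stay bounded as $\tau\to 0$, I would record the $\tau$-scalings $\ell=\Theta(1/\varepsilon)$, $C_1=\Theta(1/\sqrt{\varepsilon})$, $C_2=\Theta(1)$, $\ell_c=\Theta(1)$, and $\varepsilon_0=\Theta(\ell/T^2)=\Theta(1/(\varepsilon T^2))$, hence $\sqrt{\varepsilon_0}=\Theta(1/(\sqrt{\varepsilon}\,T))$; consequently $\ell_cC_1C_2\sqrt{\varepsilon_0}=\Theta(1/(\varepsilon T))$, $\ell_cC_1\sqrt{\varepsilon_0}=\Theta(1/(\varepsilon T))$, and $\big(\ell_cC_2+\tfrac{3\gamma}{2\tau\sqrt{n}}\big)\varepsilon_1=\Theta(\varepsilon_1/\varepsilon)$. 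Choosing $T=\Theta(1/\varepsilon^2)$ and $\varepsilon_1=\Theta(\varepsilon^2)$ then forces every term above to be $\mathcal{O}(\varepsilon)$, establishing \eqref{eq:cor_1}--\eqref{eq:cor_2}. For the iteration count, with these choices $\tau\ell=\Theta(1)$, so $N_2=\mathcal{O}(\log T)=\mathcal{O}(\log(1/\varepsilon))$ and $N_3=\mathcal{O}(\log(1/(\varepsilon_1\tau)))=\mathcal{O}(\log(1/\varepsilon))$, whence $N_1N_2+N_3=T\cdot\mathcal{O}(\log(1/\varepsilon))+\mathcal{O}(\log(1/\varepsilon))=\widetilde{\mathcal{O}}(1/\varepsilon^2)$.

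The main obstacle will be purely quantitative: checking that the $1/\tau$ blow-up of the smoothness constant $\ell$, and the $1/\tau$ factor multiplying $\varepsilon_1$ in the primal-gap bound of Theorem~\ref{thm:convergence_app}, are exactly cancelled by the extra powers of $\varepsilon$ gained from $T=\Theta(1/\varepsilon^2)$ and $\varepsilon_1=\Theta(\varepsilon^2)$, so that no term in the displayed bound (or in the constraint-violation bound) secretly scales worse than $\mathcal{O}(\varepsilon)$, and that $V_\tau^\star$, $V_\tau^{\overline{\pi}}(\rho)$, and $\|\Lamb^{(0)}-\Lamb^\star\|_2$ remain $\mathcal{O}(1)$ uniformly over the relevant range of $\tau$. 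The one qualitative point is the rerouting of the upper bound on $V^\pi(\rho)$ through the regularized value so as to avoid needing feasibility of $\pi$; everything else is substitution into Theorem~\ref{thm:convergence_app}.
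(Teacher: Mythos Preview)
Your proposal is correct and follows essentially the same approach as the paper: transfer Theorem~\ref{thm:convergence_app} to the unregularized CMDP via the sandwich bound \eqref{eq:original and regularized bound}, then track how $\ell$, $C_1$, $C_2$, and $\varepsilon_0$ scale with $\tau\propto\varepsilon$ to choose $T=\Theta(1/\varepsilon^2)$. The paper's own proof is terser---it informally ``ignores'' the $\varepsilon_0$ and $\varepsilon_1$ terms and focuses only on the dominant $\ell_cC_1C_2\sqrt{\varepsilon_0}=C/(\tau(T+1))$ term---whereas you explicitly handle every summand in \eqref{eq:thm results_app_3}--\eqref{eq:thm results_app_2}, including the choice $\varepsilon_1=\Theta(\varepsilon^2)$ to kill the $1/\tau$ coefficient; this is slightly more careful but not a different argument.
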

\begin{proof}
Since the total iteration complexity of Algorithm \ref{alg:dualpg} is dominated by $N_1\times N_2$ and the error bounds (\ref{eq:thm results_1})-(\ref{eq:thm results_2}) are dominated by $\sqrt{\varepsilon_0}$ (cf. (\ref{eq:constants epsilon ell_app})), we ignore the effect of $\varepsilon_0$, $\varepsilon_1$ and only focus on $\sqrt{\varepsilon_0}$ terms in (\ref{eq:thm results_1})-(\ref{eq:thm results_2}) through the analysis below. 

{Firstly, by invoking the optimality of $\pi^\star_\tau$ with respect to the entropy-regularized CMDP and the elementary entropy bound $0 \leq \mathcal{H}(\rho, \pi) \leq  \log |\mathcal{A}|/(1-\gamma)$, we obtain
\begin{equation}\label{eq:sandwich_app}
V^{\pi_{\tau}^{\star}}(\rho)+\frac{\tau}{1-\gamma} \log |\mathcal{A}| \geq V^{\pi_{\tau}^{\star}}(\rho)+\tau \mathcal{H}\left(\rho, \pi_{\tau}^{\star}\right)=V_{\tau}^{\star}(\rho) \geq V_{\tau}^{\pi_{\star}}(\rho) \geq V^{\pi_{\star}}(\rho),
\end{equation}
which implies the sandwich bound (\ref{eq:original and regularized bound}).}
Now, we choose $T$ in such a way that $\ell_cC_1C_2\sqrt{\varepsilon_0}  = \varepsilon/2$, where $\ell_c$, $C_1$, and $C_2$ are specified in Theorem \ref{thm:convergence_app}.
Then, Theorem \ref{thm:convergence_app} implies that running Algorithm \ref{alg:dualpg} with 
\begin{equation}
N_1 = T,\ N_2 = \frac{1}{1-\gamma}\log \left( \frac{2\sqrt{n} \left|\mathcal{A}\right|T(T+1)\left(1+C_2+\tau\log|\mathcal{A}|\right)}{ (1-\gamma)^3\tau\ell} \right)
\end{equation}
returns a solution $\pi$ such that $\left|V_{\tau}^{\star}-V_{\tau}^{\pi}(\rho)\right| = \mathcal{O}(\varepsilon / 2)$ (cf. (\ref{eq:thm results_app_3})). It then follows that 
\begin{equation}\label{eq:primal original bound_app}
\left|V^{\pi^\star}(\rho)-V^{\pi}(\rho)\right| 
{\overset{(i)}{\leq}}\left|V^{\star}(\rho)-V_{\tau}^{\star}(\rho)\right|+\left|V_{\tau}^{\star}(\rho)-V_{\tau}^{\pi}(\rho)\right|+\left|V_{\tau}^{\pi}(\rho)-V^{\pi}(\rho)\right| {\overset{(ii)}{\leq}} \frac{2 \tau \log |\mathcal{A}|}{1-\gamma}+\mathcal{O}\left(\frac{\varepsilon}{2}\right)=\mc{O}(\varepsilon),
\end{equation}
where $(i)$ is due to the triangular inequality and $(ii)$ uses the bound (\ref{eq:sandwich_app}) (cf. (\ref{eq:original and regularized bound})). 
% (\ref{eq:primal original bound_app}) gives the $\varepsilon$-primal optimality gap.
The $\mc{O}(\varepsilon)$-constraint violation follows directly from Theorem \ref{thm:convergence}, since it enjoys the same order of convergence as the primal optimality gap.

We note that $\ell_cC_1C_2\sqrt{\varepsilon_0}$ can be written as ${C}/{(\tau (T+1))}$, where
\begin{equation}
C = \ell_c(C_1\cdot \sqrt{\tau})C_2\left(\sqrt{2\ell \cdot \tau}\right)\left(\|\Lamb^{(0)} - \Lamb^\star\|_2+1 \right)
\end{equation}
is a constant that does not depend on $T$ and $\varepsilon$.
Thus, the choice $\tau=\left[{(1-\gamma) \varepsilon}\right]/\left({4 \log |\mathcal{A}|}\right)$ implies that
\begin{equation}
\frac{C}{T+1} = \frac{(1-\gamma) \varepsilon}{4 \log |\mathcal{A}|}\times \frac{\varepsilon}{2},
\end{equation}
i.e. $T = \mathcal{O}(1/\varepsilon^2)$. Since the total iteration complexity is $N_1\times N_2 = T\times \mc{O}({\log T})$, we obtain the $\widetilde{\mathcal{O}}\left(1/\sqrt{T}\right)$ error bound for the primal optimality gap and the constraint violation.
This completes the proof.
%  the error of the entropy-regularized CMDP satisfies $\left|V_{\tau}^{\star}-V_{\tau}^{\pi}\right|<\varepsilon / 2$, 
\end{proof}

% Denote $\pi^\star$ (resp. $\pi_\tau^\star$) as the optimal policy with respect to $V^\pi(\rho)$ (resp. $V^\pi_{\tau}(\rho)$).
% %and $V_r^\star$ (resp. $V_{r,\tau}^\star$) as the optimal value.
% The following inequalities hold and draw a connection between $\pi^\star$ and $\pi_\tau^\star$:
% \begin{equation}\label{eq:original and regularized bound}
% V^{\pi_{\tau}^{\star}}(\rho) \leq V^{\pi_{\star}}(\rho) \leq V^{\pi_{\tau}^{\star}}(\rho)+{\frac{\tau}{1-\gamma} \log |\mathcal{A}|}.
% \end{equation}
% The above inequalities convey the information that the policy $\pi_\tau^\star$ is nearly optimal in terms of the value function $V^\pi(\rho)$, when $\tau$ is reasonably small.
% More details are postponed to be discussed in Appendix \ref{subsec:entropy regularization_app}.

% \subsection{CMDPs with a Single Constraint}\label{subsec:bisection_app}
\section{Supplementary Materals for Section \ref{sec:single constraint}}\label{subsec:bisection_app}
% In the case where there is a single constraint, the optimal multiplier $\lambda^\star$ is a number belongs to $[0,R]$, where $R = \left(V_\tau^\star-V_{\tau}^{\overline{\pi}}(\rho)\right) / \xi$.
{In this subsection, we consider the special situation where there is a single constraint ($n=1$).
In particular, we use the non-bold notations to emphasize that the associated notations denote numbers instead of vectors used in the previous sections, e.g. multiplier $\lambda$, constraint $U_g^\pi(\rho) \geq b$, Slater condition $V^{\overline{\pi}}_\tau(\rho)-b\geq \xi$.}

Since the feasible region $\Lambda = [0,C_2]$, where $C_2 = \left(V_\tau^\star-V_{\tau}^{\overline{\pi}}(\rho)\right) / \xi$, is bounded
% and one-dimensional, 
and $D(\lambda)$ is convex, an approximate stationary point is also an approximate optimal solution.
% \textcolor{red}{the logic has some issues, it should be due to the convexity instead of the boundedness of feasible region.}
Specifically, if $\left|\nabla D(\lambda)\right| < \varepsilon$, then
\begin{equation}\label{eq:gradient error bound_124_app}
D(\lambda)-D_\tau^\star \leq  \left|\nabla D(\lambda)\right| \times\left|\lambda-\lambda^\star\right|< C_2\varepsilon  .
\end{equation}
% \textcolor{red}{why it holds?? it is neither due to convexity nor smooth}
Additionally, if $\operatorname{sign}\left(\nabla D(0)\right)= \operatorname{sign}\left(\nabla D(C_2)\right) = 1$, then $D(\lambda)$ attains the optimum at $\lambda = 0$ due to the convexity.
Similarly, $D(\lambda)$ attains the optimum at $\lambda = C_2$ if $\operatorname{sign}\left(\nabla D(0)\right)= \operatorname{sign}\left(\nabla D(C_2)\right) = -1$.
Therefore, it only remains  to consider the case where $D(0)<0$ and $D(C_2)>0$.

{The proposed method aims to find an approximate stationary point with the bisection scheme.
For a given search interval, it computes the gradient at the middle point.
If the gradient is greater than $\varepsilon$, it shrinks the search interval to the left-half interval; if the gradient is smaller than $-\varepsilon$, it shrinks the search interval to the right-half interval.
The iterates terminate when it finds a point $\lambda$ such that $\left|\widetilde{\nabla} D(\lambda)\right|<\varepsilon $, where $\widetilde{\nabla} D(\lambda)$ is the approximate gradient.
We summarize the proposed method in Algorithm \ref{alg:bisection}, where we separately define the gradient estimator (cf. lines \ref{alg:npg line} and \ref{alg:gradient evaluation line} in Algorithm \ref{alg:dualpg}) as a new subroutine Grad$_{Sub}$ (cf. Algorithm \ref{alg:grad}) for the ease of presentation.
We also assume that the initialization is non-trivial in the sense that $\nabla D(0)< 0$ and $\nabla D(C_2)> 0$, since otherwise we can return the optimal solution $\lambda^\star$ as $0$ or $C_2$.}

\begin{algorithm}[tb]
   \caption{Bisection Method with NPG Subroutine\label{alg:bisection}}
\begin{algorithmic}[1]
   \STATE {\bfseries Input:} Initialization $\pi$, $p_0 = 0$, $q_0 = C_2$;
%   , such that $\text{Grad}_{Sub}\left(p_0,\pi_0,\eta,N_1\right)< -\varepsilon$ and $\text{Grad}_{Sub}\left(q_0,\pi_0,\eta,N_1\right)> \varepsilon$
step-size $\eta$; maximum number of iterations $N_1$, $N_2$; threshold $\varepsilon$.
   \FOR{$t = 0,1,2,\dots$}
   \STATE Let $\lambda = (p_t+q_t)/2$.\label{alg_2:shrink}
   \IF{$\left|\text{Grad}_{Sub}\left(\lambda,\pi,\eta,N_1\right)\right|<\varepsilon $}
   \STATE {\bf break}
   \ELSE
   \IF{$\text{Grad}_{Sub}\left(\lambda,\pi,\eta,N_1\right)\geq\varepsilon $}\label{alg_2:line 7}
   \STATE Let $p_{t+1} \leftarrow p_t$ and $q_{t+1}\leftarrow \lambda$.
   \ELSE
   \STATE Let $p_{t+1} \leftarrow \lambda$ and $q_{t+1}\leftarrow q_t$.
   \ENDIF \label{alg_2:line 11}
   \ENDIF
   \ENDFOR
  \STATE Recover the policy from the dual variable: $\widetilde\pi_{\lambda}\leftarrow \text{NPG}_{Sub}\left(\lambda, \pi, \eta, N_2 \right)$. \label{alg:final recover_bi}
\end{algorithmic}
\end{algorithm}

\begin{algorithm}[tb]
   \caption{Gradient Estimator (Grad$_{Sub}$)\label{alg:grad}}
\begin{algorithmic}[1]
   \STATE {\bfseries Input:} Target point $\lambda$, initialization $\pi$, step-size $\eta$, maximum number of iterations $N$.
   \STATE Estimate the optimal policy $\pi_{\lambda}$ for problem (\ref{eq:pi_lambda evaluation}) through the natural policy gradient subroutine:
$
	\widetilde \pi_{\lambda} \leftarrow \text{NPG}_{Sub}\left(\lambda, \pi, \eta, N \right)
$.
    \STATE Compute and output the approximate gradient at $\lambda$:
    $
	\widetilde \nabla D\left(\lambda\right) := U_{g}^{\widetilde \pi_{\lambda}}(\rho) -{b} $.
\end{algorithmic}
\end{algorithm}

{Below, we give a formal statement for the convergence result of Algorithm \ref{alg:bisection} (cf. Theorem \ref{thm:bisection convergence}).}
\begin{theorem}[Restatement of Theorem \ref{thm:bisection convergence}]\label{thm:bisection convergence app}
{Suppose that Assumptions \ref{assump:slater}, \ref{assump:strong duality}, and \ref{assump:initial distribution} hold. When $n= 1$, for every $\varepsilon$, $\varepsilon_1>0$, Algorithm \ref{alg:bisection} with the parameters
\begin{equation}
\eta = \frac{(1-\gamma)}{\tau},\
N_1 = \frac{1}{1-\gamma} \log \left(\frac{4 |\mathcal{A}| \left(1+C_{2}+\tau\log |\mathcal{A}|\right)}{(1-\gamma)^{3} \tau \varepsilon}\right), N_2 = \frac{1}{1-\gamma} \log \left(\frac{2\left(1+C_2+\tau\log|\mathcal{A}|\right)}{\varepsilon_1 \tau(1-\gamma)}\right),
\end{equation}
returns a solution $(\pi, \lambda)$ in at most $\log_2 \left(\ell C_2/\varepsilon\right)$ outer loops, such that
\begin{subequations}
\begin{align}
D(\lambda) - D_\tau^\star &\leq \varepsilon_0,\label{eq:thm2 results_app_0} \\
\left\|\pi - \pi_\tau^\star\right\|_2 &\leq C_1\sqrt{\varepsilon_0} + \varepsilon_1,\label{eq:thm2 results_app_1}\\
\left|V_\tau^{\pi}(\rho) - V_\tau^\star\right| &\leq 2\varepsilon_0 + \ell_cC_1C_2\sqrt{\varepsilon_0} + \left(\ell_cC_2 + \frac{3\gamma}{2\tau}\right)\varepsilon_1,\label{eq:thm2 results_app_3}\\
 \left[b - U_{g}^{\pi}(\rho) \right]_+ &\leq \ell_c\left(C_1\sqrt{\varepsilon} +\varepsilon_1 \right),\label{eq:thm2 results_app_2}
\end{align}
\end{subequations}
where 
% $\pi_\tau^\star$ is an optimal policy to (\ref{prob:maxentrl}) and
\begin{equation}
\varepsilon_0 = \frac{3C_2}{2}\varepsilon,\ \ell = \frac{2\ln 2 \left(|\mathcal{A}|+(1-\gamma)^2\sqrt{|\mathcal{A}|}\right)}{\tau (1-\gamma)^3 d},
\ \ell_c = \frac{\sqrt{|\mathcal{A}|}}{(1-\gamma)^2},\ C_1 = \sqrt{\frac{2 (1-\gamma)\ln 2}{\tau d }},\ C_2 = \frac{V_\tau^\star-V_{\tau}^{\overline{\pi}}(\rho)}{\xi}.
\end{equation}
The total iteration complexity is $\log_2 \left(\ell C_2/\varepsilon\right) \times N_2+N_3 = \mathcal{O}\left(\log^2(1/\varepsilon) + \log(1/\varepsilon_1)\right)$ with primal error bounds $\mathcal{O}\left(\sqrt{\varepsilon}+\varepsilon_1\right)$ given by (\ref{eq:thm2 results_app_1}) - (\ref{eq:thm2 results_app_2}) and a dual error bound $\mathcal{O}(\varepsilon)$ given by (\ref{eq:thm2 results_app_0}).}
\end{theorem}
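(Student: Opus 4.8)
The plan is to exploit the one-dimensional structure of the dual problem. By Proposition~\ref{prop:dual smoothness_app}, $D(\lambda)$ is convex, differentiable, and $\ell$-smooth on the interval $\Lambda=[0,C_2]$; under the non-triviality assumption $\nabla D(0)<0<\nabla D(C_2)$ the unique minimizer $\lambda^\star$ lies in the open interval, so $\nabla D(\lambda^\star)=0$, and since $\nabla D$ is monotonically increasing, a bisection on the sign of $\nabla D$ locates $\lambda^\star$. The one complication is that the algorithm only sees the inexact estimate $\widetilde\nabla D$ produced by the NPG subroutine, so the inner accuracy must be calibrated against the bisection threshold. Once a dual-gap bound is in hand, Proposition~\ref{prop:dual to primal_app} converts it to the primal metrics exactly as at the end of the proof of Theorem~\ref{thm:convergence_app}.

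First I would calibrate the inner accuracy. Requiring the NPG target in $\log$-$\infty$-norm to be $\varepsilon_{\mathrm{npg}}=(1-\gamma)^2\varepsilon/(2|\mathcal A|)$ and invoking Proposition~\ref{prop:gradient evaluation_app} with $n=1$ gives $|\widetilde\nabla D(\lambda)-\nabla D(\lambda)|\le\varepsilon/2$ for every queried $\lambda$; by the global linear convergence of entropy-regularized NPG (Proposition~\ref{prop:cen_app}) together with the uniform bound $\|Q_\lambda^{\pi_\lambda}-Q_\lambda^{\pi^{(0)}}\|_\infty\le(1+C_2+\tau\log|\mathcal A|)/(1-\gamma)$ from \eqref{eq:Q lambda bound app}, this is achieved with the stated $N_1$. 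With this $\varepsilon/2$ margin, whenever the algorithm reads $\widetilde\nabla D(\lambda)\ge\varepsilon$ we have $\nabla D(\lambda)\ge\varepsilon/2>0$, hence $\lambda^\star<\lambda$ by convexity, so contracting to $[p_t,\lambda]$ preserves the invariant $\lambda^\star\in(p_t,q_t)$; symmetrically when $\widetilde\nabla D(\lambda)\le-\varepsilon$. Thus the bracket invariant $\nabla D(p_t)<0<\nabla D(q_t)$, hence $\lambda^\star\in(p_t,q_t)$, is maintained from the initialization $p_0=0$, $q_0=C_2$.

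Next I would bound the number of outer loops. The length $q_t-p_t$ halves each round, so after $k$ rounds it equals $C_2/2^k$ and the midpoint $\lambda$ obeys $|\lambda-\lambda^\star|\le C_2/2^{k+1}$; by $\ell$-smoothness, $|\nabla D(\lambda)|=|\nabla D(\lambda)-\nabla D(\lambda^\star)|\le\ell C_2/2^{k+1}$, which drops to $\varepsilon/2$ once $k\ge\log_2(\ell C_2/\varepsilon)$, forcing $|\widetilde\nabla D(\lambda)|\le\varepsilon$ and termination within $\log_2(\ell C_2/\varepsilon)$ rounds. At termination $|\nabla D(\lambda)|\le|\widetilde\nabla D(\lambda)|+\varepsilon/2<3\varepsilon/2$, so by convexity and boundedness of $\Lambda$ (cf.\ \eqref{eq:gradient error bound_124_app}), $D(\lambda)-D^\star_\tau\le|\nabla D(\lambda)|\,|\lambda-\lambda^\star|<\tfrac{3C_2}{2}\varepsilon=\varepsilon_0$, which is \eqref{eq:thm2 results_app_0}. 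The main delicacy is precisely this coupling: the inexact-gradient tolerance $\varepsilon/2$ must be tight enough that a reading of magnitude $\ge\varepsilon$ certifies the true sign (so the bracket is preserved) yet loose enough that NPG attains it in $\mathcal O(\log(1/\varepsilon))$ steps, and it is the $\ell$-smoothness of $D$ (Proposition~\ref{prop:dual smoothness_app}), not merely convexity, that caps the number of bisection rounds.

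Finally, the primal recovery mirrors the end of the proof of Theorem~\ref{thm:convergence_app}: apply Proposition~\ref{prop:dual to primal_app} with this $\varepsilon_0$ to get $\|\pi_\lambda-\pi^\star_\tau\|_2\le C_1\sqrt{\varepsilon_0}$; run the NPG subroutine for $N_2$ iterations in line~\ref{alg:final recover_bi} to obtain $\widetilde\pi_\lambda$ with $\|\log\widetilde\pi_\lambda-\log\pi_\lambda\|_\infty\le\varepsilon_1$ (here $\sqrt n=1$), hence $\|\widetilde\pi_\lambda-\pi_\lambda\|_2\le\varepsilon_1$; the triangle inequality then gives \eqref{eq:thm2 results_app_1}, the Lipschitz continuity of $U_g^\pi(\rho)$ (Lemma~\ref{lemma:lipschitz value function}) together with feasibility of $\pi^\star_\tau$ gives the constraint-violation bound \eqref{eq:thm2 results_app_2}, and the same primal--dual expansion as in Theorem~\ref{thm:convergence_app} (using $|L(\widetilde\pi_\lambda,\lambda)-L(\pi_\lambda,\lambda)|\le\tfrac{3\gamma}{2\tau}\varepsilon_1$ from Proposition~\ref{prop:cen_app} and complementary slackness) gives \eqref{eq:thm2 results_app_3}. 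Summing the costs yields $\log_2(\ell C_2/\varepsilon)\cdot N_1+N_2=\mathcal O(\log^2(1/\varepsilon)+\log(1/\varepsilon_1))$. Corollary~\ref{cor:for bisection} then follows from the sandwich bound \eqref{eq:original and regularized bound} with the choice $\tau=\mathcal O(\varepsilon)$, exactly as in Corollary~\ref{cor:original mdp solution_app}, but now with only $\mathcal O(\log^2(1/\varepsilon))$ iterations.
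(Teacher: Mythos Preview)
Your proposal is correct and follows essentially the same approach as the paper: calibrate the NPG inner accuracy so that $|\widetilde\nabla D(\lambda)-\nabla D(\lambda)|\le\varepsilon/2$, use this margin to certify the true sign of $\nabla D$ and maintain the bracket invariant, invoke the $\ell$-smoothness of $D$ to cap the number of bisection rounds at $\log_2(\ell C_2/\varepsilon)$, bound the dual gap at termination by $\tfrac{3C_2}{2}\varepsilon$ via convexity and the diameter of $\Lambda$, and then import the primal-recovery argument verbatim from the proof of Theorem~\ref{thm:convergence_app}. Your computation of the midpoint distance as $C_2/2^{k+1}$ is in fact slightly sharper than the paper's $(1/2)^t C_2$, and your total-iteration count $\log_2(\ell C_2/\varepsilon)\cdot N_1+N_2$ corrects what appears to be a typo in the theorem statement (which writes $N_2,N_3$ in place of $N_1,N_2$).
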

We remark that the constants $\ell$, $\ell _c$, $C_1$, and $C_2$ used in Theorem \ref{thm:bisection convergence app} coincide with those used in previous sections, except that they correspond to the 1-dimensional situation.
The maximum number of iterations $N_1$ and $N_2$ in Theorem \ref{thm:bisection convergence app}, respectively, correspond to the $N_2$ and $N_3$ in Theorem \ref{thm:convergence_app}.
\begin{proof}
{Under the assumption that $\nabla D(0)< 0$ and $\nabla D(\xi)> 0$, the optimal dual variable $\lambda^\star$ must belong to the interval $(0,C_2)$ and $\nabla D(\lambda^\star) = 0$.
Denote $\widetilde{\nabla} D(\lambda) = \text{Grad}_{Sub}\left(\lambda,\pi,\eta,N_1\right)$, i.e. the output of the gradient estimator.
% For a given threshold $\varepsilon$.
For a given threshold $\varepsilon$, when
\begin{equation}
N_1 = \frac{1}{1-\gamma} \log \left(\frac{4 |\mathcal{A}| \left(1+C_{2}+\tau\log |\mathcal{A}|\right)}{(1-\gamma)^{3} \tau \varepsilon}\right),
\end{equation}
it follows from the proof of Theorem \ref{thm:convergence} (cf. (\ref{eq:gradient estimation proof_app})) that
\begin{equation}
|\widetilde{\nabla} D({\lambda})-\nabla D({\lambda})|\leq \frac{\varepsilon}{2}.
\end{equation}
Thus, if $\widetilde{\nabla} D({\lambda})\geq \varepsilon$, we have that ${\nabla} D({\lambda})\geq \varepsilon/2$.
Similarly, if $\widetilde{\nabla} D({\lambda})\leq -\varepsilon$, we have that ${\nabla} D({\lambda})\leq -\varepsilon/2$.
Therefore, the lines \ref{alg_2:line 7}-\ref{alg_2:line 11} in Algorithm \ref{alg:bisection} shrink by a factor of 2 the search region that contains the optimal solution $\lambda^\star$.}

By leveraging the triangular inequality, we have that
\begin{equation}
|\widetilde{\nabla} D(\lambda)| \leq |\widetilde{\nabla} D(\lambda)-{\nabla} D(\lambda)|+|{\nabla} D(\lambda)-{\nabla} D(\lambda^\star)|
\leq \frac{\varepsilon}{2} + \ell \left|\lambda-\lambda^\star\right|.
\end{equation}
where we apply the smoothness of the dual function (cf. Proposition \ref{prop:dual smoothness}).
Thus, Algorithm \ref{alg:bisection} terminates in at most $t = \log_2 \left(\ell C_2/\varepsilon\right)$ iterations with an $\varepsilon$-optimal stationary point due to
\begin{equation}\label{eq:gradient output error_app}
|\widetilde{\nabla} D(\lambda)|\leq  \frac{\varepsilon}{2} + \ell \left|\lambda-\lambda^\star\right| \leq  \frac{\varepsilon}{2} + \ell \left(\frac{1}{2}\right)^tC_2\leq \varepsilon,
\end{equation}
where $\lambda$ denotes the midpoint $(p_t+q_t)/2$ generated in the $t$-th iteration in line \ref{alg_2:shrink}.

{Now, suppose that $\lambda$ is the output solution with $|\widetilde{\nabla} D(\lambda)|\leq \varepsilon$. 
It holds that
\begin{equation}\label{eq:dual function quality_app}
D(\lambda)-D^\star_\tau \leq \left|\nabla D(\lambda)\right| \times\left|\lambda-\lambda^\star\right| \leq\left( |\widetilde{\nabla} D(\lambda)| +\frac{\varepsilon}{2}\right)C_2 =\frac{3C_2}{2}\varepsilon,
\end{equation}
where we have used (\ref{eq:gradient error bound_124_app}) and (\ref{eq:gradient output error_app}).
By substituting (\ref{eq:dual function quality_app}) into the proof of Theorem \ref{thm:convergence} and letting $\varepsilon_0 = 3C_2\varepsilon/2$, the desired bounds follow.
The convergence is linear and the total iteration complexity is upper-bounded by $\log_2 \left(\ell C_2/\varepsilon\right) \times N_2+N_3 = \mathcal{O}(\log^2(1/\varepsilon) + \log(1/\varepsilon_1))$.}
\end{proof}

\begin{corollary}[Restatement of Corollary \ref{cor:for bisection}]\label{cor:original mdp solution2_app}
{Suppose that Assumptions \ref{assump:slater}, \ref{assump:strong duality}, and \ref{assump:initial distribution} hold. Let 
\begin{equation}
\tau=\frac{(1-\gamma) \varepsilon}{4 \log |\mathcal{A}|}.
\end{equation}
Then, Algorithm \ref{alg:bisection} 
% with the same configuration as in Theorem \ref{thm:bisection convergence app} 
computes a solution $\pi$ for the standard CMDP such that
\begin{subequations}
\begin{align}
\left|V^{\pi^\star}(\rho)-V^{\pi}(\rho)\right| & =\mc{O}{(\varepsilon)},\\
\left[b - U_{g}^{\pi}(\rho) \right]_+ &= \mathcal{O}(\varepsilon),
\end{align}
\end{subequations}
in ${\mathcal{O}}\left(\log^2(1/\varepsilon) \right)$ iterations, where $\pi^\star$ is an optimal policy to the standard CMDP.}
\end{corollary}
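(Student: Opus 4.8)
The plan is to mirror the proof of Corollary~\ref{cor:original mdp solution_app}, replacing the convergence guarantee for Algorithm~\ref{alg:dualpg} with the one for Algorithm~\ref{alg:bisection} from Theorem~\ref{thm:bisection convergence app}. The only conceptual ingredient beyond that theorem is the sandwich bound~\eqref{eq:original and regularized bound}, which I would re-derive exactly as in~\eqref{eq:sandwich_app}: since $\pi_\tau^\star$ maximizes $V_\tau^\pi(\rho)=V^\pi(\rho)+\tau\mathcal{H}(\rho,\pi)$ and $0\le\mathcal{H}(\rho,\pi)\le\log|\mathcal{A}|/(1-\gamma)$, we get $V^{\pi_\tau^\star}(\rho)\le V^{\pi^\star}(\rho)\le V_\tau^\star(\rho)\le V^{\pi_\tau^\star}(\rho)+\tau\log|\mathcal{A}|/(1-\gamma)$.

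Next I would fix $\tau=(1-\gamma)\varepsilon/(4\log|\mathcal{A}|)$ so that $\tau\log|\mathcal{A}|/(1-\gamma)=\varepsilon/4$, and then choose the bisection threshold (the input ``$\varepsilon$'' of Algorithm~\ref{alg:bisection}, which I rename $\varepsilon'$ to avoid clashing with the target accuracy) and the recovery accuracy $\varepsilon_1$ so that the regularized primal gap delivered by Theorem~\ref{thm:bisection convergence app} is $\mathcal{O}(\varepsilon)$. Concretely, \eqref{eq:thm2 results_app_3} gives $|V_\tau^{\pi}(\rho)-V_\tau^\star|\le 2\varepsilon_0+\ell_cC_1C_2\sqrt{\varepsilon_0}+(\ell_cC_2+3\gamma/(2\tau))\varepsilon_1$ with $\varepsilon_0=\tfrac{3C_2}{2}\varepsilon'$; since $C_1=\Theta(\tau^{-1/2})$ and $\varepsilon_0=\Theta(\varepsilon')$, the dominant term $\ell_cC_1C_2\sqrt{\varepsilon_0}$ is $\Theta(\sqrt{\varepsilon'/\tau})=\Theta(\sqrt{\varepsilon'/\varepsilon})$, so taking $\varepsilon'=\Theta(\varepsilon^3)$ makes it $\mathcal{O}(\varepsilon)$; and since $3\gamma/(2\tau)=\Theta(\varepsilon^{-1})$, taking $\varepsilon_1=\Theta(\varepsilon^2)$ makes the last term $\mathcal{O}(\varepsilon)$ as well, while $2\varepsilon_0=\Theta(\varepsilon^3)$ is negligible. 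With these choices, the triangle inequality $|V^{\pi^\star}(\rho)-V^{\pi}(\rho)|\le|V^\star(\rho)-V_\tau^\star|+|V_\tau^\star-V_\tau^{\pi}(\rho)|+|V_\tau^{\pi}(\rho)-V^{\pi}(\rho)|$, with the first and third summands bounded by $\tau\log|\mathcal{A}|/(1-\gamma)=\varepsilon/4$ via the sandwich bound, yields $|V^{\pi^\star}(\rho)-V^{\pi}(\rho)|=\mathcal{O}(\varepsilon)$. The constraint-violation bound then follows directly from~\eqref{eq:thm2 results_app_2}: $\ell_cC_1\sqrt{\varepsilon'}=\Theta(\sqrt{\varepsilon'/\varepsilon})=\mathcal{O}(\varepsilon)$ and $\ell_c\varepsilon_1=\mathcal{O}(\varepsilon^2)$.

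Finally I would count iterations. The bisection runs $\log_2(\ell C_2/\varepsilon')$ outer loops; since $\ell=\Theta(\tau^{-1})=\Theta(\varepsilon^{-1})$ and $\varepsilon'=\Theta(\varepsilon^3)$, this is $\log_2\Theta(\varepsilon^{-4})=\mathcal{O}(\log(1/\varepsilon))$. Each outer loop invokes the NPG subroutine for $N_1=\tfrac{1}{1-\gamma}\log\bigl(4|\mathcal{A}|(1+C_2+\tau\log|\mathcal{A}|)/((1-\gamma)^3\tau\varepsilon')\bigr)$ iterations, whose argument is $\Theta(\varepsilon^{-4})$, so $N_1=\mathcal{O}(\log(1/\varepsilon))$; the recovery step in line~\ref{alg:final recover_bi} costs $N_2=\mathcal{O}(\log(1/\varepsilon_1))=\mathcal{O}(\log(1/\varepsilon))$. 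Multiplying gives a total of $\mathcal{O}(\log(1/\varepsilon))\cdot\mathcal{O}(\log(1/\varepsilon))+\mathcal{O}(\log(1/\varepsilon))=\mathcal{O}(\log^2(1/\varepsilon))$.

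The calculation is essentially bookkeeping, so the only place requiring care — and the \emph{main obstacle} — is checking that the parameter dependencies remain benign: both $\ell$ and $C_1$ blow up like negative powers of $\tau=\Theta(\varepsilon)$, so one must verify that the threshold $\varepsilon'$ and recovery accuracy $\varepsilon_1$ needed to suppress the $\sqrt{\varepsilon_0}$ and $\varepsilon_1/\tau$ terms are only \emph{polynomially} small in $\varepsilon$ (here $\Theta(\varepsilon^3)$ and $\Theta(\varepsilon^2)$), so that they enter the iteration count only through logarithms; otherwise the $\log^2(1/\varepsilon)$ bound would be lost. It is also worth confirming that $C_2=(V_\tau^\star-V_\tau^{\overline{\pi}}(\rho))/\xi$ stays $\Theta(1)$ as $\tau\to 0$, which follows from the uniform bound $V_\tau^\star-V_\tau^{\overline{\pi}}(\rho)\le(2+2\tau\log|\mathcal{A}|)/(1-\gamma)$ noted after Algorithm~\ref{alg:dualpg}.
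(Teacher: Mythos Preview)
Your proposal is correct and follows essentially the same route as the paper: both adopt the proof of Corollary~\ref{cor:original mdp solution_app}, invoking the sandwich bound~\eqref{eq:original and regularized bound} for the unregularized optimality gap and Theorem~\ref{thm:bisection convergence app} for the regularized one, and then count iterations. Your version is in fact more explicit than the paper's one-line proof---you carefully track how $C_1=\Theta(\tau^{-1/2})$, $\ell=\Theta(\tau^{-1})$, and the $3\gamma/(2\tau)$ coefficient scale as $\tau=\Theta(\varepsilon)$, and choose the bisection threshold $\varepsilon'=\Theta(\varepsilon^3)$ and recovery accuracy $\varepsilon_1=\Theta(\varepsilon^2)$ accordingly---whereas the paper simply asserts that the dominant term $\ell_cC_1C_2\sqrt{\varepsilon_0}$ can be made $\varepsilon/2$ in $\mathcal{O}(\log^2(1/\varepsilon))$ total iterations.
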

\begin{proof}
{The proof can be fully adopted from that of Corollary \ref{cor:original mdp solution_app}
The main difference lies in the outer-loop complexity. 
To have $\ell_cC_1C_2\sqrt{\varepsilon_0}  = \varepsilon/2$, compared to the previous $\widetilde{\mathcal{O}}\left(1/\varepsilon^2\right)$ total iterations, it only requires $\mathcal{O}\left(\log^2 (1/\varepsilon)\right)$ total iterations for Algorithm \ref{alg:bisection} when there is a single constraint.
% Since the error bounds (\ref{eq:thm2 results_app_1})-(\ref{eq:thm2 results_app_2}) are dominated by $\sqrt{\varepsilon_0}$, we ignore the effect of $\varepsilon_0$, $\varepsilon_1$ and only focus on $\sqrt{\varepsilon_0}$ terms in (\ref{eq:thm2 results_app_1})-(\ref{eq:thm2 results_app_2}) through the analysis below. 
% \begin{equation}
% \end{equation}
}
\end{proof}

%  \newpage
 \section{Discussion About Assumption \ref{assump:strong duality}}\label{sec:app_discuss}
In the paper, our analysis depends on Assumption \ref{assump:strong duality}, which posits that strong duality is valid for the entropy-regularized constrained Markov decision process (CMDP). It can be readily confirmed that the primal problem \eqref{prob:maxentrl} is tantamount to the maximin problem:

\begin{equation}\label{eq:maximin_prob}
\max_{\pi\in \Pi}\min_{\Lamb\geq 0}V_{\tau}^\pi(\rho) + \Lamb^\top\left(U_\mb{g}^\pi (\rho)-\mb{b}\right).
\end{equation}

Conversely, the dual approach employed in this paper essentially addresses the minimax problem:

\begin{equation}\label{eq:minimax_prob}
\min_{\Lamb\geq 0}\max_{\pi\in \Pi} V_{\tau}^\pi(\rho) + \Lamb^\top\left(U_\mb{g}^\pi (\rho)-\mb{b}\right) = \min_{\Lamb\geq 0} D(\Lamb).
\end{equation}

If strong duality is applicable, a primal-dual pair $\left(\pi_\tau^\star, \Lamb^\star\right)$ exists that simultaneously solves both the maximin problem \eqref{eq:maximin_prob} and the minimax problem \eqref{eq:minimax_prob}. Consequently, we can resolve the primal problem \eqref{prob:maxentrl} by tackling the minimax problem \eqref{eq:minimax_prob}.

In the absence of strong duality, a non-zero duality gap exists between the optimal values of \eqref{eq:maximin_prob} and \eqref{eq:minimax_prob}, i.e.,
\begin{equation}
\min_{\Lamb\geq 0}\max_{\pi\in \Pi} V_{\tau}^\pi(\rho) + \Lamb^\top\left(U_\mb{g}^\pi (\rho)-\mb{b}\right)  > \max_{\pi\in \Pi}\min_{\Lamb\geq 0}V_{\tau}^\pi(\rho) + \Lamb^\top\left(U_\mb{g}^\pi (\rho)-\mb{b}\right).
\end{equation}
It is important to note that under Slater's condition (Assumption \ref{assump:slater}), the optimal dual variable for the minimax problem \eqref{eq:minimax_prob} remains bounded. This observation can be supported by the following lower bound on the dual function:

\begin{equation}
D(\Lamb) = \max_{\pi\in \Pi} V_{\tau}^\pi(\rho) + \Lamb^\top\left(U_\mb{g}^\pi (\rho)-\mb{b}\right)\geq \max_{\pi\in \Pi}\Lamb^\top\left(U_\mb{g}^\pi (\rho)-\mb{b}\right) \geq \Lamb^\top\boldsymbol \xi.
\end{equation}

Consequently, as $\Lamb\rightarrow \infty$, we have $D(\Lamb)\rightarrow \infty$, which implies that the optimal dual variable $\Lamb^\star := \argmin_{\Lamb\geq 0} D(\Lamb)$ must be bounded. Thus, the algorithm developed in this paper can be employed to solve the dual problem \eqref{eq:minimax_prob}.

% By using the strong duality of standard CMDPs, we can bound the duality gap as follows
To upper-bound the duality gap, we can employ the following sandwich inequality:
\begin{equation}
\begin{aligned}
\underbrace{\left[\min_{\Lamb\geq 0}\max_{\pi\in \Pi} V^\pi(\rho) + \Lamb^\top\left(U_\mb{g}^\pi (\rho)-\mb{b}\right)\right]}_{v_1} + \max_{\pi\in \Pi} \tau\cdot \mathcal{H}(\rho, \pi)
&\geq \min_{\Lamb\geq 0}\max_{\pi\in \Pi}V^\pi(\rho) + \Lamb^\top\left(U_\mb{g}^\pi (\rho)-\mb{b}\right)+\tau\cdot \mathcal{H}(\rho, \pi)\\[-5mm]
&=\min_{\Lamb\geq 0}\max_{\pi\in \Pi} V_{\tau}^\pi(\rho) + \Lamb^\top\left(U_\mb{g}^\pi (\rho)-\mb{b}\right)\\
&>\max_{\pi\in \Pi}\min_{\Lamb\geq 0}V_{\tau}^\pi(\rho) + \Lamb^\top\left(U_\mb{g}^\pi (\rho)-\mb{b}\right)\\
&>\underbrace{\max_{\pi\in \Pi}\min_{\Lamb\geq 0}V^\pi(\rho) + \Lamb^\top\left(U_\mb{g}^\pi (\rho)-\mb{b}\right)}_{v_2},
\end{aligned}
\end{equation}
where the last inequality is a result of the strict positiveness of the discounted entropy. Since $V^\pi(\rho) + \Lamb^\top\left(U_\mb{g}^\pi (\rho)-\mb{b}\right)$ can be considered the Lagrangian function of a standard CMDP, strong duality holds \citep{paternain2019safe}, and the terms $v_1$ and $v_2$ are equal. Using a sandwich bound, we deduce that the duality gap does not exceed $\max_{\pi\in \Pi} \tau\cdot \mathcal{H}(\rho, \pi) \leq {{\tau} \log |\mathcal{A}|}/({1-\gamma})$.
Hence, when the regularization weight $\tau$ is small, the duality gap is also small. By selecting $\tau = \mc{O}(\epsilon)$, the optimal solution to the minimax problem, denoted as $(\pi^\star, \Lambda^\star)$, offers an $\mc{O}(\epsilon)$ approximation to the maximin problem. This solution can serve as a warm-up for solving the primal problem.

This work primarily aims to address the entropy-regularized CMDP problem. However, we also show that if the weight $\tau$ is small, the solution to the entropy-regularized CMDP is a good approximation of the standard CMDP problem. 
As an extension, one can consider the discounted entropy as a primal regularizer to be incorporated into the augmented Lagrangian function. This approach allows for the direct analysis of convergence properties for the standard CMDP problem. For instance, \citet{li2021faster} explore smoothing the Lagrangian function using both the discounted entropy and a dual regularizer.

\section{Supporting Lemmas}
The followings are standard results about unregularized and entropy-regularized MDPs. We refer the reader to \citep{agarwal2021theory, mei2020global} for the proofs.
\begin{lemma}[Policy gradient for direct parameterization]\label{lemma:policy gradient_direct}
{Suppose that $V^\pi(\rho)$ is an unregularized value function. For the direct policy parameterization where ${\theta({s, a})=\pi_{\theta}(a \vert s)}$, the gradient is
\begin{equation}
\frac{\partial V^{\pi}(\rho)}{\partial \pi(a \vert s)}=\frac{1}{1-\gamma} d_{\rho}^{\pi}(s) Q^{\pi}(s, a).
\end{equation}}
\end{lemma}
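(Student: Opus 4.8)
The plan is to follow the classical derivation of the policy gradient theorem for the direct (tabular) parameterization, in which the entries $\{\pi(a\mid s)\}_{(s,a)}$ are treated as the free coordinates of a vector in $\mathbb{R}^{|\mathcal{S}||\mathcal{A}|}$. First I would start from the Bellman recursion $V^\pi(s)=\sum_{a}\pi(a\mid s)Q^\pi(s,a)$ together with $Q^\pi(s,a)=r(s,a)+\gamma\sum_{s'}P(s'\mid s,a)V^\pi(s')$, and differentiate $V^\pi(s)$ with respect to a fixed coordinate $\pi(\bar a\mid\bar s)$. Since $r(s,a)$ does not depend on the policy, the product rule gives
\[
\frac{\partial V^\pi(s)}{\partial\pi(\bar a\mid\bar s)}=\mathbf{1}\{s=\bar s\}\,Q^\pi(\bar s,\bar a)+\gamma\sum_{a}\pi(a\mid s)\sum_{s'}P(s'\mid s,a)\,\frac{\partial V^\pi(s')}{\partial\pi(\bar a\mid\bar s)}.
\]

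Next I would recognize this as a linear fixed-point equation $g=c+\gamma P^\pi g$, where $g(s):=\partial V^\pi(s)/\partial\pi(\bar a\mid\bar s)$, $c(s):=\mathbf{1}\{s=\bar s\}\,Q^\pi(\bar s,\bar a)$, and $P^\pi$ denotes the state-to-state transition matrix induced by $\pi$. Since $\gamma\in[0,1)$, the operator $I-\gamma P^\pi$ is invertible with Neumann series $\sum_{t\ge0}\gamma^t(P^\pi)^t$, so $g(s)=Q^\pi(\bar s,\bar a)\sum_{t=0}^\infty\gamma^t\,P(s_t=\bar s\mid\pi,s_0=s)$. Averaging over $s_0\sim\rho$ and invoking the definition (\ref{eq:visitation measure}) of the discounted state visitation distribution, namely $\sum_s\rho(s)\sum_{t\ge0}\gamma^t P(s_t=\bar s\mid\pi,s_0=s)=\tfrac{1}{1-\gamma}d_\rho^\pi(\bar s)$, yields $\frac{\partial V^\pi(\rho)}{\partial\pi(\bar a\mid\bar s)}=\frac{1}{1-\gamma}d_\rho^\pi(\bar s)\,Q^\pi(\bar s,\bar a)$, which is the claimed formula after renaming $(\bar s,\bar a)\to(s,a)$.

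The work here is essentially bookkeeping rather than a genuine obstacle. The one point that deserves a word of care is that under the direct parameterization the coordinates $\pi(a\mid s)$ are constrained to a product of probability simplices; the displayed expression should be read as the gradient in the ambient Euclidean space (the convention used throughout the paper and in \citep{agarwal2021theory,mei2020global}), and any simplex-respecting update would apply the corresponding projection afterwards. One must also justify differentiating term by term the infinite series defining $V^\pi$, which is immediate from the geometric $\gamma^t$ decay together with the boundedness $r(s,a)\in[0,1]$. Since the statement is standard, I would close by noting that the complete argument can be found in \citep{agarwal2021theory,mei2020global}.
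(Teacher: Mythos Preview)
Your derivation is correct and is precisely the standard policy-gradient argument for the direct parameterization. Note that the paper does not actually prove this lemma: it lists it among ``standard results'' and simply refers the reader to \citep{agarwal2021theory,mei2020global}, so your write-up already supplies more detail than the paper itself and matches the argument in those references.
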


\begin{lemma}[Policy gradient for soft-max parameterization]\label{lemma:policy gradient_softmax}
{Suppose that $V^\pi(\rho)$ is an unregularized value function. For the soft-max policy parameterization (cf. (\ref{eq:softmax})), the gradient is
\begin{equation}
\frac{\partial V^{\pi_{\theta}}(\rho)}{\partial \theta{(s, a)}}=\frac{1}{1-\gamma} d_{\rho}^{\pi_{\theta}}(s) \pi_{\theta}(a \vert s) A^{\pi_{\theta}}(s, a).
\end{equation}}
\end{lemma}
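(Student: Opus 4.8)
The plan is to derive the soft-max gradient by composing the direct-parameterization gradient of Lemma \ref{lemma:policy gradient_direct} with the Jacobian of the soft-max map $\theta\mapsto\pi_\theta$. Since $V^{\pi_\theta}(\rho)$ depends on $\theta$ only through the stochastic-policy vector $\pi_\theta\in\Delta(\mathcal{A})^{|\mathcal{S}|}$, and $\theta\mapsto\pi_\theta$ is smooth with image in the interior of the simplex, the chain rule applies and
\begin{equation}
\frac{\partial V^{\pi_\theta}(\rho)}{\partial \theta(s,a)} = \sum_{s'\in\mathcal{S}}\sum_{a'\in\mathcal{A}} \frac{\partial V^{\pi_\theta}(\rho)}{\partial \pi_\theta(a'\vert s')}\cdot\frac{\partial \pi_\theta(a'\vert s')}{\partial \theta(s,a)},
\end{equation}
where Lemma \ref{lemma:policy gradient_direct} already identifies the first factor as $\tfrac{1}{1-\gamma}\,d_\rho^{\pi_\theta}(s')\,Q^{\pi_\theta}(s',a')$.

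The first concrete step is to compute the soft-max Jacobian. Differentiating $\pi_\theta(a'\vert s') = e^{\theta(s',a')}/\sum_{b}e^{\theta(s',b)}$ from the definition (\ref{eq:softmax}) gives
\begin{equation}
\frac{\partial \pi_\theta(a'\vert s')}{\partial \theta(s,a)} = \mathbbm{1}\{s'=s\}\,\pi_\theta(a'\vert s)\bigl(\mathbbm{1}\{a'=a\} - \pi_\theta(a\vert s)\bigr),
\end{equation}
so only the terms with $s'=s$ survive in the double sum. The second step is to substitute this into the chain-rule expression and collapse the remaining sum over $a'$: pulling out $\tfrac{1}{1-\gamma}\,d_\rho^{\pi_\theta}(s)$ and expanding the two indicator terms, the $\mathbbm{1}\{a'=a\}$ piece contributes $\pi_\theta(a\vert s)\,Q^{\pi_\theta}(s,a)$ while the $-\pi_\theta(a\vert s)$ piece contributes $-\pi_\theta(a\vert s)\sum_{a'}\pi_\theta(a'\vert s)\,Q^{\pi_\theta}(s,a') = -\pi_\theta(a\vert s)\,V^{\pi_\theta}(s)$, using the identity $V^{\pi_\theta}(s)=\sum_{a'}\pi_\theta(a'\vert s)Q^{\pi_\theta}(s,a')$. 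Combining the two pieces and recalling the advantage function $A^{\pi_\theta}(s,a):=Q^{\pi_\theta}(s,a)-V^{\pi_\theta}(s)$ yields the claimed formula $\tfrac{1}{1-\gamma}\,d_\rho^{\pi_\theta}(s)\,\pi_\theta(a\vert s)\,A^{\pi_\theta}(s,a)$.

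There is essentially no hard obstacle here — this is a standard identity and the paper cites \citep{agarwal2021theory, mei2020global} for it; the only care required is the bookkeeping with the two indicator terms in the soft-max Jacobian and the legitimacy of the chain rule, which holds because $V^\pi(\rho)$ is continuously differentiable on the interior of the simplex (its gradient is bounded, cf. Lemma \ref{lemma:lipschitz value function}) and the soft-max map is smooth into that interior. An alternative route that bypasses Lemma \ref{lemma:policy gradient_direct} is to invoke the classical policy gradient theorem $\nabla_\theta V^{\pi_\theta}(\rho) = \tfrac{1}{1-\gamma}\,\mathbb{E}_{s\sim d_\rho^{\pi_\theta},\,a\sim\pi_\theta(\cdot\vert s)}\bigl[\nabla_\theta\log\pi_\theta(a\vert s)\,Q^{\pi_\theta}(s,a)\bigr]$ together with the soft-max score $\partial\log\pi_\theta(a'\vert s')/\partial\theta(s,a) = \mathbbm{1}\{s'=s\}\bigl(\mathbbm{1}\{a'=a\}-\pi_\theta(a\vert s)\bigr)$; either way the computation is identical and finishes by the same baseline-subtraction step.
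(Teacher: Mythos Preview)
Your proof is correct and complete. The paper does not actually prove this lemma: it is listed among the ``Supporting Lemmas'' with the note ``We refer the reader to \citep{agarwal2021theory, mei2020global} for the proofs,'' so there is no in-paper argument to compare against. Your chain-rule derivation via Lemma~\ref{lemma:policy gradient_direct} and the soft-max Jacobian is the standard route taken in those references.
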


\begin{lemma}[Performance difference]\label{lemma:performance difference}
{Suppose that $V^\pi(\rho)$ is an unregularized value function. For all policies $\pi$ and $\pi^\prime$, it holds that
\begin{equation}
V^{\pi^{\prime}}(\rho)-V^{\pi}(\rho)=\frac{1}{1-\gamma} \sum_{s\in\mc{S}} d_{\rho}^{\pi}(s) \sum_{a\in\mc{A}}\left(\pi^{\prime}(a \vert s)-\pi(a \vert s)\right) \cdot Q^{\pi^{\prime}}(s, a).
\end{equation}}
\end{lemma}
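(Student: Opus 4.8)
The plan is to establish the identity first for a fixed initial state $s_0$ and then average over $s_0\sim\rho$. Throughout I write $A^{\pi'}(s,a):=Q^{\pi'}(s,a)-V^{\pi'}(s)$ for the advantage of $\pi'$. The engine of the proof is a telescoping ``add-and-subtract'' decomposition along the trajectory generated by $\pi$. Starting from $V^{\pi}(s_0)=\mathbb{E}_{\pi}[\sum_{t\ge 0}\gamma^{t}r(s_t,a_t)\mid s_0]$ and inserting the telescoping term $\sum_{t\ge 0}\gamma^{t}(\gamma V^{\pi'}(s_{t+1})-V^{\pi'}(s_t))$, which collapses to $-V^{\pi'}(s_0)$, I obtain
\[
V^{\pi}(s_0)-V^{\pi'}(s_0)=\mathbb{E}_{\pi}\!\left[\sum_{t=0}^{\infty}\gamma^{t}\bigl(r(s_t,a_t)+\gamma V^{\pi'}(s_{t+1})-V^{\pi'}(s_t)\bigr)\,\middle|\,s_0\right],
\]
where the expectation is over trajectories with $a_t\sim\pi(\cdot\mid s_t)$ and $s_{t+1}\sim P(\cdot\mid s_t,a_t)$.

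Next I would condition on the pair $(s_t,a_t)$ and apply the Bellman consistency $\mathbb{E}_{s'\sim P(\cdot\mid s_t,a_t)}[r(s_t,a_t)+\gamma V^{\pi'}(s')]=Q^{\pi'}(s_t,a_t)$ to replace the bracketed summand by $A^{\pi'}(s_t,a_t)$, yielding $V^{\pi}(s_0)-V^{\pi'}(s_0)=\mathbb{E}_{\pi}[\sum_{t\ge 0}\gamma^{t}A^{\pi'}(s_t,a_t)\mid s_0]$. I then convert this trajectory expectation into a sum over the discounted state visitation distribution: writing the expectation as $\sum_{t\ge 0}\gamma^{t}\sum_{s}P(s_t=s\mid\pi,s_0)\sum_{a}\pi(a\mid s)A^{\pi'}(s,a)$ and invoking the definition of $d_{s_0}^{\pi}$ in (\ref{eq:visitation measure}) gives $\frac{1}{1-\gamma}\sum_{s}d_{s_0}^{\pi}(s)\sum_{a}\pi(a\mid s)A^{\pi'}(s,a)$. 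Finally, since $V^{\pi'}(s)=\sum_{a}\pi'(a\mid s)Q^{\pi'}(s,a)$, the inner sum collapses to $\sum_{a}\pi(a\mid s)A^{\pi'}(s,a)=\sum_{a}(\pi(a\mid s)-\pi'(a\mid s))Q^{\pi'}(s,a)$. Negating and averaging over $s_0\sim\rho$, using $V^{\pi}(\rho)=\mathbb{E}_{s_0\sim\rho}[V^{\pi}(s_0)]$ and $d_{\rho}^{\pi}(s)=\mathbb{E}_{s_0\sim\rho}[d_{s_0}^{\pi}(s)]$, produces exactly the claimed formula.

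The steps are conceptually routine, so the ``hard part'' is only careful bookkeeping. I would justify the infinite telescoping and the interchange of expectation with the infinite sum by noting that $r\in[0,1]$ and $\gamma<1$ force $|V^{\pi'}|\le 1/(1-\gamma)$ and make every series absolutely summable, so dominated convergence applies. The one point that genuinely must be gotten right is the \emph{direction} of the decomposition: I expand the value of $\pi$ (inserting the baseline $V^{\pi'}$), so that the visitation measure $d^{\pi}$ of one policy and the Q-function $Q^{\pi'}$ of the \emph{other} policy appear together --- precisely the asymmetric form in the statement. Expanding $\pi'$ instead, or a single sign error, would instead yield the companion identity featuring $A^{\pi}$ and $d^{\pi'}$, so I would double-check the collapse of the telescope and the final sign before concluding.
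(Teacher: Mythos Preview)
Your proposal is correct and follows the standard telescoping argument for the performance difference lemma (originally due to Kakade and Langford), which is exactly what the cited references \citep{agarwal2021theory,mei2020global} contain; the paper itself does not give an independent proof but defers to these sources, so your argument is essentially the same as the one being invoked.
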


\begin{lemma}[Soft sub-optimality]\label{lemma:soft sub-optimality}
{Suppose that $V_\tau^\pi(\rho)$ is an entropy-regularized value function and $\pi^\star_\tau$ is the optimal policy. For every policy $\pi$, it holds that
\begin{equation}
{V}^{\pi^\star}_{\tau}(\rho)-{V}_\tau^{\pi}(\rho)=\frac{\tau}{1-\gamma} \sum_{s\in\mc{S}}d_{\rho}^{\pi}(s) D_{\mathrm{KL}}\left[\pi(\cdot \vert s) \mid \pi_{\tau}^{\star}(\cdot \vert s)\right],
\end{equation}
where $D_\text{KL} \left[P(\cdot) \mid Q(\cdot) \right] := \sum _{x} P(x)\left(\log P(x) -\log Q(x)\right)$ is the KL divergence between probability distributions $P(\cdot)$ and $Q(\cdot)$.}
\end{lemma}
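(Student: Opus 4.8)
The plan is to derive the identity from two ingredients: a performance-difference identity for the \emph{entropy-regularized} value function, obtained by telescoping along trajectories generated by $\pi$, and the soft Bellman optimality characterization of the optimal regularized policy $\pi_\tau^\star$, which causes the per-state terms to collapse into a KL divergence. Throughout I would use the soft $Q$-function $Q_\tau^\pi$ from \eqref{def:soft Q function} together with the one-step identity $V_\tau^\pi(s)=\sum_{a}\pi(a\vert s)\big(Q_\tau^\pi(s,a)-\tau\log\pi(a\vert s)\big)$, which follows by expanding one step of the (soft) Bellman recursion for $V_\tau^\pi$.

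First I would establish a regularized performance-difference lemma. Starting from $V_\tau^{\pi}(\rho)=\mathbb{E}\big[\sum_{t\ge 0}\gamma^{t}(r(s_t,a_t)-\tau\log\pi(a_t\vert s_t))\,\big|\,\pi,s_0\sim\rho\big]$, I insert the telescoping sum $\sum_{t\ge 0}\gamma^{t}\big(\gamma V_\tau^{\pi_\tau^\star}(s_{t+1})-V_\tau^{\pi_\tau^\star}(s_t)\big)$, whose expectation along a $\pi$-trajectory equals $-V_\tau^{\pi_\tau^\star}(\rho)$ since $V_\tau^{\pi_\tau^\star}$ is bounded and $\gamma<1$. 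Using $\mathbb{E}[r(s_t,a_t)+\gamma V_\tau^{\pi_\tau^\star}(s_{t+1})\mid s_t,a_t]=Q_\tau^{\pi_\tau^\star}(s_t,a_t)$ and the definition of $d_\rho^\pi$ in \eqref{eq:visitation measure}, this yields
\begin{equation*}
V_\tau^{\pi}(\rho)-V_\tau^{\pi_\tau^\star}(\rho)=\frac{1}{1-\gamma}\sum_{s\in\mathcal{S}}d_\rho^\pi(s)\sum_{a\in\mathcal{A}}\pi(a\vert s)\Big(Q_\tau^{\pi_\tau^\star}(s,a)-\tau\log\pi(a\vert s)-V_\tau^{\pi_\tau^\star}(s)\Big).
\end{equation*}
Interchanging the infinite sum with the expectation is legitimate because $r\in[0,1]$, $\gamma<1$, and the per-step conditional expectation of $-\tau\log\pi(a\vert s)$ is the (bounded) Shannon entropy of $\pi(\cdot\vert s)$, so all series converge absolutely.

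Next I would invoke the soft Bellman optimality equations for the unconstrained entropy-regularized MDP: $\pi_\tau^\star$ is the softmax of its own soft $Q$-function, equivalently $Q_\tau^{\pi_\tau^\star}(s,a)=V_\tau^{\pi_\tau^\star}(s)+\tau\log\pi_\tau^\star(a\vert s)$ for every $(s,a)$. This is the $r_\Lamb=r$ specialization of the characterization already used in the proof of Proposition~\ref{prop:dual smoothness}, taken from \citep{nachum2017bridging}. Substituting it into the parenthetical term above gives $Q_\tau^{\pi_\tau^\star}(s,a)-\tau\log\pi(a\vert s)-V_\tau^{\pi_\tau^\star}(s)=\tau\big(\log\pi_\tau^\star(a\vert s)-\log\pi(a\vert s)\big)$, and since $\sum_a\pi(a\vert s)=1$ the inner sum equals $-\tau\sum_a\pi(a\vert s)\big(\log\pi(a\vert s)-\log\pi_\tau^\star(a\vert s)\big)=-\tau\,D_{\mathrm{KL}}\big[\pi(\cdot\vert s)\mid\pi_\tau^\star(\cdot\vert s)\big]$. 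Multiplying both sides of the displayed identity by $-1$ gives exactly the claimed formula.

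The main obstacle I expect is making the softmax characterization fully rigorous rather than merely quoting it: one should show that, state by state, the map $p\mapsto\sum_a p(a)\big(Q_\tau^{\pi_\tau^\star}(s,a)-\tau\log p(a)\big)$ on $\Delta(\mathcal{A})$ is strictly concave with the softmax of $Q_\tau^{\pi_\tau^\star}(s,\cdot)/\tau$ as its unique maximizer, and that this fixed-point relation indeed holds at the global maximizer $\pi_\tau^\star$ (e.g.\ via the $\gamma$-contraction of the soft Bellman optimality operator and uniqueness of its fixed point). The telescoping in the first step is routine, but I would be careful with the bookkeeping of the soft advantage $Q_\tau^{\pi_\tau^\star}(s,a)-\tau\log\pi(a\vert s)-V_\tau^{\pi_\tau^\star}(s)$ — in particular that it is the \emph{evaluated} policy $\pi$ whose log-probability appears and whose visitation measure $d_\rho^\pi$ weights the states, whereas the $Q$- and $V$-functions are those of the \emph{reference} policy $\pi_\tau^\star$.
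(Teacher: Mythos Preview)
Your argument is correct: the regularized performance-difference identity via telescoping, followed by the soft Bellman optimality relation $Q_\tau^{\pi_\tau^\star}(s,a)=V_\tau^{\pi_\tau^\star}(s)+\tau\log\pi_\tau^\star(a\vert s)$, cleanly collapses each per-state term to $-\tau\,D_{\mathrm{KL}}[\pi(\cdot\vert s)\mid\pi_\tau^\star(\cdot\vert s)]$, and the bookkeeping of which policy's visitation measure and log-probability appear is right.

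As for comparison with the paper: the paper does \emph{not} supply its own proof of this lemma. It is listed among the ``Supporting Lemmas'' with the remark that these are standard results and the reader is referred to \citep{agarwal2021theory,mei2020global} for proofs. Your derivation is essentially the standard one found in those references (see, e.g., the soft sub-optimality lemma in \citet{mei2020global}), so there is nothing to contrast.
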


\end{document}